\documentclass[11pt]{article}
\usepackage{natbib,amsmath,amssymb,amsfonts,amsthm,dsfont,enumitem,float,bbm}
\RequirePackage[colorlinks,citecolor=blue,urlcolor=blue,linkcolor=blue, breaklinks]{hyperref}
\usepackage[normalem]{ulem}
\usepackage{epsfig,longtable,xcolor,algpseudocode}
\usepackage[linesnumbered, ruled, noline]{algorithm2e}
\usepackage{wrapfig}
\usepackage{multirow}
\usepackage{caption}
\usepackage{subcaption}
\usepackage[export]{adjustbox}
\usepackage{booktabs}
\usepackage{array}
\usepackage{listings}
\usepackage{xcolor}
\usepackage{graphicx}
\usepackage[export]{adjustbox}
\usepackage{subcaption}
\usepackage{arydshln}
\usepackage{setspace}
\newcommand{\bfm}[1]{\ensuremath{\mathbf{#1}}}
               
\def\bb{\bfm b}               
     \def\bC{\bfm C}          
          \def\cD{{\cal  D}}

\def\bg{\bfm g}               
\def\bh{\bfm h}

\def\bp{\bfm p}

          \def\cS{{\cal  S}}     
     \def\bT{\bfm T}          
               
\def\bv{\bfm v}               
               
\def\bx{\bfm x}

\DeclareMathOperator*{\argmin}{argmin}
\DeclareMathOperator*{\argmax}{argmax}

\DeclareMathOperator{\softmax}{softmax}
\DeclareMathOperator{\ReLU}{ReLU}
\DeclareMathOperator{\conv}{conv}

\newcommand{\beq}  {\begin{equation}}
\newcommand{\eeq}  {\end{equation}}
\newcommand{\beqn} {\begin{eqnarray}}
\newcommand{\eeqn} {\end{eqnarray}}
\newcommand{\beqnn}{\begin{eqnarray*}}
\newcommand{\eeqnn}{\end{eqnarray*}}

\theoremstyle{definition}
\newtheorem{dfn}{Definition}

\theoremstyle{plain}
\newtheorem{lem}{Lemma}

\newtheorem{con}{Condition}

\newtheorem{thm}{Theorem}

\providecommand{\keywords}[1]
{
  \small	
  \textbf{Keywords:} #1
}

\title{\textbf{Wasserstein Filtering: A Sample Selection Method for Robust Distribution Learning}}
\author{Yikai Xu\thanks{Department of Applied Mathematics, The Hong Kong Polytechnic University, Hong Kong, China. Email: yikai.xu@polyu.edu.hk},  Zhao Chen\thanks{School of Data Science, Fudan University, Shanghai, China. Email: zchen\_fdu@fudan.edu.cn}, and
Jian Huang\thanks{Department of Data Science and Artificial Intelligence, and Department of Applied Mathematics, The Hong Kong Polytechnic University, Hong Kong, China. Email: j.huang@polyu.edu.hk}}
\begin{document}

\maketitle

\begin{abstract}
Given a dataset where a portion of the samples are contaminated, our goal is to recover the underlying clean population distribution. To this end, we propose \emph{Wasserstein Filtering} (WF), a novel sample selection framework that discards a fraction of suspicious samples and estimates the target distribution using the empirical measure of the remaining data. The core insight is to select a subset of samples whose empirical distribution maximizes its Wasserstein distance to the fully contaminated empirical distribution, thereby preferentially isolating and removing geometrically influential outliers.
To render this optimization computationally tractable, we introduce three algorithms: a marginal screening scheme, \texttt{SinkMarg}, and two joint optimization algorithms, \texttt{SinkWF} and \texttt{SlicedWF}, leveraging entropic optimal transport and sliced Wasserstein approximations, respectively. On the theoretical front, we introduce the \emph{Far Exclusion and Local Projection} (FELP) contamination model, which characterizes corruptions consisting of well-separated outliers and locally indistinguishable perturbations. Under this model, we prove that the WF estimator achieves minimax optimality over distribution families with bounded covariance.
Extensive numerical experiments on synthetic datasets, benchmark anomaly detection suites, and robust generative learning with diffusion models demonstrate that WF serves as a highly practical, model-agnostic preprocessing tool. It delivers competitive outlier detection performance and provides substantial downstream benefits for generative modeling under heavy contamination.

\end{abstract}

\keywords{robust distribution estimation; Wasserstein distance; optimal transport; outlier detection}

% AMS 62G05, 62G35

%%%%%%%%%%%%%%%%%%%%%%%%%%%%%%%%%%%%%%%%%%%%%%%%%%%%%%%%%%%%%%%%%%%%%%%%%%%%%%

\section{Introduction}

Let $\mathcal{D} = \{\mathbf{x}_1, \dots, \mathbf{x}_n\}$ be a set of i.i.d. samples drawn from a population distribution $P^*$. We consider an adversarial corruption model where an adversary can inspect the dataset and modify up to an $\alpha$-fraction of the samples, yielding a contaminated dataset $\widetilde{\mathcal{D}} = \{\widetilde{\mathbf{x}}_1, \dots, \widetilde{\mathbf{x}}_n\}$. The contamination budget $\alpha \in [0, 1)$ constrains the adversary such that:
\begin{align*}
\sum_{i=1}^n \mathbb{I}_{\{\widetilde{\mathbf{x}}_i \neq \mathbf{x}_i\}} \leq \alpha n.
\end{align*}
This formulation is commonly referred to as adversarial or Total Variation (TV) contamination, reflecting the fact that the TV distance between the clean and contaminated empirical measures satisfies $\|\widetilde{P}_n - P_n\|_{\mathrm{TV}} \leq \alpha$ \citep{diakonikolas2019robust, nietert2023robust}. Here, $P_n = \frac{1}{n}\sum_{i=1}^n \delta_{\mathbf{x}_i}$ is the clean empirical measure, $\widetilde{P}_n = \frac{1}{n}\sum_{i=1}^n \delta_{\widetilde{\mathbf{x}}_i}$ is the contaminated empirical measure, and $\delta_{\mathbf{x}}$ represents the Dirac delta measure centered at $\mathbf{x}$.

In this work, we address the following fundamental question in robust statistics:
\begin{quote}
\itshape
Can we design an accurate estimator and a computationally tractable algorithm to recover the true distribution $P^*$ under the Wasserstein-1 metric, given samples subject to Total Variation (TV) contamination?
\end{quote}

%In this work, we address the following basic question in robust statistics:
%\begin{center}
%\fbox{\parbox{0.85\linewidth}{
%\centering\itshape
%Can we design an accurate estimator and a computationally tractable algorithm to recover the true %distribution $P^*$ under the Wasserstein-1 metric, given samples subject to TV contamination?
%}}
%\end{center}

We adopt the Wasserstein-1 distance as our metric of distributional discrepancy because it metrizes weak convergence, naturally tolerates support mismatch, and is highly sensitive to the geometry of outliers \citep{nietert2023robust}. Furthermore, by considering the TV contamination model, we impose minimal assumptions on the noise structure and allow for worst-case, adaptive, and malicious corruptions of the data.

While existing methods, such as iterative filtering \citep{nietert2024robust}, achieve strong theoretical convergence guarantees under TV contamination, they suffer from practical limitations. Specifically, their algorithms rely heavily on the unrealistic assumption that $P^*$ is isotropic, and they require prior knowledge of unknown spectral parameters associated with the covariance of $P^*$.

To address these limitations, we propose an algorithm that dispenses with the isotropic assumption and operates entirely independently of the parameters of $P^*$. The trade-off for this flexibility is that we establish the theoretical optimality of our estimator under TV contamination subject to mild, physically meaningful structural constraints that make the optimization landscape amenable to analysis.

\subsection{Key Contributions}

In this work, given a contamination proportion $\alpha$, we formulate robust distribution learning as a constrained \emph{sample selection} problem. Specifically, we identify and filter out suspicious outliers, using the empirical measure of the remaining samples as a robust representative of the clean population distribution $P^*$.

From this perspective, we introduce the {Wasserstein Filtering (WF)} estimator. Conceptually, the WF estimator searches over all possible subsets of the contaminated data of a size determined by the contamination rate $\alpha$. It then selects the subset whose empirical distribution maximizes the Wasserstein-1 distance to the overall contaminated empirical distribution. By framing outlier detection as this targeted optimization problem, the estimator systematically isolates and discards the samples that exert the most severe geometric distortion on the empirical measure.

Our main contributions are both methodological and theoretical, summarized as follows:
\begin{enumerate}
    \item {Novel Algorithmic Frameworks:} We propose three distinct algorithms to compute the WF estimator: one marginal scheme (\texttt{SinkMarg}) and two joint optimization schemes (\texttt{SinkWF} and \texttt{SlicedWF}). These algorithms scale flexibly across different sample sizes and dimensionalities.
    \item {Rigorous Theoretical Guarantees:} Beyond showing strong empirical performance on par with state-of-the-art heuristic outlier detection methods, we establish the \emph{minimax optimality} of the WF estimator under robust distribution estimation. Furthermore, we provide elementary optimization convergence rates for our joint algorithms.
    \item {Task- and Model-Agnostic Utility:} A unique benefit of our sample selection formulation is its task-independence. It serves as a modular preprocessing filter that cleans the dataset prior to downstream modeling. In Section 5, we demonstrate this versatility by pairing our filter with modern deep generative models (\texttt{DDPM}). Unlike existing approaches that modify the loss function in a model-dependent manner \citep{balaji2020robust, nietert2023robust}, our approach is entirely model-agnostic and can safeguard any downstream generative framework.
\end{enumerate}

The rest of this paper is organized as follows. In Section 2, we introduce our Wasserstein Filtering framework, detailing both the mathematical estimator and its three algorithmic realizations. Rigorous theoretical properties, including minimax optimality and optimization convergence, are established in Section 3. In Section 4, we present systematic numerical experiments on synthetic and benchmark datasets to validate and compare the empirical performance of our methods. Section 5 demonstrates the practical utility of our framework as a task-agnostic data-filtering step for modern generative models. Finally, Section 6 concludes the paper.

\subsection{Related Work}

Robust statistics has accumulated a rich body of seminal research, with the fundamental concepts dating back to the pioneering work of \citet{huber1964robust}. Over the decades, substantial efforts have been devoted to designing robust estimation and inference procedures capable of handling corrupted data; for a comprehensive modern review, see \citet{loh2025theoretical}.

In this context, we highlight the literature most relevant to robust distribution learning. The seminal work of \citet{donoho1988automatic} first formulated the robust distribution estimation problem under the population TV-contamination model and introduced Minimum Distance Estimation (\texttt{MDE}) at the population level. More recently, \citet{zhu2022generalized} systematically generalized \texttt{MDE}, utilizing it to design mean and covariance estimation procedures that achieve population minimax optimality alongside strong empirical guarantees.

For nonparametric density estimation, \citet{liu2019density} established the minimax optimal rate of pointwise density estimation under Huber's contamination model, demonstrating that kernel density estimators (\texttt{KDE}) can achieve this rate when equipped with an optimal bandwidth. However, this optimal bandwidth relies heavily on the smoothness of the underlying clean density and the contamination proportion, both of which are typically unknown in practice. \citet{chao2023statistical} derived analogous minimax rates under Wasserstein contamination. On the methodological front, several algorithmic variants have been proposed to robustify density estimation, including the integration of Huber-type loss functions with \texttt{KDE} \citep{kim2012robust} and the application of the median-of-means principle to kernel frameworks \citep{humbert2022robust}.

Partial Optimal Transport (OT), also referred to as unbalanced OT or robust OT, is closely related to our framework. Originally developed within the mathematical analysis community \citep{caffarelli2010free, figalli2010optimal, liero2018optimal, chizat2018unbalanced}, partial OT was subsequently introduced to the statistics and machine learning communities to relax the strict mass-preserving constraint of standard OT.

Recent methodological developments have leveraged this relaxation to handle noisy data. For instance, \citet{balaji2020robust} defined a robust Wasserstein distance by incorporating a $\chi^2$-divergence penalty on the marginal distributions. By replacing the standard Wasserstein distance in Wasserstein GANs (\texttt{WGAN}) \citep{arjovsky2017wasserstein} with this robust formulation, they trained generative models that are structurally insensitive to outliers. \citet{mukherjee2021outlier} substituted the $\chi^2$-divergence with a Total Variation (TV) penalty, deriving an equivalent formulation that thresholds the cost matrix, which is significantly more computationally convenient.

Subsequently, the theoretical foundations of this TV-penalized robust Wasserstein distance have been rigorously established, spanning measure-theoretic dualities, optimization landscapes, and statistical convergence properties, particularly its convergence rates in robust distribution estimation when plugged into the \texttt{MDE} framework \citep{ma2025inference, nietert2023robust}. Parallel to these theoretical advances, the computational aspects of unbalanced OT \citep{pham2020unbalanced, nguyen2023unbalanced} and its associated \texttt{MDE} optimization schemes \citep{nietert2024robust} have also been extensively investigated.

Another closely related line of research is unsupervised outlier detection. Unlike robust statistics, this field is often heavily empirically driven, focusing on practical heuristic design and computational scalability rather than formal problem abstractions or minimax theoretical guarantees. Classical outlier detection methods typically assign an anomaly score to each sample based on criteria such as local density, distance, recursive isolation, reconstruction error, or marginal extremeness.

For instance, \texttt{COPOD} \citep{li2020copod} leverages empirical copulas and tail probabilities to construct a computationally efficient and highly interpretable outlier score. More recently, Deep Isolation Forest (\texttt{DIF}) \citep{xu2023deep} enhances traditional isolation-based detection by applying random neural network representations prior to isolation-tree partitioning. Large-scale benchmarks, such as \texttt{ADBench} \citep{han2022adbench}, have systematically evaluated these diverse anomaly detection algorithms across tabular and graph data.
 %, and time-series
%modalities, highlighting their practical utility in applications.

\section{Methodology}
\subsection{Estimator}
To define the proposed estimator, we first recall the definition of the Wasserstein distance \citep{villani2009optimal}. Let $(\mathcal{X}, d)$ be a Polish metric space, and let $\mathcal{Q}(\mathcal{X})$ denote the set of all Borel probability measures on $\mathcal{X}$. For $p \geq 1$, we define the space of probability measures with finite $p$-th moment as
\begin{align}
\label{PpDef}
\mathcal{Q}_p(\mathcal{X}) := \left\{ \mu \in \mathcal{Q}(\mathcal{X}) : \int_{\mathcal{X}} d(\mathbf{x}, \mathbf{x}_0)^p \, \mathrm{d}\mu(\mathbf{x}) < \infty \text{ for some } \mathbf{x}_0 \in \mathcal{X} \right\}.
\end{align}
For any $\mu, \nu \in \mathcal{Q}_p(\mathcal{X})$, the Wasserstein-$p$ distance between them is defined as:
\begin{align*}
W_p(\mu,\nu) := \left( \inf_{\pi\in\Pi(\mu,\nu)} \int_{\mathcal{X}\times\mathcal{X}} d(\mathbf{x},\mathbf{y})^p \, \mathrm{d}\pi(\mathbf{x},\mathbf{y}) \right)^{\frac{1}{p}},
\end{align*}
where $\Pi(\mu,\nu)$ denotes the set of all Borel probability measures (or couplings) on $\mathcal{X}\times\mathcal{X}$ having marginal distributions $\mu$ and $\nu$, respectively. Throughout this paper, unless specified otherwise, we assume $\mathcal{X} \subseteq \mathbb{R}^d.$ Let $d$ be the standard Euclidean metric (i.e., $d(\mathbf{x},\mathbf{y}) = \|\mathbf{x} - \mathbf{y}\|_2$), and set $p=1$. That is, we restrict our focus to the Wasserstein-1 distance on $\mathbb{R}^d$, denoted simply as $W_1.$

Let $\widetilde{\mathcal{D}} = \{\widetilde{\mathbf{x}}_1, \dots, \widetilde{\mathbf{x}}_n\} \subset \mathcal{X} \subseteq \mathbb{R}^d$ be a set of possibly contaminated data points, and let $\widetilde{P}_n = \frac{1}{n} \sum_{i=1}^n \delta_{\widetilde{\mathbf{x}}_i}$ denote the corresponding contaminated empirical measure. For a given contamination fraction $\alpha \in [0, 1)$, we seek to identify a target distribution within the family of sub-empirical measures obtained by removing $\alpha n$ outliers from $\widetilde{\mathcal{D}}$. Specifically, we look for the sub-measure that maximizes the Wasserstein distance to the contaminated empirical measure $\widetilde{P}_n$:
\begin{align}
\label{eq1}
\widehat{P}_{\operatorname{WF}} = \mathop{\operatorname{argmax}}_{P \in \mathcal{P}_\alpha} W_1(\widetilde{P}_n, P),
\end{align}
where the candidate family $\mathcal{P}_\alpha$ of trimmed empirical measures is defined as:
\begin{align*}
\mathcal{P}_\alpha = \left\{ \mu \in \mathcal{Q}(\mathcal{X}) : \mu = \frac{1}{(1-\alpha)n} \sum_{\widetilde{\mathbf{x}}_i \in \widetilde{\mathcal{D}}_\alpha} \delta_{\widetilde{\mathbf{x}}_i}, \ \widetilde{\mathcal{D}}_{\alpha} \subset \widetilde{\mathcal{D}}, \ |\widetilde{\mathcal{D}}_{\alpha}| = (1-\alpha)n \right\}.
\end{align*}
By reformulating \eqref{eq1} in vector form, our goal is to find an optimal probability mass vector $\widehat{\mathbf{p}} \in \mathbb{R}^n$ such that:
\begin{align}\label{problem:discrete}
\widehat{\mathbf{p}} = \mathop{\operatorname{argmax}}_{\mathbf{p} \in \Delta_\alpha} W_1(\widetilde{P}_n, P_{\mathbf{p}}),
\end{align}
where $P_{\mathbf{p}} = \sum_{i=1}^n p_i \delta_{\widetilde{\mathbf{x}}_i}$ is the discrete measure induced by the weights $\mathbf{p} = (p_1, \dots, p_n)^\top$. Here, $\Delta_\alpha$ represents the scaled, discretized simplex:
\begin{align}
\Delta_\alpha = \left\{ \mathbf{v} \in \mathbb{R}^n : \mathbf{v}^\top \mathbf{1} = 1, \ v_i \in \left\{0, \frac{1}{(1-\alpha)n}\right\} \text{ for all } i \in [n] \right\}.
\end{align}

As the discrete optimization problem \eqref{problem:discrete} is NP-hard, we relax it to its convex hull:
\begin{align}\label{problem:convex_hull_relaxation}
\widehat{\mathbf{p}}_{\rm WF} = \argmax_{\mathbf{p} \in \operatorname{conv}(\Delta_\alpha)} W_1(\widetilde{P}_n, P_{\mathbf{p}}),
\end{align}
where the convex hull of $\Delta_\alpha$ is given by
\begin{align}\label{constriant:cap_simplex}
\operatorname{conv}(\Delta_\alpha) = \left\{ \mathbf{v} \in \mathbb{R}^n : \mathbf{v}^\top \mathbf{1} = 1, \, 0 \leq v_i \leq \frac{1}{(1-\alpha)n}, \, \forall i \in [n] \right\}.
\end{align}
The optimal solution to \eqref{problem:convex_hull_relaxation}, denoted by $\widehat{\mathbf{p}}_{\rm WF}$, serves as our estimated mass vector. Once $\widehat{\mathbf{p}}_{\rm WF}$ is obtained, we filter out contaminated samples based on its entries. Specifically, we estimate the clean distribution $P^*$ using the empirical distribution of the retained samples:
\begin{align*}
\widehat{P}^* = \frac{1}{(1-\alpha)n} \sum_{i: \, \widehat{p}_{{\rm WF}, i} > \tau_\alpha} \delta_{\widetilde{\mathbf{x}}_i},
\end{align*}
where $\tau_\alpha$ is the $\alpha$-quantile of the entries of $\widehat{\mathbf{p}}_{\rm WF} = (\widehat{p}_{{\rm WF}, 1}, \dots, \widehat{p}_{{\rm WF}, n})^\top$. Since $\widehat{P}^* \in \mathcal{P}_\alpha$ by construction, it serves as a natural discrete approximation of $\widehat{P}_{\rm WF}$. In the following, we present one algorithm to solve the discrete formulation \eqref{problem:discrete} and two algorithms to solve the relaxed formulation \eqref{problem:convex_hull_relaxation}.

\subsection{Algorithms}
\subsubsection{SinkMarg}
By expressing the Wasserstein-1 distance in its primal form, we can reformulate the discrete optimization problem \eqref{problem:discrete} as a discrete optimal transport problem:
\begin{align}
\widehat{\mathbf{p}} = \argmax_{\mathbf{p} \in \Delta_\alpha} \min_{\mathbf{T} \in \Pi\left(\frac{1}{n}\mathbf{1}, \mathbf{p}\right)} \langle \mathbf{T}, \mathbf{C} \rangle,
\end{align}
where $\mathbf{C} \in \mathbb{R}_+^{n \times n}$ is the cost matrix with entries $C_{ij} = \|\widetilde{\mathbf{x}}_i - \widetilde{\mathbf{x}}_j\|_2$ representing the Euclidean distance between the $i$-th and $j$-th samples, and $\Pi(\mathbf{u}, \mathbf{v}) = \{ \mathbf{T} \in \mathbb{R}_+^{n \times n} : \mathbf{T}\mathbf{1} = \mathbf{u}, \, \mathbf{T}^\top\mathbf{1} = \mathbf{v} \}$ denotes the set of feasible transport plans.

Since computing the exact Wasserstein distance via linear programming is computationally expensive, we instead employ its entropy-regularized approximation \citep{cuturi2013sinkhorn}:
\begin{align*}
W_{1,\lambda}(\widetilde{P}_n, P_{\mathbf{p}}) = \min_{\mathbf{T} \in \Pi\left(\frac{1}{n}\mathbf{1}, \mathbf{p}\right)} \langle \mathbf{T}, \mathbf{C} \rangle + \lambda \sum_{i,j \in [n]} T_{ij} \log(T_{ij}),
\end{align*}
where $\lambda > 0$ is the regularization parameter, and $T_{ij}$ is the $(i,j)$-th entry of the transport plan $\mathbf{T}$.

Let $\mathbf{p}_{-i} \in \mathbb{R}^n$ be a probability vector whose $i$-th entry is $0$ and all other entries are equal to $\frac{1}{n-1}$. A natural heuristic for estimating $\widehat{\mathbf{p}}$ is to assign zero mass to the $\alpha n$ points that yield the largest leave-one-out (LOO) scores: $W_{1,\lambda}(\widetilde{P}_n, P_{\mathbf{p}_{-i}})$ for $i \in [n]$. The intuition behind this approach is straightforward: if removing a single sample causes a significant shift in the distribution, that sample is highly likely to be a contaminated outlier.

Based on this intuition, we introduce a marginal screening algorithm to solve \eqref{problem:discrete}, as outlined in Algorithm 1. As demonstrated in our numerical experiments, this algorithm performs well when the outliers are isolated. However, due to its purely marginal nature, it fails to capture the group structure of clustered outliers and incurs a heavy computational burden when $n$ is large.

\IncMargin{1em}
\begin{algorithm}[!htp]
    \SetAlgoLined
    \SetAlgoNlRelativeSize{-1}
    \SetNlSty{textbf}{}{:}
    \caption{Sinkhorn Marginal Screening Algorithm}

    \Indentp{-1.35em}
    \KwIn{$\widetilde{\cD}$, $\alpha\in(0,1)$, $\lambda\in\mathbb{R}_+$}
    \Indentp{1.35em}
    Normalize $\widetilde{\cD}$ \\
    \For{$i = 1, \ldots, n$}{
    Compute $s_i=W_{1,\lambda}(\widetilde{P}_n,P_{\bp_{-i}})$
    }
    Thresholding: filter $\widetilde{\cD}$ to obtain $\cS=\{\widetilde{\bx}_i | s_i<\tau_{1-\alpha} ,i\in[n]\}$ where $\tau_{1-\alpha}$ is the $1-\alpha$ quantile of leave-one-out scores $\{s_i,i\in[n]\}$ \\
    \Indentp{-1.35em}
    \KwOut{$\cS$}
    \Indentp{1.35em}
\end{algorithm}
\DecMargin{1em}

\subsubsection{SinkWF}

To solve the relaxed formulation \eqref{problem:convex_hull_relaxation}, we consider its entropy-regularized version:
\begin{align}\label{problem:sinkhorn}
\widehat{\mathbf{p}}_{\mathrm{SinkWF}} = \operatorname*{argmax}_{\mathbf{p} \in \conv(\Delta_\alpha)}W_{1,\lambda}(\widetilde{P}_n, P_{\mathbf{p}}),
% \underbrace{\min_{\mathbf{T} \in \Pi\left(\frac{1}{n}\mathbf{1}, \mathbf{p}\right)} \langle %\mathbf{T}, \mathbf{C} \rangle + \lambda \sum_{i,j \in [n]} T_{ij} %\log(T_{ij})}_{W_{1,\lambda}(\widetilde{P}_n, P_{\mathbf{p}})},
\end{align}
where
\[
W_{1,\lambda}(\widetilde{P}_n, P_{\mathbf{p}})=\min_{\mathbf{T} \in \Pi\left(\frac{1}{n}\mathbf{1}, \mathbf{p}\right)} \langle \mathbf{T}, \mathbf{C} \rangle + \lambda \sum_{i,j \in [n]} T_{ij} \log(T_{ij}).
\]

Our proposed optimization scheme is detailed in Algorithm 2. To leverage gradient-based optimization, we transform the constrained problem \eqref{problem:sinkhorn} into an unconstrained, smooth formulation.

First, we reparameterize the probability vector $\mathbf{p}$ using the softmax function to ensure it lies on the simplex: $\mathbf{p}(\mathbf{b}) = \softmax(\mathbf{b})$ for $\mathbf{b} \in \mathbb{R}^n$. Second, we incorporate the upper-bound constraint of the capped simplex $\conv(\Delta_\alpha)$ as a penalty term added to the objective:
\begin{align*}
\mathcal{L}(\mathbf{b}) = - W_{1,\lambda}(\widetilde{P}_n, P_{\softmax(\mathbf{b})}) + \beta \mathbf{1}^\top \ReLU\left( \softmax(\mathbf{b}) - \frac{1}{(1-\alpha)n}\mathbf{1} \right),
\end{align*}
where $\ReLU(\mathbf{x}) = \max(\mathbf{x}, \mathbf{0})$ is applied element-wise, and $\beta > 0$ is a penalty parameter.

To update $\mathbf{b}$ via gradient descent, we need to compute the gradient $\mathbf{g}_r = \nabla_{\mathbf{b}} \mathcal{L}(\mathbf{b}_r)$ at the $r$-th iteration. Rather than backpropagating directly through the Sinkhorn iterations using automatic differentiation (e.g., PyTorch's \texttt{AutoGrad}), which is highly memory-intensive, we exploit the duality theory of optimal transport. Specifically, the gradient of the entropic Wasserstein loss with respect to the marginal distribution is given by:
\begin{align*}
\frac{\partial W_{1,\lambda}(\widetilde{P}_n, P_{\mathbf{p}})}{\partial \mathbf{p}} = \mathbf{v}^*,
\end{align*}
where $\mathbf{v}^* \in \mathbb{R}^n$ is the optimal dual potential associated with $\mathbf{p}$, which is a direct byproduct of the Sinkhorn algorithm \citep{sinkhorn1964relationship, cuturi2013sinkhorn}. Using this dual formulation to compute the gradient $\mathbf{g}_r$ bypasses the need to store the computational graph of the Sinkhorn iterations, significantly reducing both time and memory complexity.

%\paragraph{Remark on Gradient Approximations.}

After solving the entropic optimal transport problem to obtain the optimal plan $\mathbf{T}_r$, one might consider using the unregularized cost $\langle \mathbf{T}_r, \mathbf{C} \rangle$ as a more accurate approximation of the true Wasserstein distance, as suggested by \cite{luise2018differential}. However, computing the gradient $\partial \langle \mathbf{T}_r, \mathbf{C} \rangle / \partial \mathbf{p}$ requires either backpropagating through the Sinkhorn steps or solving a system of linear equations with $\mathcal{O}(n^3)$ complexity, making it computationally prohibitive.

Using the obtained gradient $\mathbf{g}_r$, we employ the Adam optimizer \citep{kingma2014adam} to update $\mathbf{b}_r$ for $R$ iterations. To enhance the stability of our estimator, we repeat this optimization process $T$ times with different initializations. We then take the average of the $T$ resulting probability vectors, $T^{-1} \sum_{t=1}^T \softmax(\mathbf{b}_R^{(t)})$, as the final screening score, filtering out the $\alpha n$ samples with the smallest scores as suspected outliers.
The complete procedure for \texttt{SinkWF} is detailed in Algorithm 2.

In addition to this penalty-based approach for handling the capped simplex constraint, we also provide a projection-based optimization variant, alongside an automatic selection strategy for the hyperparameter $\alpha$. These complementary details are discussed in Appendix A.1-A.2.

%\IncMargin{1em}
\medskip
\begin{algorithm}[H]
    \SetAlgoLined
    \SetAlgoNlRelativeSize{-1}
    \SetNlSty{textbf}{}{:}
    \caption{Sinkhorn Wasserstein Filtering Algorithm}

    \Indentp{-1.35em}
    \KwIn{$\widetilde{\cD}$, $\alpha\in(0,1)$, $\bb_0$, $\lambda,\gamma\in\mathbb{R}_+$, $R,T\in\mathbb{Z}_+$}
    \Indentp{1.35em}
    Normalize $\widetilde{\cD}$ \\
    \For{$t=1,\ldots,T$}{
        \For{$r = 0, \ldots, R-1$}{
        Solve the following problem:
        \begin{align}\label{eq:entropy_problem_in_SinkWF}
        \bT_{r}=\argmin_{\bT\in\Pi\left(\frac{1}{n}\mathbf{1}, {\rm softmax}(\bb_r)\right)}\left<\bT,\bC\right> + \lambda\sum_{i,j\in[n]}t_{ij}\log(t_{ij})
        \end{align}
        Compute gradient on $\bb_r$:
        \begin{align}
        \bg_r=\left\{\bv^*-\frac{\partial\beta\mathbf{1}^\top {\rm Relu}\left({\rm softmax}(\bb_r)- \frac{1}{(1-\alpha)n}\mathbf{1}\right)}{\partial{\rm solftmax}(\bb_r)}\right\}\frac{\partial{\rm solftmax}(\bb_r)}{\partial \bb_r}
        \end{align}
        Adam optimizer update:
        \begin{align}
        \bb_{r+1}={\rm Adam}(\gamma, \bb_r, \bg_{j, j\leq r})
        \end{align}
        }
        Save $\bh_t={\rm softmax}(\bb_R)$
    }
    Thresholding: filter $\widetilde{\cD}$ to obtain $\cS=\{\bx_i| \bar{h}_i>\tau_\alpha ,i\in[n]\}$ where $\bar{h}_i$ is the $i$th entry of $\bar{\bh}=\frac{1}{T}\sum_{t=1}^T\bh_t$,  $\tau_\alpha$ is the $\alpha$ quantile of $\bar{h}_i,i\in[n]$ \\
    \Indentp{-1.35em}
    \KwOut{$\cS$}
    \Indentp{1.35em}
\end{algorithm}

\subsubsection{SlicedWF}

In the \texttt{SinkWF} formulation \eqref{problem:sinkhorn}, the cost matrix $\mathbf{C}$ must be explicitly constructed and stored. This incurs an $\mathcal{O}(n^2)$ space and time complexity, which can be computationally prohibitive for large-scale datasets. To address this limitation, we consider a more memory-efficient alternative to the classical Wasserstein distance: the Sliced Wasserstein (SW) distance \citep{peyre2019computational}. The SW distance is defined as:
\begin{align*}
W_{1,S}(\widetilde{P}_n, P_{\mathbf{p}}) = \frac{1}{S} \sum_{i=1}^S W_1\left( P_{\mathbf{u}_i^\top \widetilde{\mathbf{x}}, \, \widetilde{\mathbf{x}} \sim \widetilde{P}_n}, P_{\mathbf{u}_i^\top \mathbf{y}, \, \mathbf{y} \sim P_{\mathbf{p}}} \right),
\end{align*}
where $\mathbb{S}^{d-1}$ denotes the $d$-dimensional unit sphere, and $\mathbf{u}_i \sim \mathbb{S}^{d-1}$ are random projection directions. The parameter $S$ controls the number of projections used to approximate the continuous Sliced Wasserstein distance. By projecting the $d$-dimensional data onto these random $1$-dimensional directions, computing the SW distance simplifies to averaging $S$ independent $1$-dimensional Wasserstein distances.

The corresponding estimator under this formulation is given by:
\begin{align}\label{problem:sliced}
\widehat{\mathbf{p}}_{\mathrm{SlicedWF}} = \operatorname*{argmax}_{\mathbf{p} \in \conv(\Delta_\alpha)}W_{1,S}(\widetilde{P}_n, P_{\mathbf{p}}),
 %\underbrace{\frac{1}{S} \sum_{i=1}^S W_1\left( P_{\mathbf{u}_i^\top \widetilde{\mathbf{x}}, \, %\widetilde{\mathbf{x}} \sim \widetilde{P}_n}, P_{\mathbf{u}_i^\top \mathbf{y}, \, \mathbf{y} \sim %P_{\mathbf{p}}} \right)}_{W_{1,S}(\widetilde{P}_n, P_{\mathbf{p}})},
\end{align}
where
\[
W_{1,S}(\widetilde{P}_n, P_{\mathbf{p}})=\frac{1}{S} \sum_{i=1}^S W_1\left( P_{\mathbf{u}_i^\top \widetilde{\mathbf{x}}, \, \widetilde{\mathbf{x}} \sim \widetilde{P}_n}, P_{\mathbf{u}_i^\top \mathbf{y}, \, \mathbf{y} \sim P_{\mathbf{p}}}\right).
\]
A well-known property of the $1$-dimensional Wasserstein-1 distance is that it can be computed in closed form via the cumulative distribution functions \citep{peyre2019computational}:
\begin{align*}
W_1\left( P_{\mathbf{u}_i^\top \widetilde{\mathbf{x}}, \, \widetilde{\mathbf{x}} \sim \widetilde{P}_n}, P_{\mathbf{u}_i^\top \mathbf{y}, \, \mathbf{y} \sim P_{\mathbf{p}}} \right) = \int_{0}^1 \left| F^{-1}_{\mathbf{u}_i^\top \widetilde{\mathbf{x}}, \, \widetilde{\mathbf{x}} \sim \widetilde{P}_n}(t) - F^{-1}_{\mathbf{u}_i^\top \mathbf{y}, \, \mathbf{y} \sim P_{\mathbf{p}}}(t) \right| dt,
\end{align*}
where $F^{-1}(t)$ denotes the quantile function (inverse CDF) of the projected distributions.

This $1$-dimensional distribution matching can be solved sorted-by-index in $\mathcal{O}(n \log n)$ time and $\mathcal{O}(n)$ space. Consequently, the Sliced Wasserstein approach bypasses the storage and computation of the $\mathcal{O}(n^2)$ pairwise cost matrix $\mathbf{C}$, drastically reducing memory requirements and running time. However, to maintain approximation accuracy, the required number of projections $S$ generally scales with the data dimension $d$.

The complete optimization process is detailed in Algorithm 1 in the Appendix A.3.
%Algorithm 3.
An alternative projection-based variant for enforcing the capped simplex constraint $\conv(\Delta_\alpha)$ is also deferred to Appendix A.1.

\section{Theoretical Properties}

\subsection{Convergence of the Estimator}

In this section, we establish the statistical convergence of $\widehat{P}_{\rm WF}$ to the true distribution $P^*$. We begin by introducing several essential concepts required to define our data contamination model.
Recall that $(\mathcal{X}, d)$ is a Polish metric space (i.e., a complete and separable metric space), $\mathcal{Q}(\mathcal{X})$ denotes the space of all probability measures on $\mathcal{X}$, and $\mathcal{Q}_p(\mathcal{X})$ is defined as in \eqref{PpDef}. While our theoretical results are established within this general framework, it is helpful to keep in mind the standard setting where $\mathcal{X}$ is a compact subset of $\mathbb{R}^d$ equipped with the Euclidean metric $d(\mathbf{x}, \mathbf{y}) = \|\mathbf{x} - \mathbf{y}\|_2$, and $p=1$.

\begin{dfn}[$W_p$ Resilience {\citep[Definition 1]{nietert2023robust}}]
Let $0 \leq \alpha < 1$ and $\rho \geq 0$. A distribution $\mu \in \mathcal{Q}_p(\mathcal{X})$ is said to be $(\rho, \alpha)$-resilient with respect to the Wasserstein-$p$ distance if
\begin{align*}
W_p(\mu, \mu') \leq \rho
\end{align*}
for all $\mu' \in \mathcal{Q}_p(\mathcal{X})$ such that $\mu' \leq \frac{1}{1-\alpha}\mu$. We denote the family of all $(\rho, \alpha)$-resilient distributions in $\mathcal{Q}_p(\mathcal{X})$ by $\mathcal{W}_p(\rho, \alpha)$.
\end{dfn}

The inequality between the two measures, $\mu' \leq \frac{1}{1-\alpha}\mu$, is understood pointwise; that is, it holds for any Borel set $B \subseteq \mathcal{X}$. Intuitively, this condition implies that $\mu'$ is obtained by removing up to an $\alpha$-fraction of the probability mass from $\mu$ and subsequently normalizing the remaining measure.

Resilience characterizes the structural stability of a distribution: it guarantees that the core of the distribution remains close to the original distribution in the Wasserstein metric, even after an adversary discards an $\alpha$-fraction of its mass. Originally introduced to study robust mean estimation under adversarial contamination \citep{steinhardt2018resilience}, this concept has recently been generalized to a distribution-free setting \citep{zhu2022generalized, nietert2023robust}.

\begin{dfn}[Support Total Variation Distance]
For two probability distributions $P_1, P_2 \in \mathcal{Q}(\mathcal{X})$, we define the support total variation (STV) distance as:
\begin{align*}
\|P_1 - P_2\|_{\mathrm{STV}} = \inf \Big\{ \alpha \in [0, 1) : \exists \bar{S} \subseteq \operatorname{supp}(P_1) \cap \operatorname{supp}(P_2) \text{ s.t. } P_1(\bar{S}) = P_2(\bar{S}) = 1-\alpha \text{ and } P_1|_{\bar{S}} = P_2|_{\bar{S}} \Big\},
\end{align*}
where $\operatorname{supp}(\cdot)$ denotes the support of a probability measure, and $P|_{\bar{S}}$ represents the restriction of $P$ to $\bar{S}$, defined as $P|_{\bar{S}}(B) = P(B \cap \bar{S})$ for any Borel set $B \subseteq \mathcal{X}$.
\end{dfn}

The STV distance is a structurally restricted variant of the classical total variation (TV) distance. Recall that if two probability distributions satisfy $\|P_1 - P_2\|_{\mathrm{TV}} \leq \alpha$, then $P_2$ can be viewed as being obtained from $P_1$ by first removing an arbitrary $\alpha$-fraction of the mass from $P_1$ and replacing it with an arbitrary $\alpha$-fraction of mass elsewhere \citep{diakonikolas2019robust}.

Building upon this deletion-addition decomposition, the STV distance enforces the additional constraint that the mass removal and addition must occur on disjoint supports. In other words, if $\|P_1 - P_2\|_{\mathrm{STV}} \leq \alpha$, there exists a shared support $\bar{S}$ containing $1-\alpha$ of the total probability mass, such that $P_2$ is obtained by deleting the $\alpha$-mass on $\operatorname{supp}(P_1) \setminus \bar{S}$ and adding a new $\alpha$-mass restricted to $\mathcal{X} \setminus \bar{S}$. It immediately follows from this definition that $\|P_1 - P_2\|_{\mathrm{TV}} \leq \|P_1 - P_2\|_{\mathrm{STV}}$.

\subsubsection{Population Setting}

To isolate the geometric properties of our approach from finite-sample statistical fluctuations, we first analyze the problem in the population setting (corresponding to the infinite-sample limit).

Let the clean distribution $P^*$ belong to a $(\rho, \alpha)$-resilient family: $P^* \in \mathcal{G} \subseteq \mathcal{W}_1(\rho, \alpha)$, and suppose the observed contaminated distribution $\widetilde{P}$ satisfies $\|\widetilde{P} - P^*\|_{\mathrm{STV}} \leq \alpha$. We define the population-level estimator as:
\begin{align*}
\widehat{P}_{\mathrm{WF}} = \widetilde{P}_{S_{\mathrm{WF}}} := \frac{1}{1-\alpha} \widetilde{P}\vert_{S_{\mathrm{WF}}},
\end{align*}
where the optimal support $S_{\mathrm{WF}}$ is given by:
\begin{align*}
S_{\mathrm{WF}} = \operatorname*{argmax}_{\substack{S \subseteq \operatorname{supp}(\widetilde{P}) \\ \widetilde{P}(S) = 1-\alpha}} W_1(\widetilde{P}_S, \widetilde{P}),
\end{align*}
and $\widetilde{P}_S = \frac{1}{\widetilde{P}(S)} \widetilde{P}\vert_S$ denotes the normalized restriction of $\widetilde{P}$ to $S$. This formulation directly mirrors our empirical estimator, where the discrete support points each carry mass $1/n$.

We now introduce our proposed \emph{Far Exclusion and Local Projection} (FELP) contamination model at the population level:

\begin{dfn}[Population FELP Contamination]
Given $P^* \in \mathcal{W}_1(\rho, \alpha)$, we say that the contaminated distribution $\widetilde{P}$ belongs to the FELP contamination family $\mathcal{C}^{\mathrm{FELP}}_{P^*}(\alpha, \rho)$ if the following three conditions hold:
\begin{enumerate}
    \item {Support TV Bound:} The total variation of the support satisfies:
  %  \begin{align*}
    $
    \|\widetilde{P} - P^*\|_{\mathrm{STV}} \leq \alpha.
    $
  %  \end{align*}
    Let $\bar{S}$ be the shared support of mass $1-\alpha$ promised by the definition of the $\mathrm{STV}$ distance.

    \item {Far Exclusion (FE):} Let $E_\rho := \{ \mathbf{x} \in \operatorname{supp}(\widetilde{P}) \setminus \bar{S} \mid d(\mathbf{x}, \operatorname{supp}(P^*)) > \rho \}$ define the region of ``far outliers". There exists a subset $S_+ \subseteq \operatorname{supp}(\widetilde{P})$ with $\widetilde{P}(S_+) = 1-\alpha$ and $S_+ \cap E_\rho = \emptyset$ such that:
    \begin{align}\label{eq:3}
    \sup_{\substack{S \subseteq \operatorname{supp}(\widetilde{P}), \, \widetilde{P}(S) = 1-\alpha \\ \widetilde{P}(S \cap  E_\rho) > 0}} W_1(\widetilde{P}_S, \widetilde{P}) < W_1(\widetilde{P}_{S_+}, \widetilde{P}).
    \end{align}

    \item {Local Projection (LP):} Let $N_\rho := \{ \mathbf{x} \in \operatorname{supp}(\widetilde{P}) \setminus \bar{S} \mid d(\mathbf{x}, \operatorname{supp}(P^*)) \leq \rho \}$ denote the region of ``near outliers". There exists a measurable projection map $T : N_\rho \to \operatorname{supp}(P^*)$ such that:
    \begin{align}\label{eq:2}
    d(\mathbf{x}, T(\mathbf{x})) \leq \rho \quad \text{and} \quad T_{\sharp} \widetilde{P}\vert_{N_\rho} \leq P^*,
    \end{align}
    where $T_{\sharp}$ denotes the push-forward operator.
\end{enumerate}
\end{dfn}

In the FELP contamination model, the STV distance restricts the corruptive mass to support-wise perturbations bound by a budget of $\alpha$. The far exclusion and local projection conditions further dictate the precise geometric structure of the contamination under which our estimator achieves minimax optimality:
\begin{itemize}
    \item {Far Outliers ($E_\rho$):} Must be sufficiently far from the clean support that they are guaranteed to be excluded by our Wasserstein-maximization estimator.
    \item {Near Outliers ($N_\rho$):} Lie close to the clean support. Their impact on the distribution estimation error is negligible even if they are not removed, as they can be modeled as local perturbations from the clean distribution via the mass-non-increasing preimage map $T^{-1}$.
\end{itemize}

By definition, we have the following nested relationship among the contamination families:
\begin{align*}
\mathcal{C}_{P^*}^{\mathrm{FELP}}(\alpha, \rho) \subseteq \big\{ P \in \mathcal{Q}(\mathcal{X}) : \|P - P^*\|_{\mathrm{STV}} \leq \alpha \big\} \subseteq \big\{ P \in \mathcal{Q}(\mathcal{X}) : \|P - P^*\|_{\mathrm{TV}} \leq \alpha \big\}.
\end{align*}
This nesting shows that the FELP model is structurally more restrictive than the classical population total variation (TV) contamination model. Nevertheless, FELP remains sufficiently rich to capture realistic scenarios. Indeed, adversarial contamination naturally decomposes in this manner: one component consists of far, well-separated outliers, while the other consists of near, indistinguishable inliers. This far-and-local decomposition sharing a similar spirit with the local perturbation and global adversary contamination models of \cite{nietert2024robust}, as well as the mild and gross contamination frameworks of \cite{clark2024finding}.

In what follows, we provide a concrete, low-dimensional example to demonstrate when the FELP model conditions are satisfied.

\bigskip

\noindent\textbf{Example (Uniform Cube in the Population Setting).}
Let $S^* = [0,1]^d$ and consider the clean distribution $P^* = \operatorname{Unif}(S^*)$. We corrupt the data via a split-and-replace mechanism that deletes clean subregions in $S^*$ and replaces them with corrupted mass:
\begin{align*}
S_{\mathrm{loc}}^{\mathrm{in}} = [0,l] \times [0,1]^{d-1}
&\quad\to\quad
S^{\mathrm{out}}_{\mathrm{loc}} = [1,1+l] \times [0,1]^{d-1}, \\
S_{\mathrm{far}}^{\mathrm{in}} = [l,2l] \times [0,1]^{d-1}
&\quad\to\quad
\mathbf{x}_{\mathrm{far}}^{\mathrm{out}} = (L, 1, \dots, 1)^\top,
\end{align*}
where $0 < \alpha = 2l < \frac{1}{2}$, and $L > (1+\sqrt{d}) \lor C_{l,d}$ for a constant $C_{l,d}$ depending on $d$ and $l$. We then define the contaminated probability measure as:
\begin{align*}
\widetilde{P} = P^*\vert_{S^* \setminus (S_{\mathrm{loc}}^{\mathrm{in}} \cup S_{\mathrm{far}}^{\mathrm{in}})} + l \operatorname{Unif}(S_{\mathrm{loc}}^{\mathrm{out}}) + l \delta_{\mathbf{x}_{\mathrm{far}}^{\mathrm{out}}}.
\end{align*}
Under this construction, we have $\widetilde{P} \in \mathcal{C}_{P^*}^{\mathrm{FELP}}(\alpha, \sqrt{d})$.

\begin{proof}[Proof sketch]
By construction, $\bar{S} = S^* \setminus (S_{\mathrm{loc}}^{\mathrm{in}} \cup S_{\mathrm{far}}^{\mathrm{in}})$ is the valid shared support between $\widetilde{P}$ and $P^*$, yielding $\|\widetilde{P} - P^*\|_{\mathrm{STV}} \leq \alpha$.

For any $P' \in \mathcal{Q}_1(\mathcal{X})$ satisfying $P' \leq \frac{1}{1-\alpha}P^*$, we have:
\begin{align*}
W_1(P', P^*) \leq \operatorname{diam}(S^*) \|P' - P^*\|_{\mathrm{TV}} \leq \alpha \sqrt{d} \leq \sqrt{d},
\end{align*}
which implies $P^* \in \mathcal{W}_1(\sqrt{d}, \alpha)$.

Setting $\rho = \sqrt{d}$, and since $L > 1 + \sqrt{d}$, we identify the near and far outlier regions as $N_\rho = S_{\mathrm{loc}}^{\mathrm{out}}$ and $E_\rho = \{\mathbf{x}_{\mathrm{far}}^{\mathrm{out}}\}$, respectively. For any $\mathbf{x} \in S_{\mathrm{loc}}^{\mathrm{out}}$, we define the projection map $T(\mathbf{x}) = \mathbf{x} - \mathbf{e}_1$, where $\mathbf{e}_1 = (1, 0, \dots, 0)^\top$. This yields:
\begin{align*}
d(\mathbf{x}, T(\mathbf{x})) = 1 \leq \sqrt{d} = \rho \quad \text{and} \quad T_{\sharp}\widetilde{P}\vert_{N_\rho} = T_{\sharp} \big( l \operatorname{Unif}(S_{\mathrm{loc}}^{\mathrm{out}}) \big) = l \operatorname{Unif}(S_{\mathrm{loc}}^{\mathrm{in}}) \leq P^*,
\end{align*}
satisfying the local projection condition.

Next, choosing $S_+ = S^* \setminus (S_{\mathrm{loc}}^{\mathrm{in}} \cup S_{\mathrm{far}}^{\mathrm{in}})$ in the FELP definition, the Wasserstein distance $W_1(\widetilde{P}_{S_+}, \widetilde{P})$ is lower bounded by the cost of transporting the mass $l$ at the atom $\mathbf{x}_{\mathrm{far}}^{\mathrm{out}}$ to $S_+$:
\begin{align*}
W_1(\widetilde{P}_{S_+}, \widetilde{P}) \geq l \cdot d(\mathbf{x}_{\mathrm{far}}^{\mathrm{out}}, S_+) = l(L-1).
\end{align*}
Conversely, for any admissible support $S \subseteq \operatorname{supp}(\widetilde{P})$ with $\widetilde{P}(S) = 1-\alpha$ and $\widetilde{P}(S \cap E_\rho) > 0$, the set $S$ must contain the atom $\mathbf{x}_{\mathrm{far}}^{\mathrm{out}}$, meaning $\widetilde{P}_S(\{\mathbf{x}_{\mathrm{far}}^{\mathrm{out}}\}) = \frac{l}{1-\alpha}$. To upper bound $W_1(\widetilde{P}_S, \widetilde{P})$, we construct a feasible transport plan from $\widetilde{P}_S$ to $\widetilde{P}$ as follows:
\begin{enumerate}
    \item Keep $l$ mass at the atom $\mathbf{x}_{\mathrm{far}}^{\mathrm{out}}$ in place (incurring zero cost).
    \item Transport the excess mass $\frac{\alpha l}{1-\alpha}$ at $\mathbf{x}_{\mathrm{far}}^{\mathrm{out}}$ to the region $S_+ \cup S_{\mathrm{loc}}^{\mathrm{out}} = [2l, 1+l] \times [0,1]^{d-1}$, incurring a cost of at most $\frac{\alpha l}{1-\alpha} \sqrt{(L-2l)^2 + d-1}$.
    \item Transport the remaining mass $1 - \frac{l}{1-\alpha}$ within $S_+ \cup S_{\mathrm{loc}}^{\mathrm{out}}$, incurring a cost of at most $\big(1 - \frac{l}{1-\alpha}\big) \sqrt{(1-l)^2 + d-1}$.
\end{enumerate}
Since this bound holds for any such $S$, we obtain:
\begin{align*}
\sup_{\substack{S \subseteq \operatorname{supp}(\widetilde{P}), \, \widetilde{P}(S) = 1-\alpha \\ \widetilde{P}(S \cap E_\rho) > 0}} W_1(\widetilde{P}_S, \widetilde{P}) \leq \frac{\alpha l}{1-\alpha}\sqrt{(L-2l)^2+d-1} + \left(1-\frac{l}{1-\alpha}\right)\sqrt{(1-l)^2+d-1}.
\end{align*}
Choosing $L > C_{l,d}$ with
\begin{gather*}
C_{l,d} = \frac{a_l - 2l c_l^2 + c_l\sqrt{(a_l - 2l)^2 + (1 - c_l^2)(d-1)}}{1 - c_l^2}, \\
a_l = 1 + \frac{1-3l}{l(1-2l)}\sqrt{(1-l)^2 + d-1}, \qquad c_l = \frac{2l}{1-2l}
\end{gather*}
ensures that the far exclusion condition \eqref{eq:3} is satisfied.
\end{proof}

We establish the error bounds for the population estimator $\widehat{P}_{\mathrm{WF}}$ under the FELP contamination model as follows:

\begin{thm}[Population Risk of $\widehat{P}_{\mathrm{WF}}$]\label{thm:population_risk}
For any clean distribution $P^* \in \mathcal{G} \subseteq \mathcal{W}_1(\rho,\alpha)$ and contaminated distribution $\widetilde{P} \in \mathcal{C}^{\mathrm{FELP}}_{P^*}(\alpha,\rho)$ with $0 \leq \alpha \leq \frac{1}{2}$, the population estimator satisfies:
\begin{align*}
W_1(\widehat{P}_{\mathrm{WF}}, P^*) \lesssim \rho.
\end{align*}
\end{thm}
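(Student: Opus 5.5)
The first step is to use Far Exclusion to show that the optimal support $S_{\mathrm{WF}}$ carries no far-outlier mass. Because $S_{\mathrm{WF}}$ maximizes $W_1(\widetilde{P}_S,\widetilde{P})$ over admissible $S$, we have $W_1(\widetilde{P}_{S_{\mathrm{WF}}},\widetilde{P}) \ge W_1(\widetilde{P}_{S_+},\widetilde{P})$. Suppose $\widetilde{P}(S_{\mathrm{WF}}\cap E_\rho)>0$. Then the strict inequality \eqref{eq:3} gives $W_1(\widetilde{P}_{S_{\mathrm{WF}}},\widetilde{P}) < W_1(\widetilde{P}_{S_+},\widetilde{P})$, which is a contradiction. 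Hence $\widetilde{P}(S_{\mathrm{WF}}\cap E_\rho)=0$. (If the argmax is not attained, one can take a near-maximizer, since the supremum over far-touching sets is strictly smaller.)

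It follows that, up to null sets, $S_{\mathrm{WF}}\subseteq \bar S\cup N_\rho$. I would split the estimator as
\[
\widehat P_{\mathrm{WF}}=\tfrac{1}{1-\alpha}\bigl(\widetilde P|_{S_{\mathrm{WF}}\cap\bar S}+\widetilde P|_{S_{\mathrm{WF}}\cap N_\rho}\bigr).
\]
On $\bar S$ we have $\widetilde P|_{\bar S}=P^*|_{\bar S}\le P^*$. On $N_\rho$, the Local Projection map $T$ satisfies $T_\sharp \widetilde P|_{S_{\mathrm{WF}}\cap N_\rho}\le T_\sharp\widetilde P|_{N_\rho}\le P^*$. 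I would define
\[
Q:=\tfrac{1}{1-\alpha}\bigl(P^*|_{S_{\mathrm{WF}}\cap\bar S}+T_\sharp\widetilde P|_{S_{\mathrm{WF}}\cap N_\rho}\bigr),
\]
which is a probability measure because its total mass equals $\widetilde P(S_{\mathrm{WF}})/(1-\alpha)=1$. Moreover, $Q\le \frac{2}{1-\alpha}P^*$, because the two pieces are each dominated by $P^*$. The coupling that is the identity on the $\bar S$ part and $(\mathrm{id},T)$ on the $N_\rho$ part moves each unit of mass a distance at most $\rho$, so $W_1(\widehat P_{\mathrm{WF}},Q)\le\rho$.

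The main obstacle is the resilience step, because of the factor $2$ above. The two pieces may overlap in $\operatorname{supp}(P^*)$, so $Q\le\frac{1}{1-\alpha}P^*$ is not automatic. I would first try to remove the overlap. The contaminated region $\operatorname{supp}(\widetilde P)\setminus\bar S$ is disjoint from $\bar S$. If $T$ can be chosen to map into $\operatorname{supp}(P^*)\setminus\bar S$, which carries $P^*$-mass exactly $\alpha$ (as in the cube example), then the two pieces live on disjoint sets, $Q\le \frac{1}{1-\alpha}P^*$, and resilience gives $W_1(Q,P^*)\le\rho$ directly.

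If that fails, I would use the general bound instead. A measure $Q\le\frac{1}{1-\beta}P^*$ with $\beta=1-(1-\alpha)/2\le 3/4$ (using $\alpha\le 1/2$) is within $O(\rho)$ of $P^*$. This follows from the standard resilience-amplification fact that $(\rho,\alpha)$-resilience implies $(O(\rho/\alpha\cdot\beta),\beta)$-type control. Since that constant could degrade, the cleaner route is to decompose $Q$ into a $P^*$-part and an $\alpha$-mass part. Concretely, write
\[
Q=\tfrac{1}{1-\alpha}\bigl(P^*|_{S_{\mathrm{WF}}\cap\bar S}\bigr)+\tfrac{1}{1-\alpha}\,\nu,
\]
where $\nu$ has mass at most $\alpha$ and $\nu\le P^*$. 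Compare $Q$ with the measure $Q'=\frac{1}{1-\alpha}\bigl(P^*|_{S_{\mathrm{WF}}\cap\bar S}+\nu'\bigr)$, where $\nu'\le P^*|_{\bar S^c\cup(\bar S\setminus S_{\mathrm{WF}})}$ has the same mass as $\nu$. Then $Q'\le\frac{1}{1-\alpha}P^*$. To bound $W_1(Q,Q')$, take $\nu/\nu(\mathcal X)$ and $\nu'/\nu'(\mathcal X)$; each is a conditioning of $P^*$ to a set of mass at least $\alpha\cdot\Omega(1)$. Resilience applied to each conditioning, via the triangle inequality through $P^*$, yields $O(\rho)$ times their mass divided by $(1-\alpha)$.

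Combining the three pieces with the triangle inequality gives
\[
W_1(\widehat P_{\mathrm{WF}},P^*)\le W_1(\widehat P_{\mathrm{WF}},Q)+W_1(Q,Q')+W_1(Q',P^*)\lesssim\rho,
\]
using $\alpha\le 1/2$ to keep all constants absolute.
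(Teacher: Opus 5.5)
\emph{Verdict: there is a genuine gap, in the step that carries all the weight.}

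Your architecture matches the paper's. Far Exclusion removes $E_\rho$ from $S_{\mathrm{WF}}$. The retained near outliers are then moved by $T$ at cost at most $\frac{\delta}{1-\alpha}\rho$, where $\delta:=\widetilde P(S_{\mathrm{WF}}\setminus\bar S)\le\alpha$. Finally you compare with a genuine $\alpha$-trimming of $P^*$; your $Q'$ with $\nu'=P^*|_{\bar S\setminus S_{\mathrm{WF}}}$ is exactly the paper's reference $\widetilde P_{\bar S}$.

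The gap is the bound $W_1(Q,Q')\le\frac{\delta}{1-\alpha}\bigl(W_1(\bar\nu,P^*)+W_1(\bar\nu',P^*)\bigr)$, with $\bar\nu=\nu/\delta$ and $\bar\nu'=\nu'/\delta$. You justify it by calling these conditionings of $P^*$ to sets of mass $\alpha\cdot\Omega(1)$ to which resilience applies, and neither claim holds.
\begin{itemize}
\item Both $\nu$ and $\nu'$ have mass exactly $\delta$, and nothing in FELP bounds $\delta$ from below. All you know is $\bar\nu,\bar\nu'\le\frac1\delta P^*$.
\item Such measures keep only a $\delta$-fraction of $P^*$. In general this lies outside the trimmings $\mu'\le\frac{1}{1-\alpha}P^*$ that $(\rho,\alpha)$-resilience controls.
\item $W_1(\bar\nu,P^*)$ itself is in general not $O(\rho)$. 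For bounded covariance in one dimension it can be of order $\sigma/\sqrt{\delta}$, while $\rho\asymp\sigma\sqrt{\alpha}$. Only the product $\delta\,W_1(\bar\nu,P^*)$ is $O(\rho)$.
\item The amplification route you mention first does not rescue this. It loses a factor of order $1/\alpha$, and Theorem~\ref{thm:population_risk} has no lower bound on $\alpha$.
\end{itemize}

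The missing ingredient is complementary resilience: the paper's Lemma~\ref{lem:complementary_resilience}, applied at level $\delta$. For $\delta>0$ and $\bar\nu\le\frac1\delta P^*$, set $\eta:=\frac{P^*-\delta\bar\nu}{1-\delta}$. This is a probability measure with $\eta\le\frac{1}{1-\delta}P^*\le\frac{1}{1-\alpha}P^*$, since $\delta\le\alpha$. As signed measures, $\delta(\bar\nu-P^*)=(1-\delta)(P^*-\eta)$. Kantorovich--Rubinstein duality therefore gives $\delta\,W_1(\bar\nu,P^*)=(1-\delta)\,W_1(P^*,\eta)\le(1-\delta)\rho$, and likewise for $\bar\nu'$.

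The $1/\delta$ growth of $W_1(\bar\nu,P^*)$ is thus exactly cancelled by the prefactor $\delta$. This yields $W_1(Q,Q')\le\frac{2(1-\delta)}{1-\alpha}\rho\le 4\rho$, and trivially $Q=Q'$ when $\delta=0$. Your chain then closes with $W_1(\widehat P_{\mathrm{WF}},P^*)\le 6\rho$.

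This is exactly how the paper's Lemma~\ref{lem:population_risk_under_STV} handles the overlap you worried about. Both $T_\sharp\widetilde P_{S_{\mathrm{WF}}\setminus\bar S}$ and $P^*_{\bar S\setminus S_{\mathrm{WF}}}$ are merely $\le\frac1\delta P^*$, and no disjointness is needed. So you can drop both detours: the disjointness assumption on $T$, which FELP does not provide, and resilience amplification.
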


The core intuition behind Theorem \ref{thm:population_risk} is as follows. The STV distance condition in the definition of $\mathcal{C}^{\mathrm{FELP}}_{P^*}(\alpha,\rho)$ guarantees the existence of a shared support $\bar{S}$ of mass $1-\alpha$, which yields a reference trimmed distribution $\widetilde{P}_{\bar{S}}$. Note that $\widetilde{P}_{\bar{S}}$ is a feasible $(1-\alpha)$-truncated candidate of $\widetilde{P}$ that satisfies $\widetilde{P}_{\bar{S}} = P^*_{\bar{S}}$. By the $W_1$-resilience of $P^*$, it immediately follows that this reference distribution has a bounded risk: $W_1(\widetilde{P}_{\bar{S}}, P^*) \leq \rho$. Consequently, the optimal estimator $\widehat{P}_{\mathrm{WF}}$, which maximizes the Wasserstein distance to the contaminated distribution, is guaranteed to perform at least as well as this reference candidate. The FELP conditions then ensure that $\widehat{P}_{\mathrm{WF}}$ indeed identifies a support that achieves this optimal risk bound of $\mathcal{O}(\rho)$.

We next establish a minimax lower bound showing that this $\mathcal{O}(\rho)$ rate is optimal. Let $\mathcal{T}: \mathcal{Q}(\mathcal{X}) \to \mathcal{Q}(\mathcal{X})$ denote any population-level estimator, and define its maximum risk over the FELP contamination family as:
\begin{align*}
\mathcal{R}_{1,P^*,\mathcal{T}}(\alpha,\rho) = \sup_{\widetilde{P} \in \mathcal{C}^{\mathrm{FELP}}_{P^*}(\alpha, \rho)} W_1(\mathcal{T}(\widetilde{P}), P^*).
\end{align*}

We have the following minimax optimality result:

\begin{thm}[Minimax Optimal Population Risk]\label{thm:minimax_optimal}
Let $\mathcal{G}_{\mathrm{cov}}(\sigma) \subseteq \mathcal{W}_1(\rho_\sigma,\alpha)$ be the class of distributions with bounded covariance:
%\begin{align*}
$
\mathcal{G}_{\mathrm{cov}}(\sigma) = \left\{ \mu \in \mathcal{Q}(\mathcal{X}) : \mathbf{\Sigma}_\mu \preceq \sigma^2 \mathbf{I} \right\}.
$
%\end{align*}
If $0 < \alpha_0 \leq \alpha \leq \frac{1}{2}$ and the dimension satisfies $d \lesssim \frac{1}{\alpha(1-\alpha^2)}$, then:
\begin{align*}
\inf_{\mathcal{T}} \sup_{P^* \in \mathcal{G}_{\mathrm{cov}}(\sigma)} \mathcal{R}_{1,P^*,\mathcal{T}}(\alpha,\rho_\sigma) \asymp \rho_\sigma,
\end{align*}
where $\rho_\sigma \asymp \sigma\sqrt{d\alpha}$ is the sharp resilience factor for the family $\mathcal{G}_{\mathrm{cov}}(\sigma)$.
\end{thm}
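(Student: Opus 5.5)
The claim has an upper and a lower bound, and I would prove them separately.

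\textbf{Upper bound.} Fix $P^* \in \mathcal{G}_{\mathrm{cov}}(\sigma)$. First I would check that $P^* \in \mathcal{W}_1(\rho_\sigma,\alpha)$ with $\rho_\sigma \asymp \sigma\sqrt{d\alpha}$. Take $\mu' \le \frac{1}{1-\alpha}P^*$ and write $P^* = (1-\alpha)\mu' + \alpha\nu$. Couple $\mu'$ to $P^*$ by leaving the common part $(1-\alpha)\mu'$ fixed and moving the remaining mass. The transport cost is then controlled by $\int \|\mathbf{x}-\mathbf{m}\|\,d|\mu'-P^*|$, where $\mathbf{m}$ is the mean of $P^*$. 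By Cauchy--Schwarz over the set carrying mass $\alpha$, this is at most of order $\sqrt{\alpha}\sqrt{\operatorname{tr}\Sigma} \le \sigma\sqrt{d\alpha}$, since the total variation mass is at most $\alpha/(1-\alpha)$. Resilience then lets me apply Theorem~\ref{thm:population_risk} with $\mathcal{T}$ equal to the WF estimator, which gives $\sup_{P^*}\mathcal{R}_{1,P^*,\mathcal{T}}(\alpha,\rho_\sigma) \lesssim \rho_\sigma$ for $\alpha \le 1/2$.

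\textbf{Lower bound.} I would use a two-point (Le Cam style) argument. I want $P_1, P_2 \in \mathcal{G}_{\mathrm{cov}}(\sigma)$ and a single $\widetilde{P}$ with $\widetilde{P} \in \mathcal{C}^{\mathrm{FELP}}_{P_1}(\alpha,\rho_\sigma) \cap \mathcal{C}^{\mathrm{FELP}}_{P_2}(\alpha,\rho_\sigma)$ and $W_1(P_1,P_2) \gtrsim \sigma\sqrt{d\alpha}$. Any estimator outputs the same $\mathcal{T}(\widetilde{P})$ in both worlds, so by the triangle inequality
\[
\max_j W_1(\mathcal{T}(\widetilde{P}),P_j) \ge \tfrac12 W_1(P_1,P_2).
\]

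For the construction I would take a common bulk $Q$, for instance uniform on a small ball or cube of radius $r \asymp \sigma$. Then set $P_1 = (1-\alpha)Q + \alpha\,\mathrm{Unif}(A_1)$ and $P_2 = (1-\alpha)Q + \alpha\,\mathrm{Unif}(A_2)$. Here $A_1, A_2$ are disjoint thin shells or blobs at distance $\Delta \asymp \sigma\sqrt{d/\alpha}$ from the bulk in opposite directions, spread isotropically over $d$ orthogonal directions $\pm\Delta\mathbf{e}_k$. This keeps the covariance at most $\alpha\Delta^2/d \asymp \sigma^2$ per coordinate, which is what makes $\Sigma \preceq \sigma^2\mathbf{I}$ hold. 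Placing the masses in opposite directions gives $W_1(P_1,P_2) \gtrsim \alpha\Delta \cdot(\text{const}) \asymp \sigma\sqrt{d\alpha}$.

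Next I need a $\widetilde{P}$ that is a FELP contamination of both. A natural candidate is $\widetilde{P} = (1-\alpha)Q + \alpha\,\mathrm{Unif}(A_1 \cup A_2)$, or a mixture with half the contaminating mass on each. Relative to $P_1$, this deletes half of $A_1$ and adds mass on $A_2$, and symmetrically for $P_2$. I would then verify the three conditions in turn.
\begin{itemize}
\item STV: the shared support is $\operatorname{supp}(Q)$ together with the retained parts of $A_j$, with total mass at least $1-\alpha$.
\item Far exclusion: for $P_1$, the added mass on $A_2$ lies at distance about $2\Delta$ from $\operatorname{supp}(P_1)$. I would try to make that distance at most $\rho_\sigma$, so that everything falls under the near-outlier case $N_\rho$ and $E_\rho = \emptyset$, making FE vacuous. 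This requires $2\Delta \lesssim \sigma\sqrt{d\alpha}$, which fails for $\Delta \asymp \sigma\sqrt{d/\alpha}$. So I would instead rebalance: make the separation of order $\rho_\sigma$ but put $\Theta(1)$ mass on the differing parts.
\item Local projection: with $E_\rho = \emptyset$, the map $T$ should send $A_2$-mass onto an equal amount of $P_1$-mass within distance $\rho_\sigma$.
\end{itemize}

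\textbf{Main obstacle.} The hard step is choosing parameters so that covariance $\preceq \sigma^2\mathbf{I}$, separation $\le \rho_\sigma$ (so LP holds and FE is vacuous), and $W_1(P_1,P_2) \gtrsim \rho_\sigma$ all hold simultaneously. I expect the constraint $d \lesssim 1/(\alpha(1-\alpha^2))$ to enter exactly here. It makes $\rho_\sigma = \sigma\sqrt{d\alpha} \lesssim \sigma$, so a bulk shift of size $\asymp\rho_\sigma$ applied to $\Theta(\alpha)$ or even larger mass still respects the covariance budget. The first thing I would try is $P_j = \mathcal{N}(\pm\tfrac{\rho_\sigma}{2}\mathbf{e}_1,\,c\sigma^2\mathbf{I})$ truncated to a ball. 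Then $W_1 \asymp \rho_\sigma$ trivially, and I would check whether the symmetric mixture $\widetilde{P}$ admits a transport-map projection $T$ with displacement at most $\rho_\sigma$ and $T_\sharp \widetilde{P}|_{N_\rho} \le P_j$. The key point to verify is that $\alpha$ mass of displacement suffices when the lower bound $\alpha \ge \alpha_0$ keeps constants uniform.
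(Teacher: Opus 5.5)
Your upper bound is the paper's: combine the resilience inclusion $\mathcal{G}_{\mathrm{cov}}(\sigma)\subseteq\mathcal{W}_1(\rho_\sigma,\alpha)$ with Theorem~\ref{thm:population_risk}. The statement already assumes that inclusion, so your Cauchy--Schwarz check is correct but optional. Your lower-bound architecture is also the paper's: the two-point triangle-inequality argument is exactly Lemma~\ref{lem:generic_lower_bound_for_minimax_risk}. You also correctly see two things: the contamination must sit within $\rho_\sigma$ of both supports, so that $E_\rho=\emptyset$ and far exclusion is vacuous; and the dimension condition is what gives $\rho_\sigma\lesssim\sigma$.

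The gap is that you never exhibit a valid triple $(P_1,P_2,\widetilde{P})$, and that construction is the entire content of the lower bound. The one explicit candidate you name, truncated Gaussians $\mathcal{N}(\pm\frac{\rho_\sigma}{2}\mathbf{e}_1,c\sigma^2\mathbf{I})$, cannot work. The STV condition needs sets $\bar{S}_j$ with $\widetilde{P}|_{\bar{S}_j}=P_j|_{\bar{S}_j}$ and $P_j(\bar{S}_j)=1-\alpha$. This forces $P_1|_{\bar{S}_1\cap\bar{S}_2}=P_2|_{\bar{S}_1\cap\bar{S}_2}$. Since the two densities agree only on a null hyperplane, $\widetilde{P}(\bar{S}_1\cap\bar{S}_2)=0$, hence $\widetilde{P}(\bar{S}_1\cup\bar{S}_2)=2(1-\alpha)>1$ for every $\alpha<\frac12$. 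The two hypotheses must coincide \emph{exactly}, not merely approximately, on a common component carrying mass at least $1-2\alpha$.

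The missing step, as the paper does it, is the following construction (write $\rho=\rho_\sigma$).
\begin{itemize}
\item Let $Q$ be uniform on the segment $\{t\mathbf{e}_1:0\le t\le\rho/4\}$.
\item Set $P_0=(1-\alpha)Q+\alpha\delta_{-\frac{\rho}{2}\mathbf{e}_1}$ and $P_1=(1-\alpha)Q+\alpha\delta_{\frac{\rho}{2}\mathbf{e}_1}$.
\item Set $\widetilde{P}=(1-\alpha)Q+\alpha\delta_{-\frac{\rho}{4}\mathbf{e}_1}$.
\end{itemize}
The checks then go through as follows.
\begin{itemize}
\item The segment is the shared support for both hypotheses.
\item The single contaminating atom lies at distance $\rho/4$ from both supports, so $E_\rho=\emptyset$.
\item Local projection holds with $T_0,T_1$ sending the atom to $\mp\frac{\rho}{2}\mathbf{e}_1$, with displacements $\rho/4$ and $3\rho/4$.
\item $W_1(P_0,P_1)=\alpha\rho\ge\alpha_0\rho$. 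The STV budget caps the differing mass at order $\alpha$, so it is $\alpha\ge\alpha_0$, not ``$\Theta(1)$ extra mass'', that turns $\alpha\rho$ into $\gtrsim\rho$.
\item The only nonzero covariance entry is $(1-\alpha)\rho^2/192+\alpha(1-\alpha)(5\rho/8)^2\lesssim\rho^2$. This is $\le\sigma^2$ exactly when $\sigma\sqrt{d\alpha}\lesssim\sigma$, i.e., under the dimension constraint.
\end{itemize}

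Your blob sketch can be repaired the same way. Use atoms at $\pm\frac{\rho}{2}\mathbf{e}_1$ beside a short segment centered at the origin, and let $\widetilde{P}$ put mass $\frac{\alpha}{2}$ on each atom. Local projection then maps both atoms to the hypothesis's own atom, with displacement at most $\rho$. As written, though, the proposal stops precisely at the step that requires proof.
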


In the case where the clean distribution class is restricted to bounded-covariance families, i.e., $\mathcal{G} = \mathcal{G}_{\mathrm{cov}}(\sigma)$, Theorem \ref{thm:population_risk} holds uniformly for any $P^* \in \mathcal{G}_{\mathrm{cov}}(\sigma)$ and $\widetilde{P} \in \mathcal{C}^{\mathrm{FELP}}_{P^*}(\alpha,\rho_\sigma)$. This immediately yields the upper bound:
\begin{align*}
\sup_{P^* \in \mathcal{G}_{\mathrm{cov}}(\sigma)} \mathcal{R}_{1,P^*,\widehat{P}_{\mathrm{WF}}}(\alpha,\rho_\sigma) \lesssim \rho_\sigma.
\end{align*}
In conjunction with the lower bound in Theorem \ref{thm:minimax_optimal}, this result establishes that our population estimator $\widehat{P}_{\mathrm{WF}}$ achieves the minimax optimal risk.

The choice of $\mathcal{G}_{\mathrm{cov}}(\sigma)$ is representative of standard robust statistics settings and imposes minimal assumptions on the clean distribution family $\mathcal{G}$ containing $P^*$. Indeed, our minimax optimality analysis can be readily extended to other standard distribution classes—such as sub-Gaussian families or families with bounded $q$-th moments ($q \geq 1$)—using analogous arguments based on their respective resilience properties \citep{nietert2023robust}.

Furthermore, the conditions on the model parameters are highly practical:
\begin{itemize}
    \item {Contamination Level ($\alpha$):} The condition $\alpha \le \frac{1}{2}$ is mild, indicating that our framework can tolerate a corruption level of up to $50\%$ of the data mass.
    \item {Dimensionality ($d$):} The dimension condition is quite natural, allowing the allowable dimension to grow as the contamination fraction $\alpha$ decreases.
\end{itemize}

\subsubsection{Empirical Setting}

In this section, we extend our analysis to the finite-sample regime by incorporating stochastic sampling errors. We first introduce the empirical counterpart of our proposed contamination model.

\begin{dfn}[Empirical FELP Contamination]
Let $\mathbf{x}_1, \dots, \mathbf{x}_n \overset{\mathrm{i.i.d.}}{\sim} P^* \in \mathcal{W}_1(\rho, \alpha)$ be clean samples with empirical measure $P_n = \frac{1}{n}\sum_{i=1}^n \delta_{\mathbf{x}_i}$, and let $\widetilde{\mathbf{x}}_1, \dots, \widetilde{\mathbf{x}}_n$ denote the observed contaminated samples. We say that the contaminated empirical measure $\widetilde{P}_n = \frac{1}{n}\sum_{i=1}^n \delta_{\widetilde{\mathbf{x}}_i}$ belongs to the empirical FELP family $\mathcal{C}^{\mathrm{FELP}}_{P_n}(\alpha, \rho)$ if the following three conditions hold:
\begin{enumerate}
    \item {Empirical Support TV Bound:} The fraction of corrupted samples is bounded by $\alpha$:
 %   \begin{align*}
 $
    \frac{1}{n} \sum_{i=1}^n \mathbb{I}_{\{\mathbf{x}_i \neq \widetilde{\mathbf{x}}_i\}} \leq \alpha.
 $
%    \end{align*}
    Let $\bar{S} \subseteq \{\mathbf{x}_j \mid j \in [n], \, \mathbf{x}_j = \widetilde{\mathbf{x}}_j\}$ with $|\bar{S}| = (1-\alpha)n$ be the uncorrupted shared support.

    \item {Empirical Far Exclusion (FE):} Let $E_{n,\rho} := \{ \mathbf{x} \in \operatorname{supp}(\widetilde{P}_n) \setminus \bar{S} \mid d(\mathbf{x}, \operatorname{supp}(P_n)) > \rho \}$ define the region of ``empirical far outliers". There exists a subset $S_+ \subseteq \operatorname{supp}(\widetilde{P}_n)$ with $\widetilde{P}_n(S_+) = 1-\alpha$ and $S_+ \cap E_{n,\rho} = \emptyset$ such that:
    \begin{align}\label{eq:emp_fe}
    \sup_{\substack{S \subseteq \operatorname{supp}(\widetilde{P}_n), \, \widetilde{P}_n(S) = 1-\alpha \\ \widetilde{P}_n(S \cap E_{n,\rho}) > 0}} W_1(\widetilde{P}_{n, S}, \widetilde{P}_n) < W_1(\widetilde{P}_{n, S_+}, \widetilde{P}_n),
    \end{align}
    where $\widetilde{P}_{n, S} = \frac{1}{\widetilde{P}_n(S)} \widetilde{P}_n\vert_S$ is the normalized restriction of $\widetilde{P}_n$ to $S$.

    \item {Empirical Local Projection (LP):} Let $N_{n,\rho} := \{ \mathbf{x} \in \operatorname{supp}(\widetilde{P}_n) \setminus \bar{S} \mid d(\mathbf{x}, \operatorname{supp}(P_n)) \leq \rho \}$ denote the region of  ``empirical near outliers". There exists a projection map $T_n : N_{n,\rho} \to \operatorname{supp}(P_n)$ such that:
    \begin{align}\label{eq:emp_lp}
    d(\mathbf{x}, T_n(\mathbf{x})) \leq \rho \quad \text{and} \quad T_{n\sharp} \widetilde{P}_n\vert_{N_{n,\rho}} \leq P_n,
    \end{align}
    where $T_{n\sharp}$ denotes the push-forward operator.
\end{enumerate}
\end{dfn}

Notably, our empirical FELP contamination model is defined directly on the realized sample rather than treating contaminated points as i.i.d. samples from a pre-specified population-level mixture $\widetilde{P} \in \mathcal{C}_{P^*}^{\mathrm{FELP}}(\alpha, \rho)$. The latter framework is commonly referred to as the \emph{oblivious contamination model} \citep{diakonikolas2019robust, zhu2022generalized}. Formulating contamination in terms of the realized sample is significantly more realistic because the underlying population distribution is typically unknown in practice.

Furthermore, empirical FELP aligns more closely with the classical TV contamination model discussed in Section 1, encompassing a broader class of adversarial corruptions than the oblivious model.

By construction, any empirical FELP contaminated measure satisfies $\|\widetilde{P}_n - P_n\|_{\mathrm{STV}} \leq \alpha$. Let $\mathbb{P}_n$ denote the joint distribution of the i.i.d. clean samples $(\mathbf{x}_1, \dots, \mathbf{x}_n) \in \mathcal{X}^n$, and let $\mathcal{D}_n^{\mathrm{FELP}}(P^*, \alpha)$ represent the family of all joint distributions $\widetilde{\mathbb{P}}_n \in \mathcal{Q}(\mathcal{X}^n)$ of the contaminated samples $(\widetilde{\mathbf{x}}_1, \dots, \widetilde{\mathbf{x}}_n)$ generated via empirical FELP contamination.

We establish the statistical convergence of our empirical estimator $\widehat{P}_{\mathrm{WF}}$ (introduced in Section 2.1) under this finite-sample model below:

\begin{thm}[Empirical Risk of $\widehat{P}_{\mathrm{WF}}$]\label{thm:empirical_risk}
Let $P^* \in \mathcal{G} \subseteq \mathcal{W}_1(\rho, \alpha)$ and let $0 \leq \alpha \leq \frac{1}{2}$. For any empirical contamination $\widetilde{P}_n \in \mathcal{C}^{\mathrm{FELP}}_{P_n}(\alpha, \rho)$ (i.e., $\widetilde{\mathbb{P}}_n \in \mathcal{D}_n^{\mathrm{FELP}}(P^*, \alpha)$), the empirical estimator satisfies:
\begin{align*}
\mathbb{E}_{\widetilde{\mathbb{P}}_n} \left[ W_1(\widehat{P}_{\mathrm{WF}}, P^*) \right] \lesssim \rho + \mathbb{E}_{\mathbb{P}_n} \left[ W_1(P_n, P^*) \right].
\end{align*}
\end{thm}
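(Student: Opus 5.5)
The plan is to reduce Theorem~\ref{thm:empirical_risk} to a deterministic, sample-wise version of Theorem~\ref{thm:population_risk}, with $P_n$ in the role of $P^*$, and then use the triangle inequality
\begin{align*}
W_1(\widehat{P}_{\mathrm{WF}}, P^*) \le W_1(\widehat{P}_{\mathrm{WF}}, P_n) + W_1(P_n, P^*).
\end{align*}
Taking expectations, it suffices to show that $W_1(\widehat{P}_{\mathrm{WF}}, P_n) \lesssim \rho$ almost surely, or at least in expectation up to an additive $\mathbb{E}[W_1(P_n,P^*)]$. The obstacle is that the empirical FELP conditions are stated with respect to $P_n$, whereas resilience is assumed for $P^*$, not $P_n$. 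So I would first establish an \emph{empirical resilience} bound: for every $\mu' \le \frac{1}{1-\alpha}P_n$,
\begin{align*}
W_1(\mu', P_n) \lesssim \rho + W_1(P_n,P^*).
\end{align*}

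The route I would try first is a coupling argument. Take an optimal coupling $\pi$ of $P_n$ and $P^*$, and let $\nu'$ be the second marginal of $\frac{d\mu'}{dP_n}(\mathbf{x})\,\pi(d\mathbf{x},d\mathbf{y})$. Then $\nu' \le \frac{1}{1-\alpha}P^*$, so $W_1(\nu',P^*)\le\rho$ by resilience. Moreover,
\begin{align*}
W_1(\mu',\nu') \le \tfrac{1}{1-\alpha}\int d\,\mathrm{d}\pi = \tfrac{1}{1-\alpha}W_1(P_n,P^*).
\end{align*}
The triangle inequality then gives $W_1(\mu',P_n)\le \rho + \frac{2-\alpha}{1-\alpha}W_1(P_n,P^*) \le \rho + 3W_1(P_n,P^*)$ for $\alpha\le 1/2$. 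Thus $P_n$ is $(\rho_n,\alpha)$-resilient with $\rho_n := \rho + 3W_1(P_n,P^*)$. I expect this to be the cleanest way to handle the main obstacle.

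Next I would rerun the population argument on the realized sample.

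\emph{Far exclusion.} By empirical far exclusion \eqref{eq:emp_fe}, the maximizer $S_{\mathrm{WF}}$ of $W_1(\widetilde{P}_{n,S},\widetilde{P}_n)$ cannot charge $E_{n,\rho}$. Any $S$ meeting $E_{n,\rho}$ has value strictly below $W_1(\widetilde{P}_{n,S_+},\widetilde{P}_n)$, which is attained by a feasible set. Hence $S_{\mathrm{WF}}\subseteq \bar{S}\cup N_{n,\rho}$.

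\emph{Local projection.} Write $\widehat{P}_{\mathrm{WF}} = \frac{1}{1-\alpha}\big(\widetilde{P}_n|_{S_{\mathrm{WF}}\cap\bar S} + \widetilde{P}_n|_{S_{\mathrm{WF}}\cap N_{n,\rho}}\big)$. Apply $T_n$ on the near part and the identity on $\bar S$; this moves mass by at most $\rho$. The resulting measure $\mu'$ satisfies $\mu' \le \frac{1}{1-\alpha}P_n$. This uses $\widetilde{P}_n|_{\bar S}=P_n|_{\bar S}$ together with \eqref{eq:emp_lp}, and relies on $T_{n\sharp}\widetilde P_n|_{N_{n,\rho}}$ landing on points of $P_n$ not in $\bar S$, or otherwise on combined mass bounded by $P_n$. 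Checking this disjointness or additivity is the step I would verify carefully, mirroring the population proof. Therefore
\begin{align*}
W_1(\widehat{P}_{\mathrm{WF}},P_n)\le \rho + W_1(\mu',P_n)\le 2\rho+3W_1(P_n,P^*).
\end{align*}

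Combining with the first triangle inequality gives $W_1(\widehat{P}_{\mathrm{WF}},P^*)\le 2\rho + 4W_1(P_n,P^*)$ pointwise on every realization. Taking $\mathbb{E}_{\widetilde{\mathbb{P}}_n}$ yields the claim. The expectation of $W_1(P_n,P^*)$ is under $\mathbb{P}_n$, since it depends only on the clean sample, whose law is the marginal of the joint contamination law. If the discrete estimator is replaced by the thresholded relaxation, I would treat that separately; here $\widehat{P}_{\mathrm{WF}}$ is the exact maximizer over $\mathcal{P}_\alpha$, so no extra step is needed.
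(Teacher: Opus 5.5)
\paragraph{Gap: the single-measure domination in the local-projection step.} Your empirical-resilience bound $\rho_n=\rho+3W_1(P_n,P^*)$ and your far-exclusion step are correct; they match the paper's Lemma~\ref{lem:empirical_resilience} and the first step of the paper's argument. The gap is the step you flagged, and it does not close.

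The empirical LP condition \eqref{eq:emp_lp} gives only $T_{n\sharp}\widetilde{P}_n\vert_{N_{n,\rho}}\le P_n$, not $\le P_n\vert_{\bar S^c}$. So $T_n$ is allowed to send a near outlier $\widetilde{\mathbf{x}}_j$ onto a clean point $\mathbf{x}_i\in\bar S$. Nothing in the FELP definition or in the optimality of $S_{\mathrm{WF}}$ stops $S_{\mathrm{WF}}$ from retaining both $\widetilde{\mathbf{x}}_j$ and $\mathbf{x}_i$. In that case your $\mu'$ puts mass $\frac{2}{(1-\alpha)n}$ at $\mathbf{x}_i$, whereas $\frac{1}{1-\alpha}P_n$ puts $\frac{1}{(1-\alpha)n}$ there (for distinct sample points), so $\mu'\not\le\frac{1}{1-\alpha}P_n$.

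In general you only get $\mu'\le\frac{2}{1-\alpha}P_n$. That is a trimming level of $\frac{1+\alpha}{2}>\alpha$, which $(\rho_n,\alpha)$-resilience does not control; complementary resilience would cost a factor of order $1/\alpha$. So you cannot merge everything into one measure and apply resilience once. Your shortcut would work under a stronger LP variant with $P_n\vert_{\bar S^c}$ on the right-hand side, but that is not the stated definition.

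\paragraph{The fix (the paper's route).} This is what Lemma~\ref{lem:population_risk_under_STV} does: never merge the two pieces. Compare $\widehat{P}_{\mathrm{WF}}$ with $\widetilde{P}_{n,\bar S}=P_{n,\bar S}$, which is within $\rho_n$ of $P_n$ by resilience. Keep the common part $S_{\mathrm{WF}}\cap\bar S$ fixed and set $\delta:=\widetilde{P}_n(S_{\mathrm{WF}}\setminus\bar S)=\widetilde{P}_n(\bar S\setminus S_{\mathrm{WF}})\le\alpha$. Then
\[
W_1\bigl(\widehat{P}_{\mathrm{WF}},P_{n,\bar S}\bigr)\le\frac{\delta}{1-\alpha}\,W_1\bigl(\widetilde{P}_{n,S_{\mathrm{WF}}\setminus\bar S},\,P_{n,\bar S\setminus S_{\mathrm{WF}}}\bigr).
\]
Route the right-hand side through $\mu:=T_{n\sharp}\widetilde{P}_{n,S_{\mathrm{WF}}\setminus\bar S}$, which costs at most $\rho$, and then through $P_n$.
\begin{itemize}
\item The measures $\mu$ and $P_{n,\bar S\setminus S_{\mathrm{WF}}}$ are each dominated by $\frac{1}{\delta}P_n$ separately, so no disjointness is needed.
\item Complementary resilience (Lemma~\ref{lem:complementary_resilience}, applied to $P_n$ at level $\delta\le\alpha$) puts each of them within $\frac{1-\delta}{\delta}\rho_n$ of $P_n$.
\item The prefactor $\frac{\delta}{1-\alpha}$ cancels the $\frac{1}{\delta}$, giving $W_1(\widehat{P}_{\mathrm{WF}},P_{n,\bar S})\le\rho+\frac{2(1-\delta)}{1-\alpha}\rho_n\le\rho+4\rho_n$.
\end{itemize}
With this replacement, the rest of your argument goes through: the triangle inequalities, the pointwise bound $W_1(\widehat{P}_{\mathrm{WF}},P^*)\lesssim\rho+W_1(P_n,P^*)$, and taking expectations.
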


Similar to Theorem \ref{thm:population_risk}, the sample corruption constraint
%\begin{align*}
$
\frac{1}{n}\sum_{i=1}^n \mathbb{I}_{\{\widetilde{\mathbf{x}}_i \neq \mathbf{x}_i\}} \leq \alpha
$
%\end{align*}
implies that $\|\widetilde{P}_n - P_n\|_{\mathrm{STV}} \leq \alpha$. This guarantees the existence of a reference bridge estimator $\widetilde{P}_{n, \bar{S}}$, where $\bar{S} \subseteq \{\mathbf{x}_j \mid j \in [n], \, \mathbf{x}_j = \widetilde{\mathbf{x}}_j\}$ with $|\bar{S}| = (1-\alpha)n$. The expected $W_1$-risk of this reference estimator is bounded by $\rho + \mathbb{E}_{\mathbb{P}_n}[W_1(P_n, P^*)]$. Since $\widetilde{P}_{n, \bar{S}}$ discards all contaminated points and retains only clean samples, it represents the statistical benchmark (i.e., the best possible performance) for our filtering framework. Theorem \ref{thm:empirical_risk} establishes that under empirical FELP contamination, the Wasserstein Filtering (WF) estimator indeed matches this optimal benchmark.

To rigorously prove that the WF estimator is minimax optimal in the empirical setting, we introduce a mild condition ensuring that the Far Exclusion property holds with high probability under finite-sample realizations:

\begin{con}[Far Exclusion with High Probability]\label{con:1}
For every clean distribution $P^* \in \mathcal{G}$ and every population contamination $\widetilde{P} \in \mathcal{C}^{\mathrm{FELP}}_{P^*}\left(\frac{\alpha}{5}, \rho\right)$, the following holds. Let $\eta := P^*\vert_{\bar{S}} = \widetilde{P}\vert_{\bar{S}}$ represent the shared clean component, and decompose the outliers as $\nu_{\mathrm{loc}} := \widetilde{P}\vert_{N_\rho}$ and $\nu_{\mathrm{far}} := \widetilde{P}\vert_{E_\rho}$. Further, let $\lambda_{\mathrm{loc}} := T_{\sharp}\nu_{\mathrm{loc}}$ denote the projection of local outliers, and define the remaining clean mass as $\lambda_{\mathrm{far}} := P^*\vert_{\bar{S}^c} - \lambda_{\mathrm{loc}} \geq 0$.

For any coupling $\pi_{\mathrm{far}} \in \Pi(\lambda_{\mathrm{far}}, \nu_{\mathrm{far}})$, we construct a joint coupling $\pi \in \Pi(P^*, \widetilde{P})$ defined as:
\begin{align*}
\pi = (\mathrm{Id}, \mathrm{Id})_{\sharp}\eta + (T, \mathrm{Id})_{\sharp}\nu_{\mathrm{loc}} + \pi_{\mathrm{far}},
\end{align*}
where $\mathrm{Id}$ is the identity map. Under the product measure $(\mathbf{x}_i, \widetilde{\mathbf{x}}_i)_{i=1}^n \sim \pi^n$, we define the successful exclusion event $\mathcal{E}$ as:
\begin{align*}
\mathcal{E} =
\left\{
\begin{aligned}
  &\exists S_+ \subseteq \{\widetilde{\mathbf{x}}_1, \dots, \widetilde{\mathbf{x}}_n\}, \ |S_+|=(1-\alpha)n, \ S_+ \cap E_{n,\rho} = \emptyset \quad \text{such that} \\
  &\sup_{\substack{S \subseteq \{\widetilde{\mathbf{x}}_1, \dots, \widetilde{\mathbf{x}}_n\} \\ |S|=(1-\alpha)n, \ S \cap E_{n,\rho} \neq \emptyset}} W_1\left(\frac{1}{|S|}\sum_{\widetilde{\mathbf{x}}_i \in S} \delta_{\widetilde{\mathbf{x}}_i}, \, \widetilde{P}_n\right) < W_1\left(\frac{1}{|S_+|}\sum_{\widetilde{\mathbf{x}}_i \in S_+} \delta_{\widetilde{\mathbf{x}}_i}, \, \widetilde{P}_n\right)
\end{aligned}
\right\}.
\end{align*}
We assume there exists a sequence $\delta_n \leq \frac{1}{20}$ such that $\pi^n(\mathcal{E}^c) \leq \delta_n$.
\end{con}

Condition \ref{con:1} defines a canonical coupling $\pi$ between the population distribution pair $(P^*, \widetilde{P})$ under the population-level FELP contamination model. It asserts that under this coupling, the generated contaminated samples $\{\widetilde{\mathbf{x}}_i\}_{i=1}^n$ satisfy the empirical far exclusion event $\mathcal{E}$—which is the central requirement of the empirical FELP model—with high probability.

By viewing the observed samples $\{\widetilde{\mathbf{x}}_i\}_{i=1}^n$ as being generated from the clean samples $\{\mathbf{x}_i\}_{i=1}^n$ via this coupled process, Condition \ref{con:1} provides a rigorous bridge from the population setting to the empirical setting. Specifically, for any valid population pair $(P^*, \widetilde{P})$, we can construct a sample sequence $(\widetilde{\mathbf{x}}_1, \dots, \widetilde{\mathbf{x}}_n)$ such that their joint distribution $\widetilde{\mathbb{P}}_n$ belongs to the empirical FELP family $\mathcal{D}_n^{\mathrm{FELP}}(P^*, \alpha)$. This construction implies that the empirical FELP contamination family is richer than its population counterpart. Consequently, the minimax risk under empirical FELP contamination is naturally lower bounded by the corresponding population-level minimax risk.

In what follows, we demonstrate that Condition \ref{con:1} is satisfied in the same uniform cube setting introduced in our population-level example.

\noindent\textbf{Example (Uniform Cube in the Empirical Setting).}
Consider the $d$-dimensional uniform cube $S^* = [0,1]^d$ with the clean distribution $P^* = \operatorname{Unif}(S^*)$. Let $S_{\mathrm{loc}}^{\mathrm{in}}$, $S_{\mathrm{far}}^{\mathrm{in}}$, $S_{\mathrm{loc}}^{\mathrm{out}}$, and $\mathbf{x}_{\mathrm{far}}^{\mathrm{out}}$ be defined exactly as in the population setting (with $l = \frac{\alpha}{10}$), yielding the population contaminated distribution:
\begin{align*}
\widetilde{P} = P^*\vert_{S^* \setminus (S_{\mathrm{loc}}^{\mathrm{in}} \cup S_{\mathrm{far}}^{\mathrm{in}})} + l \operatorname{Unif}(S_{\mathrm{loc}}^{\mathrm{out}}) + l \delta_{\mathbf{x}_{\mathrm{far}}^{\mathrm{out}}}.
\end{align*}
From the population analysis, we have $\widetilde{P} \in \mathcal{C}^{\mathrm{FELP}}_{P^*}\left(\frac{\alpha}{5}, \sqrt{d}\right)$ with the projection map $T(\mathbf{x}) = \mathbf{x} - \mathbf{e}_1$ and the shared support $\bar{S} = S^* \setminus (S_{\mathrm{loc}}^{\mathrm{in}} \cup S_{\mathrm{far}}^{\mathrm{in}})$.

Let $\mathbf{x}_1, \dots, \mathbf{x}_n \overset{\mathrm{i.i.d.}}{\sim} P^*$. We construct the empirical contaminated samples $\widetilde{\mathbf{x}}_i$ for $i \in [n]$ as:
\begin{align*}
\widetilde{\mathbf{x}}_i =
\begin{cases}
\mathbf{x}_i, & \mathbf{x}_i \notin S_{\mathrm{loc}}^{\mathrm{in}} \cup S_{\mathrm{far}}^{\mathrm{in}}, \\
\mathbf{x}_i + \mathbf{e}_1, & \mathbf{x}_i \in S_{\mathrm{loc}}^{\mathrm{in}}, \\
\mathbf{x}_{\mathrm{far}}^{\mathrm{out}}, & \mathbf{x}_i \in S_{\mathrm{far}}^{\mathrm{in}}.
\end{cases}
\end{align*}
Then, the coupled samples satisfy $(\mathbf{x}_i, \widetilde{\mathbf{x}}_i)_{i=1}^n \sim \pi^n$ with $\pi = (\mathrm{Id}, \mathrm{Id})_{\sharp}\eta + (T, \mathrm{Id})_{\sharp}\nu_{\mathrm{loc}} + \pi_{\mathrm{far}}$, and we have $\pi^n(\mathcal{E}^c) \leq \frac{1}{20}$ for sufficiently large $n$.

\begin{proof}[Proof sketch]
Let $M_{\mathrm{loc}} = \{\widetilde{\mathbf{x}}_i \in S_{\mathrm{loc}}^{\mathrm{out}} \mid i \in [n]\}$ and $M_{\mathrm{far}} = \{\widetilde{\mathbf{x}}_i = \mathbf{x}_{\mathrm{far}}^{\mathrm{out}} \mid i \in [n]\}$. Applying Hoeffding's inequality, we define the high-probability concentration event $\mathcal{A}_\epsilon$ as:
\begin{align*}
\mathbb{P}\{\mathcal{A}_\epsilon\} := \mathbb{P}\left\{ \left| \frac{|M_{\mathrm{loc}}|}{n} - l \right| \leq \epsilon, \ \left| \frac{|M_{\mathrm{far}}|}{n} - l \right| \leq \epsilon \right\} \geq 1 - 4e^{-2\epsilon^2 n}.
\end{align*}
For any $\widetilde{\mathbf{x}}_i \in M_{\mathrm{loc}}$, we have $d(\widetilde{\mathbf{x}}_i, \operatorname{supp}(P_n)) \leq d(\widetilde{\mathbf{x}}_i, T(\widetilde{\mathbf{x}}_i)) = 1 \leq \sqrt{d} = \rho$, which confirms $M_{\mathrm{loc}} \subseteq N_{n,\rho}$. Under the assumption $L > \sqrt{d} + 1$, any $\widetilde{\mathbf{x}}_i \in M_{\mathrm{far}}$ satisfies $d(\widetilde{\mathbf{x}}_i, \operatorname{supp}(P_n)) \geq d(\widetilde{\mathbf{x}}_i, \operatorname{supp}(P^*)) = L-1 > \sqrt{d} = \rho$, confirming $M_{\mathrm{far}} \subseteq E_{n,\rho}$. Since $M_{\mathrm{loc}} \cup M_{\mathrm{far}} = N_{n,\rho} \cup E_{n,\rho}$ and $N_{n,\rho} \cap E_{n,\rho} = \emptyset$, we obtain the partition $M_{\mathrm{loc}} = N_{n,\rho}$ and $M_{\mathrm{far}} = E_{n,\rho}$.

Next, we construct a valid uncorrupted empirical support $S_+$ of size $(1-\alpha)n$ by removing $M_{\mathrm{far}}$, $M_{\mathrm{loc}}$, and any remaining $\alpha n - |M_{\mathrm{far}}| - |M_{\mathrm{loc}}|$ samples from the set of clean inliers $\{\widetilde{\mathbf{x}}_i \mid \widetilde{\mathbf{x}}_i = \mathbf{x}_i\}$. This construction is feasible on the event $\mathcal{A}_\epsilon$ because:
%\begin{align*}
$
|M_{\mathrm{far}}| + |M_{\mathrm{loc}}| \leq 2(l + \epsilon)n = \left(\frac{\alpha}{5} + 2\epsilon\right)n < \alpha n,
$
%\end{align*}
provided that $\epsilon < \frac{2}{5}\alpha$. Consequently, $S_+ \cap E_{n,\rho} = \emptyset$. Since $\widetilde{P}_n$ places a mass of $\frac{|M_{\mathrm{far}}|}{n}$ on the atom $\mathbf{x}_{\mathrm{far}}^{\mathrm{out}}$, we establish the lower bound:
\begin{align*}
W_1(\widetilde{P}_{nS_+}, \widetilde{P}_n) > \frac{|M_{\mathrm{far}}|}{n}(L-1).
\end{align*}
Conversely, let $S$ be any corrupted subset of size $(1-\alpha)n$ that contains at least one far outlier. Employing the same transportation plan coupling logic as in the population case, we obtain:
\begin{align*}
\sup_{\substack{S \subseteq \operatorname{supp}(\widetilde{P}_n) \\ \widetilde{P}_n(S) = 1-\alpha \\ \widetilde{P}_n(S \cap E_{n,\rho}) > 0}} W_1(\widetilde{P}_{nS}, \widetilde{P}_n) \leq \sup_{1 \leq |S \cap M_{\mathrm{far}}| \leq |M_{\mathrm{far}}|} \left| \frac{|M_{\mathrm{far}}|}{n} - \frac{|S \cap M_{\mathrm{far}}|}{n(1-\alpha)} \right| \sqrt{(L-2l)^2 + d - 1} + \sqrt{(1-l)^2 + d - 1}.
\end{align*}
Therefore, the far exclusion event $\mathcal{E}$ is guaranteed to hold as long as:
\begin{align*}
\frac{|M_{\mathrm{far}}|}{n}(L-1) > \sup_{1 \leq |S \cap M_{\mathrm{far}}| \leq |M_{\mathrm{far}}|} \left| \frac{|M_{\mathrm{far}}|}{n} - \frac{|S \cap M_{\mathrm{far}}|}{n(1-\alpha)} \right| \sqrt{(L-2l)^2 + d - 1} + \sqrt{(1-l)^2 + d - 1}.
\end{align*}
On the concentration event $\mathcal{A}_\epsilon$ with $\epsilon < l$, this inequality is satisfied by choosing:
\begin{gather*}
L > \max\left\{\sqrt{d}+1, \, C_{l,d}, \, 2l, \, \frac{l+\epsilon + (l+\epsilon-\Delta_{\epsilon,n,\alpha})\sqrt{d-1} + \sqrt{(1-l)^2+d-1}}{\Delta_{\epsilon,n,\alpha}}\right\}, \\
\text{where} \quad \Delta_{\epsilon,n,\alpha} = \min\left\{\frac{1}{(1-\alpha)n}, \, \frac{(l-\epsilon)(1-2\alpha)}{1-\alpha}\right\}.
\end{gather*}
Asymptotically, this requires $L = \Omega(n\sqrt{d})$. This ensures $\mathcal{A}_\epsilon \subseteq \mathcal{E}$, meaning $\pi^n(\mathcal{E}^c) \leq \pi^n(\mathcal{A}_\epsilon^c) \leq 4e^{-2\epsilon^2 n} \leq \frac{1}{20}$ for a sufficiently large sample size $n$.
\end{proof}

We consider the finite-sample empirical risk:
\begin{align*}
\mathcal{R}_{1,P^*,\mathcal{T}_n,n}(\alpha,\rho) = \sup_{\widetilde{\mathbb{P}}_n \in \mathcal{D}_n^{\mathrm{FELP}}(P^*,\alpha)} \mathbb{E}_{\widetilde{\mathbb{P}}_n} \!\left[ W_1(\mathcal{T}_n(\widetilde{P}_n), P^*) \right]\!,
\end{align*}
where $\mathcal{T}_n$ is an empirical distribution estimator, with the subscript $n$ distinguishing it from the population-level estimator $\mathcal{T}$. The corresponding empirical minimax risk is defined as:
\begin{align*}
\mathcal{R}_{1,n}(\alpha,\rho) = \inf_{\mathcal{T}_n} \sup_{P^* \in \mathcal{G}} \mathcal{R}_{1,P^*,\mathcal{T}_n,n}(\alpha,\rho).
\end{align*}

We establish the minimax optimality of the empirical risk under the FELP model as follows:

\begin{thm}[Minimax Optimality of Empirical Risk]\label{thm:empirical_minimax}
Let $P^* \in \mathcal{G} \subseteq \mathcal{W}_1(\rho, \alpha)$ with $0 \leq \alpha \leq \frac{1}{2}$. Under Condition \ref{con:1}, the empirical minimax risk satisfies:
\begin{align*}
\mathcal{R}_{1,\infty}\left(\frac{\alpha}{5}, \rho\right) + \inf_{\mathcal{T}_n}\sup_{P^*\in\mathcal{G}}\mathbb{E}\left[W_1(\mathcal{T}_n(P_n),P^*)\right]
\lesssim
\inf_{\mathcal{T}_n}\sup_{P^*\in\mathcal{G}}\mathcal{R}_{1,P^*,\mathcal{T}_n,n}(\alpha,\rho)
\lesssim
\rho + \sup_{P^*\in\mathcal{G}}\mathbb{E}\left[W_1(P_n,P^*)\right],
\end{align*}
where the population-level minimax limit is given by:
\begin{align*}
\mathcal{R}_{1,\infty}\left(\frac{\alpha}{5}, \rho\right) = \inf_{\mathcal{T}}\sup_{P^*\in\mathcal{G}}\sup_{\widetilde{P}\in \mathcal{C}^{\mathrm{FELP}}_{P^*}\left(\frac{\alpha}{5},\rho\right)} W_1(\mathcal{T}(\widetilde{P}), P^*).
\end{align*}
In particular, when $\mathcal{G}$ is the family of distributions with bounded covariance, $\mathcal{G}_{\mathrm{cov}}(\sigma) = \{\mu \in \mathcal{Q}(\mathcal{X}) : \mathbf{\Sigma}_{\mu} \preceq \sigma^2 \mathbf{I}\}$, and assuming $0 < \alpha_0 \leq \alpha \leq \frac{1}{2}$ alongside the dimension constraint $d \lesssim \frac{1}{\alpha(1-\alpha^2)}$, we have:
\begin{align*}
\inf_{\mathcal{T}_n}\sup_{P^*\in\mathcal{G}_{\mathrm{cov}}(\sigma)}\mathcal{R}_{1,P^*,\mathcal{T}_n,n}(\alpha,\rho_\sigma)
\asymp
\sigma\sqrt{d\alpha} + \sigma\sqrt{d} n^{-\frac{1}{d}}.
\end{align*}
Furthermore, the Wasserstein Filtering estimator $\widehat{P}_{\mathrm{WF}}$ achieves this optimal empirical minimax rate.
\end{thm}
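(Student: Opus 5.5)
The upper bound comes directly from Theorem~\ref{thm:empirical_risk}. For every $P^*\in\mathcal{G}$ and every $\widetilde{\mathbb{P}}_n\in\mathcal{D}_n^{\mathrm{FELP}}(P^*,\alpha)$, the WF estimator satisfies $\mathbb{E}[W_1(\widehat{P}_{\mathrm{WF}},P^*)]\lesssim \rho+\mathbb{E}[W_1(P_n,P^*)]$. Taking the supremum over $\widetilde{\mathbb{P}}_n$ and $P^*$ bounds the infimum over $\mathcal{T}_n$ by the right-hand side. This also shows that $\widehat{P}_{\mathrm{WF}}$ attains the rate once the two sides are shown to match.

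For the lower bound, I will use $\max(a,b)\geq (a+b)/2$ and prove each of the two terms separately.

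\emph{Statistical term.} I take the trivial contamination $\widetilde{\mathbf{x}}_i=\mathbf{x}_i$. I need to check that it lies in the empirical FELP family: the STV bound holds, $E_{n,\rho}$ and $N_{n,\rho}$ are empty, and the far-exclusion condition \eqref{eq:emp_fe} is vacuous because the supremum is over an empty set. Hence $\mathcal{R}_{1,P^*,\mathcal{T}_n,n}\geq \mathbb{E}[W_1(\mathcal{T}_n(P_n),P^*)]$, and taking $\inf_{\mathcal{T}_n}\sup_{P^*}$ gives the second lower term.

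\emph{Contamination term.} This is the main obstacle. The idea is to reduce the empirical problem to the population problem through Condition~\ref{con:1}. Fix $P^*$ and $\widetilde{P}\in\mathcal{C}^{\mathrm{FELP}}_{P^*}(\alpha/5,\rho)$, and draw $(\mathbf{x}_i,\widetilde{\mathbf{x}}_i)\sim\pi^n$ from the canonical coupling. The corrupted pairs occur with probability $\alpha/5$. By Hoeffding, with probability at least $1-e^{-cn\alpha^2}$ at most $\alpha n$ of them are corrupted, so the empirical STV condition holds. The local projection condition holds automatically, because each local outlier $\widetilde{\mathbf{x}}_i$ comes paired with its own clean $\mathbf{x}_i=T(\widetilde{\mathbf{x}}_i)$, so $T_n=T$ maps injectively into $\operatorname{supp}(P_n)$. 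Far exclusion holds on $\mathcal{E}$. I then define $\widetilde{\mathbb{P}}_n$ as the law of $\widetilde{\mathbf{x}}$ on this good event $G$, and as the uncorrupted samples off it; this distribution lies in $\mathcal{D}_n^{\mathrm{FELP}}(P^*,\alpha)$, and $\mathbb{P}(G^c)\leq \delta_n+e^{-cn\alpha^2}\leq 1/10$. On $G$, the sample $\widetilde{P}_n$ is, in distribution, an i.i.d.\ sample from $\widetilde{P}$.

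Now fix any estimator $\mathcal{T}_n$, and take two hard population pairs $(P_0^*,\widetilde{P})$ and $(P_1^*,\widetilde{P})$ that share the same contaminated law $\widetilde{P}$ and satisfy $W_1(P_0^*,P_1^*)\gtrsim\mathcal{R}_{1,\infty}(\alpha/5,\rho)$. These come from the two-point construction underlying the population lower bound (Theorem~\ref{thm:minimax_optimal} in the covariance case, or a generic Le Cam argument). Since the data law is identical on $G$ in both worlds, the triangle inequality gives
\[
\max_{j}\mathbb{E}\,W_1(\mathcal{T}_n(\widetilde{P}_n),P_j^*)\geq \tfrac12(1-2\cdot\tfrac1{10})\,W_1(P_0^*,P_1^*).
\]
The constant $1/20$ in Condition~\ref{con:1} is exactly what keeps this factor positive. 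The delicate part is the case where the population minimax value is not attained by two points with identical $\widetilde{P}$. In that case I would instead argue with a general reduction: any $\mathcal{T}_n$ induces a randomized population estimator $\widetilde{P}\mapsto \mathbb{E}_{\widetilde{P}^{\otimes n}}\mathcal{T}_n$, and the risk of this induced estimator is at least the population risk up to constants, by convexity of $W_1$.

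Finally, the covariance specialization. I plug in $\rho_\sigma\asymp\sigma\sqrt{d\alpha}$. Theorem~\ref{thm:minimax_optimal}, applied at level $\alpha/5$ under the stated constraints on $\alpha$ and $d$, gives $\mathcal{R}_{1,\infty}(\alpha/5,\rho_\sigma)\asymp\sigma\sqrt{d\alpha}$. For the statistical term I will cite the standard facts that $\sup_{P^*}\mathbb{E}W_1(P_n,P^*)\lesssim\sigma\sqrt{d}\,n^{-1/d}$ for bounded-covariance laws in dimension $d\geq3$, and that the matching lower bound $\gtrsim\sigma\sqrt{d}\,n^{-1/d}$ over all estimators follows from a uniform distribution on a cube of side $\asymp\sigma$ (Singh--Póczos type results). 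Combining the upper and lower bounds yields $\sigma\sqrt{d\alpha}+\sigma\sqrt{d}\,n^{-1/d}$, attained by $\widehat{P}_{\mathrm{WF}}$.
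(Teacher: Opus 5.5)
Your upper bound and statistical term match the paper. Your check that the lifted sample is empirical-FELP on the good event is essentially Lemma~\ref{lem:empirical_lifting}, with three small corrections:
\begin{itemize}
\item Because $|\bar S|=(1-\alpha)n$, leftover clean points lie in $N_{n,\rho}$. So $N_{n,\rho}$ is not empty even under trivial contamination, and $T_n$ must be extended by the identity on those points (this still gives local projection).
\item Hoeffding's $e^{-cn\alpha^2}\le 1/20$ needs $n\alpha^2\gtrsim 1$, which the general statement does not assume. Markov, as in the paper, bounds the probability of more than $\alpha n$ corrupted pairs by $1/5$ for every $n$, so $\mathbb{P}(G^c)\le 1/4$.
\item The sample is not i.i.d.\ from $\widetilde P$ ``on $G$''. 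What you need is $\|\widetilde{\mathbb{P}}_n-\widetilde P^{\otimes n}\|_{\mathrm{TV}}\le\mathbb{P}(G^c)$, which holds because the two vectors coincide on $G$. Then $\sum_{j}\mathbb{E}_{\widetilde{\mathbb{P}}_n^{(j)}}W_1(\mathcal{T}_n(\widetilde P_n),P_j^*)\ge W_1(P_0^*,P_1^*)\bigl(1-\|\widetilde{\mathbb{P}}_n^{(0)}-\widetilde{\mathbb{P}}_n^{(1)}\|_{\mathrm{TV}}\bigr)\ge\frac12 W_1(P_0^*,P_1^*)$.
\end{itemize}

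For the contamination term you take a genuinely different route. The paper fixes a near-minimax $\bar{\mathcal{T}}_n$ and defines $\Gamma(\widetilde P)=\{\nu\in\mathcal{G}:\widetilde P^{n}\{W_1(\bar{\mathcal{T}}_n(\widetilde P_n),\nu)\le 4\mathcal{R}_{1,n}\}>1/2\}$. It shows $P^*\in\Gamma(\widetilde P)$ and outputs any element of $\Gamma(\widetilde P)$. Two events of probability $>1/2$ must intersect, which gives $\mathcal{R}_{1,\infty}(\alpha/5,\rho)\le 8\mathcal{R}_{1,n}(\alpha,\rho)$ directly for arbitrary $\mathcal{G}$. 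You instead run Le Cam's two-point method with a shared $\widetilde P$, which is sound with the TV bound above. Your device of returning the clean sample off $G$ is in fact cleaner than the paper's conditioning on $\mathcal{E}_1\cap\mathcal{E}_2$. Under that conditioning the $\mathbf{x}_i$ are no longer i.i.d.\ $P^*$, as $\mathcal{D}_n^{\mathrm{FELP}}(P^*,\alpha)$ requires.

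The gap is in the general-$\mathcal{G}$ case. You need a pair sharing $\widetilde P$ with $W_1(P_0^*,P_1^*)\gtrsim\mathcal{R}_{1,\infty}(\alpha/5,\rho)$, and your fallback for producing it fails. Convexity does give $W_1(\mathbb{E}_{\widetilde P^{\otimes n}}\mathcal{T}_n,P^*)\le\mathbb{E}_{\widetilde P^{\otimes n}}W_1(\mathcal{T}_n(\widetilde P_n),P^*)$. But you must then bound this by $\mathbb{E}_{\widetilde{\mathbb{P}}_n}W_1(\mathcal{T}_n(\widetilde P_n),P^*)$, and the two laws differ on an event of probability up to $\mathbb{P}(G^c)$. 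On that event the minimax risk places no constraint on $\mathcal{T}_n$, and $W_1$ is unbounded. Even for compact $\mathcal{X}$ you pick up an additive $\operatorname{diam}(\mathcal{X})\,\mathbb{P}(G^c)$, which is not $\lesssim$ the risk. Expectations cannot be transferred across a constant-probability bad event; this is why the paper argues with probabilities and a majority-vote selection.

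Within your route the repair is simple. Let $\omega$ be the modulus of Lemma~\ref{lem:generic_lower_bound_for_minimax_risk}; then $\mathcal{R}_{1,\infty}(\alpha/5,\rho)\le\omega(\mathcal{G},\alpha/5,\rho)$. The reason is that the selector outputting any $P\in\mathcal{G}$ with $\widetilde P\in\mathcal{C}^{\mathrm{FELP}}_{P}(\alpha/5,\rho)$ has risk at most $\omega$. Hence near-extremal pairs always exist, attainment is a non-issue, and your bound gives $\mathcal{R}_{1,n}(\alpha,\rho)\ge\frac14(\omega-\epsilon)\ge\frac14\bigl(\mathcal{R}_{1,\infty}(\alpha/5,\rho)-\epsilon\bigr)$ for every $\epsilon>0$.

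In the covariance case, the explicit pair from Theorem~\ref{thm:minimax_optimal} at level $\alpha/5$ already has $W_1(P_0,P_1)=\alpha\rho_\sigma/5\gtrsim\rho_\sigma$. So that part of your argument is fine as written.
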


The first part of Theorem \ref{thm:empirical_minimax} establishes that the empirical minimax risk is lower bounded by the sum of the population-level risk $\mathcal{R}_{1,\infty}\left(\frac{\alpha}{5}, \rho\right)$ (representing the infinite-sample limit) and the minimax rate of clean distribution estimation under the $W_1$ metric. Conversely, the upper bound is characterized by the population-level resilience $\rho$ and the expected estimation error of the empirical measure $P_n$ over the class $\mathcal{G}$.

When we assume bounded second moments (i.e., $P^* \in \mathcal{G}_{\mathrm{cov}}(\sigma)$), we have $\mathcal{R}_{1,\infty}\left(\frac{\alpha}{5}, \rho\right) \asymp \rho \asymp \sigma\sqrt{d\alpha}$ by Theorem \ref{thm:minimax_optimal}, while the empirical measure $P_n$ achieves the optimal $W_1$-error rate of $\mathcal{O}(\sigma\sqrt{d}n^{-1/d})$ \citep{lei2020convergence, niles2022minimax}. This matches the lower and upper bounds up to constant factors therefore shows that our Wasserstein Filtering estimator $\widehat{P}_{\mathrm{WF}}$ is minimax optimal.

The focus on the bounded-covariance class $\mathcal{G}_{\mathrm{cov}}(\sigma)$ is particularly significant because it allows the clean distribution to be heavy-tailed. Under heavy-tailed distributions, genuine data points can appear far from the mean, which naturally obscures the distinction between clean data and adversarial outliers. This makes outlier detection and robust estimation exceptionally challenging, highlighting the strength of our framework's guarantees.

\subsection{Convergence of the Algorithm}

In this section, we analyze the algorithmic convergence of the sample weight optimization scheme employed in the \texttt{SinkWF} and \texttt{SlicedWF} algorithms. For simplicity of exposition, we let $\sigma(\mathbf{b}) = \operatorname{softmax}(\mathbf{b})$ denote the softmax mapping and define the clipping threshold as $u_\alpha = \frac{1}{(1-\alpha)n}$. We consider the idealized objective function with the exact $W_1$ distance:
\begin{align*}
F(\mathbf{b}) = W_1(P_n, P_{\sigma(\mathbf{b})}) - \beta \sum_{i=1}^n (\sigma_i(\mathbf{b}) - u_\alpha)_+,
\end{align*}
where $(t)_+ = \max\{t, 0\}$. We further simplify the optimization dynamics from Adam to standard gradient ascent updates with step size $\gamma > 0$:
%\begin{align*}
$
\mathbf{b}_{r+1} = \mathbf{b}_r + \gamma \nabla F(\mathbf{b}_r).
$
%\end{align*}
We assume there exists an iteration index $r_0$ and a compact set $\mathbb{K} \subset \mathbb{R}^n$ such that the iterates satisfy $\mathbf{b}_r \in \mathbb{K}$ for all $r \geq r_0$. Without loss of generality, we treat $r_0$ as the initial step and analyze the subsequent $R$ iterations. To establish convergence, we introduce the following regularity conditions on the optimization landscape:

\begin{con}[Unique Dual Potential]\label{con:unique_dual}
For every parameter vector $\mathbf{b} \in \mathbb{K}$ and its induced probability vector $\mathbf{p} = \sigma(\mathbf{b})$, the exact Optimal Transport (OT) dual problem,
\begin{align*}
W_1(P_n, P_{\mathbf{p}}) = \max_{\substack{\mathbf{u}, \mathbf{v} \in \mathbb{R}^n \\ u_i + v_j \leq C_{ij}}} \left\{ \frac{1}{n}\sum_{i=1}^n u_i + \sum_{j=1}^n p_j v_j \right\},
\end{align*}
admits a unique optimal target potential $\mathbf{v}^*(\mathbf{p})$ satisfying the normalization constraint $\sum_{j=1}^n v_j^*(\mathbf{p}) = 0$.
\end{con}

\begin{con}[Stable Active Set and Non-boundary Hitting]\label{con:stable_active}
Let $\mathbf{m}(\mathbf{p}) \in \{0,1\}^n$ define the active-set indicator vector of the penalty term, where $m_i(\mathbf{p}) = \mathbb{I}_{\{p_i > u_\alpha\}}$. There exists a constant vector $\mathbf{m}_0 \in \{0,1\}^n$ such that $\mathbf{m}(\sigma(\mathbf{b})) = \mathbf{m}_0$ for all $\mathbf{b} \in \mathbb{K}$. Furthermore, the softmax weights do not hit the boundary of the threshold, i.e., $\sigma_i(\mathbf{b}) \neq u_\alpha$ for all $i \in [n]$ and $\mathbf{b} \in \mathbb{K}$.
\end{con}

\begin{con}[Smoothness of the Optimal Dual]\label{con:smooth_dual}
The optimal target dual potential $\mathbf{v}^*(\cdot)$ is Lipschitz continuous on the simplex image $\sigma(\mathbb{K})$. That is, for any $\mathbf{b}, \mathbf{c} \in \mathbb{K}$ with $\mathbf{p} = \sigma(\mathbf{b})$ and $\mathbf{q} = \sigma(\mathbf{c})$, there exists a constant $L_{\mathbf{v}^*} > 0$ such that:
%\begin{align*}
$
\|\mathbf{v}^*(\mathbf{p}) - \mathbf{v}^*(\mathbf{q})\|_2 \leq L_{\mathbf{v}^*} \|\mathbf{p} - \mathbf{q}\|_2.
$
%\end{align*}
\end{con}

\begin{thm}[Convergence Rate of the WF Algorithm]\label{thm:convergence_rate}
Under Conditions \ref{con:unique_dual}--\ref{con:smooth_dual}, the gradient $\nabla F$ is $L_F$-Lipschitz continuous on $\mathbb{K}$ for some constant $L_F > 0$. If the step size satisfies $\gamma < \frac{2}{L_F}$, the iterates of the gradient ascent scheme satisfy:
\begin{align*}
\min_{r\in\{r_0,\ldots,r_0+R-1\}} \|\nabla F(\mathbf{b}_r)\|_2 \leq \sqrt{\frac{\max_{i,j\in[n]}\|\mathbf{x}_i-\mathbf{x}_j\|_2 - F(\mathbf{b}_{r_0})}{\left(\gamma-\frac{L_F\gamma^2}{2}\right)R}}.
\end{align*}
\end{thm}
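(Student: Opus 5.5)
The plan has two parts. First I show that $\nabla F$ is Lipschitz on $\mathbb{K}$. Then I run the standard descent-lemma argument for gradient ascent on an $L_F$-smooth function that is bounded above.

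\emph{Step 1: gradient formula and Lipschitz constant.} Write $F(\mathbf{b}) = G(\sigma(\mathbf{b})) - \beta\, \mathbf{m}_0^\top(\sigma(\mathbf{b}) - u_\alpha\mathbf{1})$ on $\mathbb{K}$. By Condition \ref{con:stable_active}, the active set is frozen and no coordinate sits at the kink, so the penalty is linear in $\sigma(\mathbf{b})$ on $\mathbb{K}$.

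For $G(\mathbf{p}) = W_1(P_n,P_{\mathbf{p}})$: this is the value of a linear program and is concave in $\mathbf{p}$ (a pointwise max of affine functions of $\mathbf{p}$, read through the dual). Its superdifferential is the set of optimal target potentials, taken modulo shifts along $\mathbf{1}$. Condition \ref{con:unique_dual} makes this set a singleton up to the normalization, so $G$ is differentiable along the simplex with gradient $\mathbf{v}^*(\mathbf{p})$; I would cite Danskin's theorem for this. The shift ambiguity is harmless because the softmax Jacobian $J(\mathbf{b}) = \operatorname{diag}(\sigma) - \sigma\sigma^\top$ annihilates $\mathbf{1}$. The chain rule then gives
\[
\nabla F(\mathbf{b}) = J(\mathbf{b})\bigl(\mathbf{v}^*(\sigma(\mathbf{b})) - \beta\mathbf{m}_0\bigr).
\]

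To get Lipschitz continuity, I add and subtract $J(\mathbf{b})(\mathbf{v}^*(\sigma(\mathbf{c})) - \beta \mathbf{m}_0)$ and bound the two pieces separately:
\begin{itemize}
\item $\|J(\mathbf{b})-J(\mathbf{c})\|$ is bounded by a constant times $\|\mathbf{b}-\mathbf{c}\|$, since softmax is smooth with bounded second derivatives.
\item $\|\mathbf{v}^*\|$ is bounded on $\mathbb{K}$. Normalized potentials can be taken $C$-concave with entries bounded by $\max C_{ij}$.
\item $\|J\|\le 1$.
\item $\|\sigma(\mathbf{b})-\sigma(\mathbf{c})\| \le \|\mathbf{b}-\mathbf{c}\|$, so Condition \ref{con:smooth_dual} gives a Lipschitz bound on $\mathbf{v}^*\circ\sigma$.
\end{itemize}
Together these yield a constant $L_F$.

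\emph{Step 2: ascent inequality.} I expect the main subtlety here: $F$ need not be $C^{1,1}$ on a convex domain, and $\mathbb{K}$ may not be convex, so the descent lemma needs the segment between consecutive iterates to stay in a region where Step 1 applies. I would handle this by assuming, or arguing via a convex compact neighborhood, that the Lipschitz property holds along the segments $[\mathbf{b}_r,\mathbf{b}_{r+1}]$. Given that, the quadratic lower bound
\[
F(\mathbf{b}_{r+1}) \ge F(\mathbf{b}_r) + \langle\nabla F(\mathbf{b}_r),\mathbf{b}_{r+1}-\mathbf{b}_r\rangle - \tfrac{L_F}{2}\|\mathbf{b}_{r+1}-\mathbf{b}_r\|^2
\]
with $\mathbf{b}_{r+1}-\mathbf{b}_r=\gamma\nabla F(\mathbf{b}_r)$ gives
\[
F(\mathbf{b}_{r+1})-F(\mathbf{b}_r)\ge(\gamma-L_F\gamma^2/2)\|\nabla F(\mathbf{b}_r)\|^2.
\]
The coefficient is positive because $\gamma<2/L_F$.

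\emph{Step 3: telescoping.} Summing over $r=r_0,\dots,r_0+R-1$ and bounding the sum from below by $R$ times the minimum gives
\[
R(\gamma-L_F\gamma^2/2)\min_r\|\nabla F(\mathbf{b}_r)\|^2\le F(\mathbf{b}_{r_0+R})-F(\mathbf{b}_{r_0}).
\]
To finish, I bound $F$ from above. The penalty is nonnegative, and $W_1(P_n,P_{\mathbf{p}})\le\max_{i,j}\|\mathbf{x}_i-\mathbf{x}_j\|_2$ because any coupling supported on the data moves mass at most the diameter. Hence $F(\mathbf{b}_{r_0+R})\le \max_{i,j}\|\mathbf{x}_i-\mathbf{x}_j\|_2$. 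Rearranging and taking a square root gives the stated bound.
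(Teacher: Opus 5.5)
Your proposal is correct and follows essentially the same route as the paper. Both arguments use Danskin's theorem to get $\nabla_{\mathbf{p}} W_1 = \mathbf{v}^*$, use the frozen active set to make the penalty linear, apply the chain rule through $\mathbf{J}_\sigma(\mathbf{b})$ with an add-and-subtract Lipschitz bound, then run the ascent lemma and telescope against the diameter bound on $F$. Your observation that $\mathbf{J}_\sigma(\mathbf{b})\mathbf{1}=\mathbf{0}$ removes the shift ambiguity of the potential is extra care the paper omits. Likewise, the paper applies the ascent lemma without comment, implicitly assuming $\nabla F$ is Lipschitz along the segments $[\mathbf{b}_r,\mathbf{b}_{r+1}]$; your explicit assumption on this point is therefore no weaker than what the paper uses.
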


Condition \ref{con:unique_dual} and the non-boundary hitting portion of Condition \ref{con:stable_active} ensure that the objective function $F(\mathbf{b})$ is continuously differentiable on $\mathbb{K}$. This guarantees the existence of a unique, explicit gradient, thereby avoiding the analytical complexities associated with subgradient calculus. Additionally, Condition \ref{con:smooth_dual} together with the stable active set property in Condition \ref{con:stable_active} ensures the $L_F$-Lipschitz smoothness of $\nabla F$.

Theorem \ref{thm:convergence_rate} demonstrates that our sample-weight optimization framework achieves a sublinear convergence rate of $\mathcal{O}(1/\sqrt{R})$, matching the standard optimal rate for first-order methods in non-concave smooth optimization.

\section{Numerical Studies}

In this section, we evaluate the performance of our proposed methods against several state-of-the-art baselines:
\begin{itemize}
    \item {DIF (Deep Isolation Forest)} \citep{xu2023deep}.
    \item {MDE (Minimum Distance Estimation)} \citep{nietert2024robust}. Note that the MDE algorithm evaluated here is a specialized version of the framework in \cite{nietert2024robust} designed for TV corruption, which reduces to a standard spectral iterative filtering procedure (see Section 2.3 of \cite{nietert2024robust}).
    \item {COPOD (Copula-based Outlier Detection)} \citep{li2020copod}.
\end{itemize}
To assess performance on these binary classification (outlier detection) tasks, we employ two widely used metrics: the Area Under the Receiver Operating Characteristic curve ({AUC-ROC}) and the Area Under the Precision-Recall curve ({AUC-PR}).

\subsection{Two-Dimensional Synthetic Data}

We first compare our methods with the competing baselines on two-dimensional synthetic datasets to facilitate clear visualization. Specifically, we generate two classic datasets—the two-moons dataset and the concentric circles dataset, using the \texttt{make\_moons} and \texttt{make\_circles} functions from the \texttt{scikit-learn} Python library with a noise level of $0.1$. For each dataset, we generate $n = 500$ samples, where a fraction $\alpha$ of the samples is corrupted. The corrupted samples are generated from two different Gaussian distributions representing different contamination strengths:
\begin{align*}
\text{\textbf{Weak Case:}} \quad \mathcal{N}\left(\begin{bmatrix}
-0.5 \\
-0.5
\end{bmatrix}, \begin{bmatrix}
0.01 & 0 \\
0 & 0.01
\end{bmatrix} \right)\!, \quad \text{and} \quad
\text{\textbf{Strong Case:}} \quad \mathcal{N}\left(\begin{bmatrix}
-2.5 \\
-2.5
\end{bmatrix}, \begin{bmatrix}
0.01 & 0 \\
0 & 0.01
\end{bmatrix} \right)\!.
\end{align*}
The ``strong" contamination scenario features outliers that are well-separated from the clean data cluster, whereas the "weak" scenario places outliers much closer to the clean data, as illustrated in Figure \ref{fig:2d_visualization} in the Supplementary Material.

We consider two contamination proportions: $\alpha = 0.05$ and $\alpha = 0.15$. For the algorithms requiring the outlier fraction as an input (the robust WF variants and MDE), we provide the true value of $\alpha$. For \texttt{SinkWF} and \texttt{SlicedWF}, the optimization parameters are initialized as $\mathbf{b}_0 = \mathbf{1} + 10^{-3} \mathbf{z}$, where $\mathbf{z} \sim \mathcal{N}(\mathbf{0}, \mathbf{I})$, and we set the number of repetitions to $T = 10$. Other hyperparameters are selected via grid search, with the search spaces detailed in Appendix C.1.

We repeat each experiment $50$ times, executing an independent grid search for each run. The average AUC-ROC and AUC-PR, along with their corresponding standard errors, are reported in Table \ref{table:2d_data}. Our implementations of the three WF algorithms leverage the Python Optimal Transport (\texttt{POT}) toolbox \citep{flamary2021pot} for \texttt{SlicedWF} and the \texttt{GeomLoss} library \citep{feydy2019interpolating} for \texttt{SinkWF} and \texttt{SinkMarg}.

\begin{table}[!h]
\centering
\caption{Results of 2D data. Weak and strong contamination implies the signal of corruption strength. The mean of $50$ times repetition is recorded and the corresponding standard error is recorded in parenthesis right side.}
\label{table:2d_data}
\footnotesize{
\begin{tabular}{llcccc} \bottomrule
                &       \multicolumn{5}{c}{Weak Contamination} \\ \cline{2-6}
     Dataset    &       &  \multicolumn{2}{c}{AUC-ROC} & \multicolumn{2}{c}{AUC-PR} \\ \cline{3-6}
                &       & $\alpha=0.05$ & $\alpha=0.15$ & $\alpha=0.05$ & $\alpha=0.15$  \\
                &       COPOD          & 0.89(0.02) & 0.80(0.01) & 0.25(0.05)   & 0.34(0.02)  \\
                &       DIF            & 0.96(0.02) & 0.32(0.06) & 0.49(0.12)   & 0.13(0.02)   \\
                &       MDE            & 0.67(0.05) & 0.65(0.03) & 0.08(0.01)   & 0.20(0.02)    \\
                &       SlicedWF       & 0.77(0.12) & 0.85(0.15) & 0.23(0.29)   & 0.58(0.32)    \\
                &       SinkWF         & 0.85(0.10) & 1.00(0.00) & 0.31(0.28)   & 1.00(0.00)     \\
                &       SinkMarg       & 0.63(0.06) & 0.85(0.03) & 0.11(0.05)   & 0.77(0.05)      \\

     Two moons    &       \multicolumn{5}{c}{Strong Contamination} \\ \cline{2-6}
                &       & \multicolumn{2}{c}{AUC-ROC} & \multicolumn{2}{c}{AUC-PR} \\ \cline{3-6}
                &       & $\alpha=0.05$ & $\alpha=0.15$ & $\alpha=0.05$ & $\alpha=0.15$  \\
                &       COPOD         & 1.00(0.00)  & 0.97(0.00) & 0.97(0.01)  & 0.83(0.01)   \\
                &       DIF           & 0.99(0.00)  & 0.42(0.10) & 0.85(0.07)  & 0.21(0.06)    \\
                &       MDE           & 0.98(0.01)  & 1.00(0.00) & 0.66(0.12)  & 1.00(0.00)     \\
                &       SlicedWF      & 1.00(0.01)  & 0.99(0.02) & 0.99(0.09)  & 0.96(0.11)     \\
                &       SinkWF        & 1.00(0.00)  & 1.00(0.00) & 1.00(0.00)  & 1.00(0.00) \\
                &       SinkMarg      & 0.90(0.02)  & 0.91(0.02) & 0.30(0.15)  & 0.88(0.03)      \\ \hline
                                &       \multicolumn{5}{c}{Weak Contamination} \\ \cline{2-6}
                &       &  \multicolumn{2}{c}{AUC-ROC} & \multicolumn{2}{c}{AUC-PR} \\ \cline{3-6}
                &       & $\alpha=0.05$ & $\alpha=0.15$ & $\alpha=0.05$ & $\alpha=0.15$  \\
                &       COPOD          & 0.58(0.02) & 0.48(0.01) & 0.06(0.00)  & 0.13(0.00)   \\
                &       DIF            & 0.24(0.07) & 0.05(0.02) & 0.03(0.00)  & 0.08(0.00)   \\
                &       MDE            & 0.42(0.05) & 0.38(0.02) & 0.04(0.00)  & 0.12(0.00)     \\
                &       SlicedWF       & 0.80(0.14) & 0.82(0.14) & 0.19(0.09)  & 0.41(0.15)    \\
                &       SinkWF         & 0.88(0.07) & 0.99(0.00) & 0.33(0.17)  & 0.92(0.03)   \\
                &       SinkMarg       & 0.72(0.08) & 0.92(0.02) & 0.16(0.08)  & 0.74(0.05)     \\
     Circles    &       \multicolumn{5}{c}{Strong Contamination} \\ \cline{2-6}
                &       & \multicolumn{2}{c}{AUC-ROC} & \multicolumn{2}{c}{AUC-PR} \\ \cline{3-6}
                &       & $\alpha=0.05$ & $\alpha=0.15$ & $\alpha=0.05$ & $\alpha=0.15$  \\
                &       COPOD         & 1.00(0.00) & 0.96(0.00) & 0.97(0.01) & 0.77(0.01)    \\
                &       DIF           & 0.99(0.00) & 0.49(0.09) & 0.88(0.08) & 0.19(0.04)   \\
                &       MDE           & 1.00(0.00) & 1.00(0.00) & 1.00(0.00) & 1.00(0.00)     \\
                &       SlicedWF      & 1.00(0.00) & 0.99(0.03) & 1.00(0.01) & 0.95(0.12)   \\
                &       SinkWF        & 1.00(0.00) & 1.00(0.00) & 1.00(0.00) & 1.00(0.00) \\
                &       SinkMarg      & 0.91(0.03) & 0.93(0.02) & 0.64(0.14) & 0.89(0.02)     \\ \bottomrule
\end{tabular}
}
\end{table}

The experimental results in Table \ref{table:2d_data} demonstrate that \texttt{SinkWF} consistently achieves the best performance across most scenarios. Notably, \texttt{SinkWF} exhibits a more pronounced performance advantage over \texttt{SlicedWF} in the weak contamination regime. This behavior is visually illuminated by Figure \ref{fig:2d_visualization} in the Supplementary Material, which highlights that the directional signal of contamination is significantly weaker in the weak case.

As the data dimension $d$ increases, this directional signal becomes even more severely diluted unless the number of random projections $S$ in the sliced distance grows exponentially with $d$. In contrast, the entropic Optimal Transport used in \texttt{SinkWF} is inherently dimension-free, albeit at the expense of explicitly storing and computing the $n \times n$ cost matrix.

Additionally, the two robust joint WF algorithms perform better as the contamination fraction $\alpha$ increases, a trend particularly evident under weak contamination. However, \texttt{SinkWF} remains highly sensitive to variations in this contamination proportion.

Benefiting from the global geometric properties of the Wasserstein distance, our WF algorithms are capable of detecting outliers even when they are not exposed at the extreme edges of the clean data distribution. This capability is clearly illustrated in Figure \ref{fig:2d_visualization}(a) in the Supplementary Material.

In the two-moons dataset, while baselines like \texttt{DIF} and \texttt{COPOD} can identify some outlying points, the contrast is far more stark in the concentric circles dataset. Here, the outliers concentrate in the region between the inner and outer circles, completely surrounded by normal points. In this challenging scenario, \texttt{DIF}, \texttt{MDE}, and \texttt{COPOD} mistakenly flag points on the outer boundary of the clean distribution as outliers. This limitation stems from their underlying mechanisms, which rely heavily on tail mass or spatial isolation to determine outlier scores.

By contrast, \texttt{SinkWF} and \texttt{SinkMarg} successfully isolate nearly all hidden outliers, while \texttt{SlicedWF} successfully recovers a significant portion of them. This is also reflected quantitatively in Table \ref{table:2d_data}, where all three WF algorithms exhibit a substantial performance lead on the concentric circles dataset, particularly under weak contamination.

\subsection{Benchmark Datasets}

In this section, we systematically validate and compare our proposed framework across several standard benchmark datasets spanning
% three
two distinct modalities: tabular and graph-structured.

% and time-series data.

\begin{itemize}
    \item {Tabular Data:} We utilize representative datasets from the comprehensive anomaly detection benchmark \texttt{ADBench} \citep{han2022adbench}.
    \item {Graph-Structured Data:} We evaluate our methods on the Tox21 molecular dataset. To process the graphs, we vectorize the molecular structures using the pretrained \texttt{Mol2vec} model \citep{jaeger2018mol2vec}. Both the raw datasets and the pretrained representations are retrieved using the \texttt{DeepChem} ecosystem \citep{ramsundar2019deep}.
\iffalse
    \item {Time-Series Data:} We select benchmarks from the widely recognized UCR Time Series Classification Archive \citep{UCRArchive2018, dau2019ucr}. To extract high-quality features, we transform the raw time-series inputs into low-dimensional representations using the self-supervised framework \texttt{TS2vec} \citep{yue2022ts2vec}.
\fi
\end{itemize}

The empirical results and performance comparisons for the Tabular Data are summarized
in Table \ref{table:benchmark_tabular_penalty}.
The results for  Graph-Structured Data are given in
Table \ref{table:benchmark_graph_penalty} in the Supplementary Material.

%and Graph-Structured Data modalities are summarized in Tables %\ref{table:benchmark_tabular_penalty} and \ref{table:benchmark_graph_penalty}.
% and \ref{table:benchmark_time_penalty}, respectively.

\begin{table}[!h]
\footnotesize{
\centering
\caption{Full results of tabular benchmark datasets. SinkMarg on smtp dataset is abandoned since its runtime exceeds 24h.}
\label{table:benchmark_tabular_penalty}
\begin{tabular}{lllcccccc} \bottomrule
& & & \multicolumn{6}{c}{AUC-ROC} \\ \cline{4-9}
            & Dataset & $(n,d,\alpha)$ & COPOD & DIF & MDE & SlicedWF &  SinkWF & SinkMarg   \\ \cline{2-9}
\multirow{14}{*}{Tabular}    & vertebral   & $(240,6,12.50\%)$    & 0.39  & 0.39  & 0.43 & 0.61 & 0.74 & 0.34    \\
                             & yeast       & $(1484,8,34.16\%)$   & 0.41  & 0.42  & 0.50 & 0.50 & 0.70 & 0.35   \\
                             & annthyroid  & $(7200,6,7.42\%)$    & 0.68  & 0.71  & 0.67 & 0.59 & 0.61 & 0.82    \\
                             & breastw     & $(683,9,34.99\%)$    & 0.99  & 0.98  & 0.99 & 0.99 & 0.99 & 0.98    \\
                             & cardio      & $(1831,21,9.61\%)$   & 0.88  & 0.95  & 0.82 & 0.80 & 0.76 & 0.56    \\
                             & tocography    & $(2114,21,22.04\%)$  & 0.65  & 0.68  & 0.71 & 0.65 & 0.77 & 0.46    \\
                             & Hepatitis   & $(80,19,16.25\%)$    & 0.80  & 0.81  & 0.60 & 0.72  & 0.68 & 0.63    \\
                             & Lymph       & $(148,18,4.05\%)$    & 1.00  & 1.00  & 0.92 & 0.95  & 0.96 & 0.99   \\
                             & shuttle     &$(49097,9,7.15\%)$    & 0.99  & 0.99  & 0.96 & 0.95  & 0.98 & 0.75    \\
                             & pendigits   &$(6870,16,2.27\%)$    & 0.82  & 0.96  & 0.76 & 0.97  & 0.58 & 0.72    \\
                             & satellite   &$(6435,36,31.64\%)$   & 0.50  & 0.73  & 0.50 & 0.75  & 0.66 & 0.66    \\
                             & WBC         &$(223,9,4.48\%)$      & 0.99  & 0.98  & 0.99 & 1.00  & 0.94 & 0.97    \\
                             & smtp        &$(95156,3,0.03\%)$    & 0.90  & 0.93  & 0.83 & 0.77  & 0.58  &  -   \\
                             & optdigits   &$(5216,64,2.88\%)$    & 0.71  & 0.57  & 0.50 & 0.58  & 0.67 & 0.35    \\ \hline
                             & mean        &                      & 0.77  & 0.79  & 0.73 & 0.77  & 0.76 & 0.66   \\
                             & wins        &                      & 28.57\% & 35.71\% & 7.14\% & 28.57\% & 28.57\% & 7.14\% \\
& & & \multicolumn{6}{c}{AUC-PR} \\ \cline{4-9}
             & Dataset & $(n,d,\alpha)$ & COPOD & DIF & MDE & SlicedWF &  SinkWF & SinkMarg  \\ \cline{2-9}
\multirow{14}{*}{Tabular}     & vertebral   & $(240,6,12.50\%)$    & 0.10  & 0.10 & 0.11  & 0.15 & 0.23 & 0.09     \\
                              & yeast       & $(1484,8,34.16\%)$   & 0.31  & 0.30 & 0.34  & 0.34 & 0.52 & 0.27    \\
                              & annthyroid  & $(7200,6,7.42\%)$    & 0.16  & 0.27 & 0.16  & 0.10 & 0.14 & 0.24    \\
                              & breastw     & $(683,9,34.99\%)$    & 0.98  & 0.94 & 0.97  & 0.99 & 0.99 & 0.94    \\
                              & cardio      & $(1831,21,9.61\%)$   & 0.46  & 0.58 & 0.24  & 0.52 & 0.31 & 0.17    \\
                              & tocography    & $(2114,21,22.04\%)$  & 0.37  & 0.42 & 0.35  & 0.41 & 0.46 & 0.24    \\
                              & Hepatitis   & $(80,19,16.25\%)$    & 0.42  & 0.47 & 0.19  & 0.34 & 0.36 & 0.22    \\
                              & Lymph       & $(148,18,4.05\%)$    & 0.87  & 0.95 & 0.21  & 0.31 & 0.54 & 0.65    \\
                              & shuttle     &$(49097,9,7.15\%)$    & 0.91  & 0.88 & 0.74  & 0.77 & 0.96 & 0.18        \\
                              & pendigits   &$(6870,16,2.27\%)$    & 0.11  & 0.29 & 0.05  & 0.38 & 0.03 & 0.07    \\
                              & satellite   &$(6435,36,31.64\%)$   & 0.34  & 0.64 & 0.32  & 0.68 & 0.69 & 0.49    \\
                              & WBC         &$(223,9,4.48\%)$      & 0.90  & 0.67 & 0.80  & 0.96 & 0.38 & 0.69    \\
                              & smtp        &$(95156,3,0.03\%)$    & 0.47  & 0.39 & 0.06  & 0.01 & 0.00 &  -   \\
                              & optdigits   &$(5216,64,2.88\%)$    & 0.05  & 0.03 & 0.03  & 0.04 & 0.28 & 0.02    \\ \hline
                              & mean        &                      & 0.46  & 0.50 & 0.33  & 0.43 & 0.42 & 0.33   \\
                              & wins        &                      & 7.14\% & 28.57\%  & 0.00\%  & 21.43\%  & 50.00\%  & 0.00\%  \\
\bottomrule
\end{tabular}
}
\end{table}

\subsubsection{Discussion of Empirical Results}

\paragraph{Tabular Data (Table \ref{table:benchmark_tabular_penalty}).}
As shown in Table \ref{table:benchmark_tabular_penalty}, \texttt{SinkWF} achieves state-of-the-art or near-optimal performance on datasets with higher contamination fractions $\alpha$ (e.g., \emph{yeast}, \emph{cardiotocography}, \emph{satellite}, and \emph{shuttle}), though its performance degrades in low-$\alpha$ regimes (e.g., \emph{pendigits}). Conversely, \texttt{SlicedWF} exhibits a clear advantage on low-to-moderate dimensional datasets (such as \emph{pendigits}, \emph{WBC}, and \emph{satellite}). In these settings, the lower-dimensional informative features ensure that random 1D projections preserve the class-separating geometric structure. However, the performance of \texttt{SlicedWF} systematically degrades as the dimensionality increases (e.g., \emph{optdigits}).

A notable failure mode for all Wasserstein Filtering (WF) variants occurs on the \emph{smtp} dataset. In scenarios characterized by an extremely large sample size $n$ and a microscopic contamination rate $\alpha$, the total mass of the contaminated samples becomes statistically negligible. Consequently, the Wasserstein distance between the full empirical distribution and any sub-sample is dominated by the benign intra-class variation of the clean data rather than the geometry of the outliers. In this regime, the core principle of our framework, maximizing distributional discrepancy by withholding an $\alpha$-fraction of the data, loses its signal. In contrast, marginal-based methods such as \texttt{COPOD} and isolation-based methods like \texttt{DIF} succeed here; their mechanisms (marginal extremeness and recursive isolation, respectively) are designed to identify isolated, rare anomalies in low-dimensional spaces without requiring a population-level geometric signal.

For \texttt{SinkMarg}, the best performance is achieved on \emph{annthyroid}, which is a large dataset ($n=7200$) with a low dimension ($d=6$) and simple structure, where leave-one-out Sinkhorn scoring is highly effective at identifying well-isolated outliers. However, \texttt{SinkMarg} performs poorly on \emph{vertebral} (AUC-ROC of $0.34$) and \emph{yeast} ($0.35$), performing worse than a random classifier. This highlights a fundamental limitation of \texttt{SinkMarg}: it cannot capture group-structured outliers, where contaminated points form tight, coherent clusters rather than isolated Dirac masses.

Finally, \texttt{MDE} is theoretically elegant but practically weak across all tabular datasets. The datasets in \texttt{ADBench} feature highly heterogeneous distributions (comprising medical records, sensor data, and handwritten digits). The spectral filtering mechanism of \texttt{MDE} relies heavily on the assumption that the clean distribution is isotropic, a condition that is severely violated in real-world tabular data.

\paragraph{Graph-Structured Data (Table \ref{table:benchmark_graph_penalty} in the Supplementary Material).}
In the high-dimensional molecular representation space of Tox21, \texttt{MDE} fails completely. \texttt{MDE} iteratively computes the leading eigenvector of the sample covariance matrix and filters samples along this direction. In $d=300$ dimensions with only a small fraction of outliers ($\sim$3–12\%), the leading eigenvector of the covariance matrix is entirely dominated by the variance of the clean data, rendering the diffuse contamination signal spectrally invisible.

For \texttt{SlicedWF}, using $S=1000$ random projections (see Appendix C.2) in a $300$-dimensional space is insufficient. Most random 1D projections fail to align with the meaningful directions of molecular variation, making the sliced Wasserstein distance a poor proxy for the true $W_1$ metric. This highlights a fundamental statistical-computational tradeoff: while \texttt{SlicedWF} enjoys an $\mathcal{O}(n)$ space complexity, it suffers a severe approximation bottleneck in high dimensions. In contrast, \texttt{SinkWF} incurs an $\mathcal{O}(n^2)$ space complexity but preserves full geometric fidelity, rendering it robust to high dimensionality.

\iffalse
\paragraph{Time-Series Data (Table \ref{table:benchmark_time_penalty}).}
Interestingly, in Table \ref{table:benchmark_time_penalty}, \texttt{SlicedWF} often matches or outperforms \texttt{SinkWF} (e.g., on \emph{Power}, \emph{Surface1}, and \emph{FordB}), which is a reversal of the trend observed in the graph datasets. This difference can be explained by the contrasting geometric structures of \texttt{TS2vec} and \texttt{Mol2vec} embeddings. Temporal signals processed through self-supervised contrastive learning (\texttt{TS2vec}) typically exhibit a much lower effective intrinsic dimensionality and highly aligned directional trajectories, which random projections can easily exploit. In contrast, \texttt{Mol2vec} encodes subgraph co-occurrences in a highly entangled 300-dimensional space where no single direction dominates.

Lastly, \texttt{DIF} substantially outperforms both \texttt{SlicedWF} and \texttt{SinkWF} on the \emph{Wafer} dataset, which features the lowest contamination rate and the largest sample size $n$ in the time-series suite. This again underscores the limitation of WF methods in low-$\alpha$ regimes: when the Wasserstein discrepancy signal is weak, the geometric advantage of WF diminishes, allowing tree-based recursive isolation methods (\texttt{DIF}) to construct tighter, data-adaptive decision boundaries around minority anomalous points.
\fi

When the true contamination rate $\alpha$ is unknown, we show in Appendix C.2 that our automatic determination procedure for $\alpha$ incurs negligible performance loss compared to using the ground-truth value.

\subsection{Robust Generative Learning}

In this subsection, we demonstrate how our proposed framework can facilitate downstream tasks, specifically robust generative learning. The goal of generative learning is to synthesize novel samples $\mathbf{x}_i$ ($i \geq n+1$) that mimic the target distribution $P^*$ based on corrupted observed data $\widetilde{\mathbf{x}}_1, \dots, \widetilde{\mathbf{x}}_n$. For this experiment, we deploy a diffusion-based generative model—the Denoising Diffusion Probabilistic Model (\texttt{DDPM}) \citep{ho2020denoising}—on the MNIST handwritten digit dataset.

We combine the standard training and testing partitions of MNIST to obtain a total sample size of $n = 70,000$. The dataset is subjected to two distinct types of adversarial corruptions:
\begin{itemize}
    \item {Strong Contamination:} The top half of each contaminated image is completely masked (replaced with white pixels).
    \item {Weak Contamination:} A random one-fourth portion of each contaminated image is masked, as illustrated in Figure \ref{figure:MNIST}(a).
\end{itemize}
We compare the visual quality and distribution of the synthesized images generated by \texttt{DDPM} when trained on: (i) the raw contaminated dataset, and (ii) the clean dataset retrieved after filtering out suspected outliers using our proposed methods.

Performance is evaluated using two sets of metrics: the classification accuracy of outlier detection ({AUC-ROC} and {AUC-PR}) and the quality of the generated images, quantified by the Fréchet Inception Distance ({FID}) \citep{heusel2017gans}. Because the information in the corrupted regions is fundamentally unrecoverable, the FID is computed strictly between the synthesized images and the ground-truth clean (uncorrupted) images. Additionally, we evaluate a baseline scenario with no actual corruption ($\alpha = 0$) while still running our algorithms with a nominal input of $\alpha = 0.05$. This allows us to quantify the potential performance degradation (or "robustness tax") incurred by filtering uncorrupted datasets.

In our implementation, we leverage \texttt{POT} for \texttt{SlicedWF} and the memory-efficient \texttt{GeomLoss} library for \texttt{SinkWF} and \texttt{SinkMarg}. We initialize the parameters as $\mathbf{b}_0 = \mathbf{1} + 10^{-3}\mathbf{z}$, where $\mathbf{z} \sim \mathcal{N}(\mathbf{0}, \mathbf{I})$, and utilize the squared Wasserstein-2 ($p=2$) distance across all three WF variants. We set the optimization iterations to $T=1$ for \texttt{SinkWF} and $T=10$ for \texttt{SlicedWF}. All remaining hyperparameter configurations are detailed in Appendix C.3. The empirical results are summarized in Table \ref{table:MNIST_main}.

\paragraph{Discussion of Generative Results.}
As reported in Table \ref{table:MNIST_main}, \texttt{SlicedWF} achieves slightly superior performance compared to \texttt{SinkWF} in outlier classification metrics (AUC-ROC and AUC-PR), whereas \texttt{SinkWF} yields a lower (better) FID score. Notably, \texttt{SinkMarg} proves computationally impractical in this setting; the extremely large sample size ($n = 70,000$) severely slows down the leave-one-out Sinkhorn computations, making it scale poorly to large-scale datasets. Consistent with our previous experiments, the outlier detection capabilities of \texttt{SinkWF} improve as the contamination rate increases from $\alpha = 0.05$ to $\alpha = 0.15$.

Regarding image synthesis quality, the $\alpha = 0$ baseline experiment demonstrates that applying our filtering algorithms to entirely clean data has a negligible impact on downstream generative modeling, indicating that our methods do not discard valuable clean samples. Conversely, the utility of filtering becomes striking under heavy contamination ($\alpha = 0.15$): pre-filtering the training data with our WF algorithms yields a dramatic 3-to-4-fold reduction in FID score compared to training on the raw contaminated data, highlighting the necessity of robust data sanitization in generative workflows.

\begin{figure}[!h]
    \centering

    % Column 1: original / contaminated images
    \begin{subfigure}[t]{\textwidth}
        \centering
         \includegraphics[width=0.12\linewidth]{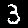}
        \includegraphics[width=0.12\linewidth]{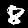}
        \ \
%
 %       \medskip
%
      \includegraphics[width=0.12\linewidth]{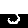}
        \includegraphics[width=0.12\linewidth]{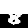}
        \ \
%
 %       \medskip
%
  \includegraphics[width=0.12\linewidth]{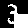}
        \includegraphics[width=0.12\linewidth]{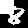}

        \caption{Original and contaminated images. From left to right: clean images, strong contaminated images, and weak contaminated images.}
    \end{subfigure}\\
%    \hfill
    % Column 2: generated images
    \begin{subfigure}[t]{\textwidth}
        \centering
        \includegraphics[width=0.12\linewidth]{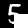}
         \includegraphics[width=0.12\linewidth]{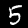}
         \includegraphics[width=0.12\linewidth]{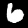}
         \ \ \ \
        \includegraphics[width=0.12\linewidth]{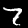}
%
    %    \medskip
%
        \includegraphics[width=0.12\linewidth]{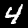}
%
%     \medskip
%
        \includegraphics[width=0.12\linewidth]{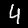}

        \caption{Generated images. First 3 images from the left side: trained on raw data with $\alpha=0.15$ weak contamination. The 3 images on the right: trained after outlier removal by SlicedWF.}
    \end{subfigure}

    \caption{Result of generated images without and with outlier removal preprocessing.}
    \label{figure:MNIST}
\end{figure}

\begin{table}[!h]
\centering
\caption{Results of DDPM training on MNIST. FID evaluate the quality of generated images, AUC-ROC and AUC-PR evaluate the performance of outlier detection, weak and strong contamination are two modes of contamination. SinkMarg is abandoned since its runtime exceeds 24h.}
\label{table:MNIST_main}
\begin{tabular}{lccccccc} \bottomrule
              & \multicolumn{7}{c}{Weak contamination} \\ \cline{2-8}
              & \multicolumn{3}{c}{FID score} & \multicolumn{2}{c}{AUC-ROC} & \multicolumn{2}{c}{AUC-PR} \\ \cline{2-8}
              & $\alpha=0$ & $\alpha=0.05$ & $\alpha=0.15$ & $\alpha=0.05$ & $\alpha=0.15$ & $\alpha=0.05$ & $\alpha=0.15$ \\
              Original data  & 12.45  & 17.23   & 21.13  & - & - & - & - \\
              SlicedWF       & 11.25  & 20.77   & 4.83   & 1.00  & 1.00  & 1.00  & 1.00 \\
              SinkWF         & 10.13  & 16.17   & 4.83   & 0.80  & 1.00  & 0.71  & 1.00 \\
              SinkMarg       & -  &  -  &  -  & -  & -  & -  & - \\
\hline
              & \multicolumn{7}{c}{Strong contamination} \\ \cline{2-8}
              & \multicolumn{3}{c}{FID score} & \multicolumn{2}{c}{AUC-ROC} & \multicolumn{2}{c}{AUC-PR} \\ \cline{2-8}
              & $\alpha=0$ & $\alpha=0.05$ & $\alpha=0.15$ & $\alpha=0.05$ & $\alpha=0.15$ & $\alpha=0.05$ & $\alpha=0.15$ \\
              Original data   & 12.45  & 12.13  & 14.85  & - & - & - & - \\
              SlicedWF        & 13.27  & 20.77  & 4.83   & 1.00  & 1.00  & 1.00  & 1.00 \\
              SinkWF          & 16.15  & 8.98   & 5.04   & 1.00  & 1.00  & 0.94  & 1.00 \\
              SinkMarg        & -  &  - & - & - & - & - & - \\
              \bottomrule
\end{tabular}
\end{table}

\section{Conclusion}

In this paper, we introduced Wasserstein Filtering (\texttt{WF}), a novel task-agnostic sample selection framework designed to detect and eliminate outliers that cause significant distributional distortion under the Wasserstein distance. By isolating a clean subset of the data, the empirical measure of the remaining samples naturally serves as a robust estimator of the underlying clean distribution.

The core advantages of the \texttt{WF} framework are summarized as follows:
\begin{itemize}
    \item \text{Task-Agnostic Preprocessing:} As a pure sample-selection scheme, \texttt{WF} operates independently of downstream learning tasks, making it a highly versatile preprocessing tool for general data sanitization.
    \item \text{Minimal Distributional Assumptions:} Unlike existing robust estimators, \texttt{WF} does not impose restrictive parametric assumptions on the clean distribution, nor does it require prior knowledge of its specific parameters.
    \item \text{Computational Tractability and Adaptivity:} Harnessing modern deep-learning optimization frameworks (e.g., automatic differentiation and GPU acceleration), the optimization of sample weights is computationally efficient. Moreover, the framework is equipped with an automated data-driven procedure to estimate the unknown contamination rate $\alpha$.
    \item \text{Robustness to Non-Tail and Structured Outliers:} By leveraging the global geometric properties of optimal transport, \texttt{WF} successfully identifies structured outliers and hidden anomalies that do not lie on the extreme tails of the clean distribution, surpassing the limitations of isolation- and density-based methods.
    \item \text{Flexibility across Dimensions:} The framework offers scalable options tailored to different data regimes: \texttt{SlicedWF} provides $\mathcal{O}(n)$ space efficiency in large-sample, low-dimensional settings ($n \gg d$), while \texttt{SinkWF} maintains geometric fidelity in high-dimensional, moderate-sample regimes ($d \gg n$).
\end{itemize}

We systematically validated the empirical performance of the \texttt{WF} framework across tabular and graph data, %and time-series modalities,
demonstrating its consistent superiority over state-of-the-art baselines. Furthermore, on the theoretical front, we established the minimax optimality of the empirical risk under the Far Exclusion with Local Projection (\texttt{FELP}) contamination model, confirming that our framework achieves the optimal statistical rate of robust distribution estimation.

%%%%%%%%%%%%%%%%%%%%%%%%%%%%%%%%%%%%%%%%%%%%%%%%%%%%%%%%%%%%%%%%%%%%%%%%%%%%%%
%\clearpage

\renewcommand{\thefigure}{A.\arabic{figure}}
\appendix

\vspace{2 cm}
\noindent
\textbf{\LARGE Supplementary Material}

\medskip
This Supplementary Material contains the complete proofs for our theoretical results, additional algorithmic details, and further numerical evaluations.

\section{Additional Methodological Details}

\subsection{Projection-Based Implementation}
In Algorithms 2 and Algorithm 1 below in A.3, the constraint $\operatorname{conv}(\Delta_\alpha)$ is handled by appending a penalty term to the objective function. In this section, we present an alternative approach based on direct projection. Specifically, we solve the projection problem onto the capped simplex $\operatorname{conv}(\Delta_\alpha)$:
\begin{align*}
\operatorname{proj}_{\operatorname{conv}(\Delta_\alpha)}(\mathbf{b}_r) = \argmin_{\mathbf{z} \in \operatorname{conv}(\Delta_\alpha)} \frac{1}{2} \|\mathbf{z} - \mathbf{b}_r\|^2.
\end{align*}
To solve this, we employ the iterative algorithm proposed by \citet{adam2022projections}, which utilizes a bisection method to find the root of the optimality equation established in their Theorem 3.8. Crucially, the operations involved in this projection are differentiable almost everywhere, enabling seamless integration into gradient-based optimization via automatic differentiation (e.g., PyTorch's \emph{AutoGrad}).

Specifically, for the Sinkhorn Wasserstein Filtering (\texttt{SinkWF}) algorithm, the optimal transport problem and its associated gradient computation are modified as follows:
\begin{gather*}
  \mathbf{T}_{r} = \argmin_{\mathbf{T} \in \Pi\left(\frac{1}{n}\mathbf{1}, \, \operatorname{proj}_{\operatorname{conv}(\Delta_\alpha)}(\mathbf{b}_r) \right)} \langle \mathbf{T}, \mathbf{C} \rangle + \lambda \sum_{i,j \in [n]} t_{ij} \log(t_{ij}),
  \\
  \mathbf{g}_r = \mathbf{v}^* \frac{\partial \operatorname{proj}_{\operatorname{conv}(\Delta_\alpha)}(\mathbf{b}_r)}{\partial \mathbf{b}_r}.
\end{gather*}
This adaptation can be applied analogously to the Sliced Wasserstein Filtering (\texttt{SlicedWF}) algorithm:
\begin{align*}
  W_{1,S}\left(\widetilde{P}_n, P_{\operatorname{proj}_{\operatorname{conv}(\Delta_\alpha)}(\mathbf{b}_r)}\right) &= \frac{1}{S} \sum_{\mathbf{u}_i \sim \mathbb{U}^d, \, i \in [S]} \int_{0}^1 \left| F^{-1}_{\mathbf{u}_i^\top \widetilde{\mathbf{x}}, \, \widetilde{\mathbf{x}} \sim \widetilde{P}_n}(t) - F^{-1}_{\mathbf{u}_i^\top \mathbf{y}, \, \mathbf{y} \sim P_{\operatorname{proj}_{\operatorname{conv}(\Delta_\alpha)}(\mathbf{b}_r)}}(t) \right| dt,
  \\
  \mathbf{g}_r & = \frac{\partial W_{1,S}\left(\widetilde{P}_n, P_{\operatorname{proj}_{\operatorname{conv}(\Delta_\alpha)}(\mathbf{b}_r)}\right)}{\partial \operatorname{proj}_{\operatorname{conv}(\Delta_\alpha)}(\mathbf{b}_r)} \frac{\partial \operatorname{proj}_{\operatorname{conv}(\Delta_\alpha)}(\mathbf{b}_r)}{\partial \mathbf{b}_r}.
\end{align*}

By eliminating the penalty parameter $\beta$, this projection-based implementation bypasses the need for hyperparameter tuning on $\beta$, making it user-friendly in practice. We empirically evaluate the performance of this projection-based variant in Appendices C.1-C.3.

\subsection{Automatic Determination of the Contamination Level}
\label{app:hyperparameters}

In Algorithms 2 and Algorithm 1 below in A.3, the contamination level $\alpha$ is an input parameter that typically must be specified a priori. To address settings where the ground-truth contamination rate is unknown, we introduce an automatic selection strategy that dynamically estimates $\alpha$ during the optimization process.

Taking Algorithm 2 as an example, starting from an initial conservative estimate $\alpha_0$ (e.g., $\alpha_0 = 0.1$), we update the contamination level iteratively. Let $\mathbf{p}_{r+1} = \operatorname{Sort}(\operatorname{softmax}(\mathbf{b}_{r+1}))$ denote the vector of sample weights sorted in descending order at iteration $r+1$. We then update $\alpha_{r+1}$ according to:
\begin{align*}
\alpha_{r+1} = 1 - \frac{\argmax_{i\in[n-1]} (p_{r+1,i} - p_{r+1,i+1})}{n}.
\end{align*}

The intuition behind this strategy is that clean and contaminated samples tend to be separated by a sharp transition in their assigned probability masses. By identifying the index of the largest adjacent gap (i.e., the ``elbow" point) in the sorted weights, we can dynamically and adaptively partition the dataset into clean and contaminated subsets. The empirical performance of this adaptive $\alpha$ strategy is evaluated on benchmark datasets in Appendix C.

\subsection{SlicedWF Algorithm}
The complete optimization process for the SlicedWF algorithm is given below.

\begin{algorithm}[H]
    \DontPrintSemicolon
    \SetAlgoLined
 \caption[A]{Sliced Wasserstein Filtering Algorithm (\texttt{SlicedWF})}
    \label{alg:sliced_wf}
    \KwIn{$\widetilde{\mathcal{D}}$, $\alpha \in (0,1)$, $\mathbf{b}_0$, $\gamma \in\mathbb{R}_+$, $R, T, S\in \mathbb{Z}_+$}

    Normalize $\widetilde{\mathcal{D}}$ \;
    \For{$t=1, \dots, T$}{
        \For{$r = 0, \dots, R-1$}{
            Sample $S$ random directions $\{\mathbf{u}_1, \dots, \mathbf{u}_S\}$ uniformly from $\mathbb{S}^{d-1}$ \;
            Compute $W_{1,S}(\widetilde{P}_n, P_{\softmax(\mathbf{b}_r)})$ via:
            \begin{align*}
            \frac{1}{S} \sum_{i=1}^S \int_{0}^1 \left| F^{-1}_{\mathbf{u}_i^\top \widetilde{\mathbf{x}}, \, \widetilde{\mathbf{x}} \sim \widetilde{P}_n}(t) - F^{-1}_{\mathbf{u}_i^\top \mathbf{y}, \, \mathbf{y} \sim P_{\softmax(\mathbf{b}_r)}}(t) \right| dt
            \end{align*}
            Compute the gradient $\mathbf{g}_r$:
            \begin{align*}
            \mathbf{g}_r = \frac{\partial \left[ W_{1,S}(\widetilde{P}_n, P_{\softmax(\mathbf{b}_r)}) - \beta \mathbf{1}^\top \ReLU\left( \softmax(\mathbf{b}_r) - \frac{1}{(1-\alpha)n}\mathbf{1} \right) \right]}{\partial \mathbf{b}_r}
            \end{align*}
            Update parameters using Adam:
            \begin{align*}
            \mathbf{b}_{r+1} = \mathrm{Adam}(\gamma, \mathbf{b}_r, \mathbf{g}_{j, j\leq r})
            \end{align*}
        }
        Save $\mathbf{h}_t = \softmax(\mathbf{b}_R)$ \;
    }
    Thresholding: filter $\widetilde{\mathcal{D}}$ to obtain $\mathcal{S} = \{\widetilde{\mathbf{x}}_i \mid \bar{h}_i > \tau_\alpha, \, i \in [n]\}$ where
    $\bar{h}_i$ is the $i$th entry of $\bar{\mathbf{h}} = \frac{1}{T} \sum_{t=1}^T \mathbf{h}_t$, $\tau_\alpha$ is the $\alpha$ quantile of $\bar{h}_i,i\in[n]$

    \KwOut{$\mathcal{S}$}
\end{algorithm}

\section{Additional Details for Section 3:  Theoretical Properties}

\begin{figure}
    \centering
    \includegraphics[width=0.7\textwidth]{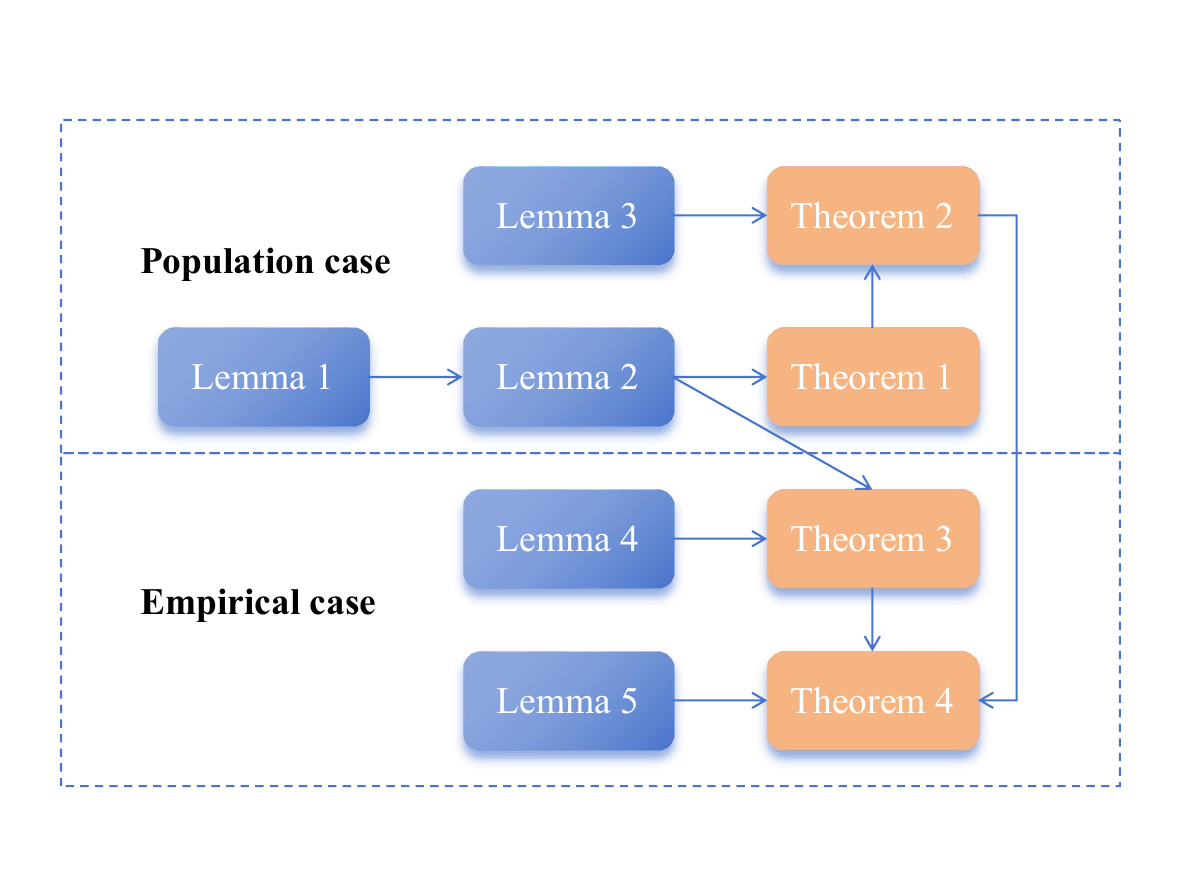}
    \caption{Roadmap of estimator convergence}
    % \label{fig:your_label}
\end{figure}

\subsection{Convergence of the Estimator}
\subsubsection{Population Setting}

\begin{lem}[Complementary Wasserstein Resilience]\label{lem:complementary_resilience}
Let $(\mathcal{X},d)$ be a metric space, $p \in [1,\infty)$, and $\alpha \in [0,1)$. If $\mu \in \mathcal{Q}_p(\mathcal{X})$ is $(\rho,\alpha)$-resilient with respect to the $W_p$ distance, then $\mu$ is also $\left( \frac{(1-\alpha)^{1/p}}{1-(1-\alpha)^{1/p}} \rho, \, 1-\alpha \right)$-resilient under $W_p$. That is, for every probability measure $\nu \in \mathcal{Q}_p(\mathcal{X})$ satisfying $\nu \le \frac{1}{\alpha} \mu$, we have:
\begin{align*}
W_p(\mu,\nu) \le \frac{(1-\alpha)^{1/p}}{1-(1-\alpha)^{1/p}}\rho.
\end{align*}
\end{lem}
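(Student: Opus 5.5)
The plan is a complement-decomposition argument combined with the joint convexity of $W_p^p$, followed by a self-bounding (absorption) step. The case $\alpha=0$ is trivial, since then $(1-\alpha)^{1/p}=1$ and the right-hand side is $+\infty$. So assume $\alpha\in(0,1)$ and fix a probability measure $\nu$ with $\nu\le \frac1\alpha\mu$, i.e.\ $\alpha\nu\le\mu$ setwise. Since $\nu\le\mu/\alpha$ and $\mu\in\mathcal{Q}_p(\mathcal{X})$, $\nu$ also has finite $p$-th moment. Hence $W_p(\mu,\nu)<\infty$, which is needed for the final rearrangement.

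\emph{Step 1 (complement measure).} Define
\[
\nu' := \frac{\mu-\alpha\nu}{1-\alpha}.
\]
This is a nonnegative measure because $\alpha\nu\le\mu$, and it has total mass $(1-\alpha)/(1-\alpha)=1$. Moreover $(1-\alpha)\nu'\le\mu$, i.e.\ $\nu'\le\frac{1}{1-\alpha}\mu$, and $\nu'$ has finite $p$-th moment. By the $(\rho,\alpha)$-resilience of $\mu$ (Definition of $W_p$ resilience), $W_p(\mu,\nu')\le\rho$. By construction we have the mixture identity $\mu=\alpha\nu+(1-\alpha)\nu'$.

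\emph{Step 2 (convexity).} Write $\nu=\alpha\nu+(1-\alpha)\nu$ and couple the two mixtures componentwise. Use the identity coupling $(\mathrm{Id},\mathrm{Id})_\sharp(\alpha\nu)$ on the common $\alpha\nu$ part, and an optimal coupling of $(\nu',\nu)$ scaled by $1-\alpha$ on the rest. This gives
\[
W_p^p(\mu,\nu)\le (1-\alpha)\,W_p^p(\nu',\nu), \qquad\text{so}\qquad W_p(\mu,\nu)\le(1-\alpha)^{1/p}W_p(\nu',\nu).
\]

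\emph{Step 3 (absorption).} By the triangle inequality and Step 1,
\[
W_p(\nu',\nu)\le W_p(\nu',\mu)+W_p(\mu,\nu)\le\rho+W_p(\mu,\nu).
\]
Combining with Step 2 yields $W_p(\mu,\nu)\le(1-\alpha)^{1/p}\bigl(\rho+W_p(\mu,\nu)\bigr)$. Because $W_p(\mu,\nu)$ is finite and $1-(1-\alpha)^{1/p}>0$ for $\alpha>0$, we can move the $W_p(\mu,\nu)$ term to the left and divide, which gives exactly the claimed bound.

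The only delicate points are verifying that $\nu'$ is a legitimate probability measure in $\mathcal{Q}_p$ to which resilience applies, and the finiteness needed to absorb the term. The convexity coupling in Step 2 is standard, and I do not expect a real obstacle there.
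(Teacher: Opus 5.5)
Your proposal is correct and follows the paper's proof essentially step for step. It uses the same complement measure $(\mu-\alpha\nu)/(1-\alpha)$ (called $\eta$ in the paper) and the same coupling $\alpha(\mathrm{Id},\mathrm{Id})_{\sharp}\nu+(1-\alpha)\pi$ to get $W_p(\mu,\nu)\le(1-\alpha)^{1/p}W_p(\nu',\nu)$, then the triangle inequality, resilience, and absorption; your explicit remarks on $\alpha=0$ and on the finiteness of $W_p(\mu,\nu)$ are small points the paper leaves implicit.
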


\begin{proof}
Following the proof strategy of Lemma 10 in \citet{steinhardt2018resilience} (which establishes complementary mean resilience), let $\nu \in \mathcal{Q}_p(\mathcal{X})$ be any probability measure satisfying $\nu \le \frac{1}{\alpha} \mu$, and define
\begin{align*}
\eta := \frac{\mu - \alpha \nu}{1 - \alpha}.
\end{align*}
It is straightforward to verify that $\eta \in \mathcal{Q}_p(\mathcal{X})$ and $\eta \le \frac{1}{1-\alpha}\mu$.

Let $\pi$ be an optimal coupling between $\eta$ and $\nu$ with respect to the $W_p$ distance. We construct a coupling $\Gamma_1$ defined as:
\begin{align*}
\Gamma_1 := \alpha(\operatorname{Id}, \operatorname{Id})_{\#} \nu + (1-\alpha)\pi,
\end{align*}
where $\operatorname{Id}$ denotes the identity map. Since the first marginal of $\Gamma_1$ is $\alpha\nu + (1-\alpha)\eta = \mu$ and its second marginal is $\nu$, it constitutes a valid coupling between $\mu$ and $\nu$. Consequently, we obtain:
\begin{align*}
W_p^p(\mu,\nu) &\le \int d(\mathbf{x},\mathbf{y})^p \, d\Gamma_1(\mathbf{x},\mathbf{y}) \\
&= (1-\alpha) \int d(\mathbf{x},\mathbf{y})^p \, d\pi(\mathbf{x},\mathbf{y}) \\
&= (1-\alpha) W_p^p(\nu,\eta).
\end{align*}
Taking the $p$-th root on both sides yields $W_p(\mu,\nu) \le (1-\alpha)^{1/p} W_p(\nu,\eta)$. By the triangle inequality, we have:
\begin{align*}
W_p(\mu,\nu) &\le (1-\alpha)^{1/p} \left( W_p(\mu,\nu) + W_p(\mu,\eta) \right).
\end{align*}
Rearranging terms, we find:
\begin{align*}
\left( 1 - (1-\alpha)^{1/p} \right) W_p(\mu,\nu) &\le (1-\alpha)^{1/p} W_p(\mu,\eta) \\
&\le (1-\alpha)^{1/p} \rho,
\end{align*}
where the final inequality follows from the relation $\eta \le \frac{1}{1-\alpha}\mu$ and the assumed $(\rho,\alpha)$-resilience of $\mu$. This completes the proof.
\end{proof}

\begin{lem}[Risk under Support Total Variation Contamination]\label{lem:population_risk_under_STV}
Assume the true distribution $P^* \in \mathcal{G} \subseteq \mathcal{W}_1(\rho,\alpha)$ and the contaminated distribution $\widetilde{P}$ satisfies $\|\widetilde{P}-P^*\|_{\operatorname{STV}} \le \alpha$ for some $0 \le \alpha \le \frac{1}{2}$. Then, the population Wasserstein Filtering estimator $\widehat{P}_{\operatorname{WF}}$ satisfies:
\begin{align*}
W_1(\widehat{P}_{\operatorname{WF}},P^*) \lesssim \kappa_{\widetilde{P},P^*}(S_{\operatorname{WF}}) \vee \rho,
\end{align*}
where the discrepancy term $\kappa_{\widetilde{P},P^*}(S_{\operatorname{WF}})$ is defined as:
\begin{align*}
\kappa_{\widetilde{P},P^*}(S_{\operatorname{WF}}) = \inf_{\mu \le \frac{1}{\widetilde{P}(S_{\operatorname{WF}} \setminus \bar{S})} P^*} W_1\left(\widetilde{P}_{S_{\operatorname{WF}} \setminus \bar{S}}, \, \mu\right),
\end{align*}
and $\bar{S}$ is the shared support guaranteed by the definition of the support total variation distance.
\end{lem}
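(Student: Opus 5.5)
We split $\widehat{P}_{\operatorname{WF}} = \widetilde{P}_{S_{\operatorname{WF}}}$ into its clean and contaminated parts along $\bar{S}$ and handle each with the resilience results. Write $S := S_{\operatorname{WF}}$, $S_1 := S\cap\bar{S}$, $S_2 := S\setminus\bar{S}$, and let $\theta := \widetilde{P}(S_2)/(1-\alpha) \in [0,1]$. Because $\widetilde{P}(S)=1-\alpha$, we have the mixture decomposition
\begin{align*}
\widehat{P}_{\operatorname{WF}} = (1-\theta)\,\widetilde{P}_{S_1} + \theta\, \widetilde{P}_{S_2},
\end{align*}
with the degenerate cases $\theta\in\{0,1\}$ handled trivially. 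Convexity of $W_1$ under mixtures, which follows by mixing couplings, then gives
\begin{align*}
W_1(\widehat{P}_{\operatorname{WF}},P^*) \le (1-\theta)\, W_1(\widetilde{P}_{S_1},P^*) + \theta\, W_1(\widetilde{P}_{S_2},P^*).
\end{align*}

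\emph{Clean part.} On $\bar{S}$ we have $\widetilde{P}|_{\bar{S}} = P^*|_{\bar{S}}$, so $\widetilde{P}_{S_1} = P^*|_{S_1}/P^*(S_1)$. Since $\widetilde{P}(S_1)=(1-\alpha)(1-\theta)$, the measure $\widetilde{P}_{S_1}$ is a deletion of $P^*$ that retains mass fraction $(1-\alpha)(1-\theta)$. We treat two regimes.
\begin{itemize}
\item If $P^*(S_1)\ge 1-\alpha$, then $\widetilde{P}_{S_1} \le \frac{1}{1-\alpha}P^*$, and $(\rho,\alpha)$-resilience gives $W_1(\widetilde{P}_{S_1},P^*)\le\rho$.
\item Otherwise, we write $\widetilde{P}_{S_1}\le \frac{1}{P^*(S_1)}P^*$ and apply resilience at level $1-P^*(S_1)$.
\end{itemize}
Resilience at larger deletion levels is controlled by Lemma~\ref{lem:complementary_resilience} combined with a monotonicity/interpolation argument. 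For $\alpha\le 1/2$, Lemma~\ref{lem:complementary_resilience} with $p=1$ shows every $\nu\le\frac1\alpha P^*$ is within $\frac{1-\alpha}{\alpha}\rho$ of $P^*$. To cover all intermediate levels, we decompose any $\nu\le\frac{1}{t}P^*$ with $t\in[\alpha,1-\alpha]$ as a mixture of a $(1-\alpha)$-deletion and an $\alpha$-deletion.

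This yields $(1-\theta)W_1(\widetilde{P}_{S_1},P^*) \lesssim \rho$ whenever $P^*(S_1)\gtrsim\alpha$. If $P^*(S_1)$ is tiny, then $1-\theta$ is also tiny. In that case we bound the term directly: a small-mass piece of $P^*$ is at distance at most of order $\rho/(\text{its mass})$ from $P^*$, and this is multiplied by $(1-\theta)\propto$ that mass. Making this step rigorous, i.e.\ a bound of the form
\begin{align*}
t\,W_1(P^*|_A/t,\,P^*)\lesssim\rho \quad \text{uniformly in } t=P^*(A),
\end{align*}
is the main technical obstacle. We would prove it via the coupling construction of Lemma~\ref{lem:complementary_resilience}, which already yields exactly a factor $(1-t)\cdot W_1$ of a complementary piece.

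\emph{Contaminated part.} For any $\mu\le \frac{1}{\widetilde{P}(S_2)}P^*$, the triangle inequality gives
\begin{align*}
W_1(\widetilde{P}_{S_2},P^*)\le W_1(\widetilde{P}_{S_2},\mu)+W_1(\mu,P^*).
\end{align*}
Since $\widetilde{P}(S_2)\le\alpha\le 1-\alpha$, the admissible $\mu$ are deletions of $P^*$ keeping mass at least $\widetilde{P}(S_2)$. The same uniform bound as above controls $\theta W_1(\mu,P^*)\lesssim\rho$, after noting $\theta = \widetilde{P}(S_2)/(1-\alpha)\le 2\widetilde{P}(S_2)$. Taking the infimum over $\mu$ produces $\kappa_{\widetilde{P},P^*}(S_{\operatorname{WF}})$, and using $\theta\le1$ gives $\theta\,W_1(\widetilde{P}_{S_2},P^*) \le \kappa + O(\rho)$.

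Summing the two parts gives $W_1(\widehat{P}_{\operatorname{WF}},P^*)\lesssim \kappa+\rho\lesssim\kappa\vee\rho$.
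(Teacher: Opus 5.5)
Your convexity split $\widehat{P}_{\mathrm{WF}}=(1-\theta)\widetilde{P}_{S_1}+\theta\,\widetilde{P}_{S_2}$ is valid, and the contaminated part goes through. With $\delta=\widetilde{P}(S_2)\le\alpha$, $P^*$ is also $(\rho,\delta)$-resilient, so Lemma~\ref{lem:complementary_resilience} at level $\delta$ gives $\theta\,W_1(\mu,P^*)\le\frac{1-\delta}{1-\alpha}\rho\le 2\rho$ for every $\mu\le\frac{1}{\delta}P^*$.

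\textbf{The gap is in the clean part.} There $\widetilde{P}_{S_1}=P^*|_{S_1}/t$ with $t=1-\alpha-\delta$. This is a deletion of $P^*$ at level $\alpha+\delta>\alpha$, which resilience does not control directly, and the arguments you give do not close it.

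(i) The mixture decomposition cannot exist. Suppose $\widetilde{P}_{S_1}=\lambda\nu_1+(1-\lambda)\nu_2$ with $\lambda>0$ and $\nu_1\le\frac{1}{1-\alpha}P^*$. Then $\nu_1$ is concentrated on $S_1$, so $\nu_1(S_1)\le\frac{t}{1-\alpha}<1$ whenever $\delta>0$, which is impossible for a probability measure.

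(ii) Falling back on Lemma~\ref{lem:complementary_resilience} at level $\alpha$ (valid, since $\widetilde{P}_{S_1}\le\frac{1}{\alpha}P^*$) only yields $(1-\theta)W_1(\widetilde{P}_{S_1},P^*)\le\frac{1-\alpha-\delta}{\alpha}\rho$. This is a $1/\alpha$ loss that an absolute-constant $\lesssim$ does not permit; for $\rho\asymp\sigma\sqrt{d\alpha}$ it blows up as $\alpha\to0$. So your claim ``$\lesssim\rho$ whenever $P^*(S_1)\gtrsim\alpha$'' is unjustified.

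(iii) The inequality you name as the main obstacle, $t\,W_1(P^*|_A/t,P^*)\lesssim\rho$ uniformly in $t$, is false. For $P^*=\mathrm{Unif}[0,1]$, every $\nu\le\frac{1}{1-\alpha}P^*$ satisfies $|F_\nu(x)-x|\le\|\nu-P^*\|_{\mathrm{TV}}\le\alpha$, so $\rho\le\alpha$. Yet $A=[0,\tfrac12]$ gives $t\,W_1=\tfrac18$, which is not $O(\rho)$ as $\alpha\to0$.

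You have also located the difficulty in the wrong regime. Since $P^*(S_1)\ge 1-2\alpha$, it is small only when $\alpha\approx\tfrac12$, where constants are harmless. The hard case is $P^*(S_1)$ just below $1-\alpha$ with $\alpha$ small.

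\textbf{The fix.} Use the complement coupling you mention at the end, applied at this specific level rather than uniformly. Write $P^*=t\,\widetilde{P}_{S_1}+(1-t)\eta$ with $\eta=P^*|_{S_1^c}/(\alpha+\delta)$.
\begin{itemize}
\item The coupling from the proof of Lemma~\ref{lem:complementary_resilience}, plus the triangle inequality, gives $t\,W_1(\widetilde{P}_{S_1},P^*)\le(1-t)\,W_1(\eta,P^*)$.
\item Since $\eta\le\frac{1}{\alpha+\delta}P^*\le\frac{1}{\alpha}P^*$, Lemma~\ref{lem:complementary_resilience} gives $W_1(\eta,P^*)\le\frac{1-\alpha}{\alpha}\rho$.
\item The prefactor $1-t=\alpha+\delta\le 2\alpha$ cancels the $1/\alpha$, so $(1-\theta)W_1(\widetilde{P}_{S_1},P^*)=\frac{t}{1-\alpha}W_1(\widetilde{P}_{S_1},P^*)\le\frac{\alpha+\delta}{\alpha}\rho\le 2\rho$.
\end{itemize}
With this repair your route gives $W_1(\widehat{P}_{\mathrm{WF}},P^*)\le\kappa_{\widetilde{P},P^*}(S_{\mathrm{WF}})+4\rho$, slightly better constants than the paper.

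\textbf{Comparison with the paper.} The paper sidesteps the issue entirely by comparing $\widehat{P}_{\mathrm{WF}}$ with the clean trim $\widetilde{P}_{\bar S}$, which is within $\rho$ of $P^*$ by resilience. The common part $S_{\mathrm{WF}}\cap\bar S$ then cancels exactly. Only the two $\delta$-mass pieces $S_{\mathrm{WF}}\setminus\bar S$ and $\bar S\setminus S_{\mathrm{WF}}$ remain, each handled by Lemma~\ref{lem:complementary_resilience} at level $\delta$, so a deletion beyond level $\alpha$ never arises.
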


\begin{proof}
Since $\|\widetilde{P}-P^*\|_{\operatorname{STV}} \le \alpha$, by the definition of the Support Total Variation (STV) distance, there exists a measurable subset $\bar{S} \subseteq \mathcal{X}$ such that $\widetilde{P}(\bar{S}) = P^*(\bar{S}) = 1-\alpha$ and $\widetilde{P}|_{\bar{S}} = P^*|_{\bar{S}}$. Consequently, we can write:
\begin{align*}
W_1(\widehat{P}_{\operatorname{WF}}, P^*)
&\le W_1(\widehat{P}_{\operatorname{WF}}, \widetilde{P}_{\bar{S}}) + W_1(\widetilde{P}_{\bar{S}}, P^*) \\
&= W_1(\widehat{P}_{\operatorname{WF}}, \widetilde{P}_{\bar{S}}) + W_1(P^*_{\bar{S}}, P^*) \\
&\le W_1(\widehat{P}_{\operatorname{WF}}, \widetilde{P}_{\bar{S}}) + \rho,
\end{align*}
where the last inequality follows from the relation $P^*_{\bar{S}} \le \frac{1}{1-\alpha} P^*$ and the assumed $(\rho,\alpha)$-resilience of $P^*$.

Recall the Kantorovich-Rubinstein dual formulation for $W_1$. Let $(\phi^*, S_{\operatorname{WF}})$ be an optimal pair realizing $W_1(\widehat{P}_{\operatorname{WF}}, \widetilde{P})$, where $\phi^*$ is the optimal $1$-Lipschitz dual potential. This yields:
\begin{align*}
W_1(\widehat{P}_{\operatorname{WF}}, \widetilde{P})
&= \sup_{\|\phi\|_{\operatorname{Lip}} \le 1} \int \phi \, d\widehat{P}_{\operatorname{WF}} - \int \phi \, d\widetilde{P}  \\
&= \int \phi^* \, d\widehat{P}_{\operatorname{WF}} - \int \phi^* \, d\widetilde{P} \\
&= \sup_{S \subseteq \operatorname{supp}(\widetilde{P}), \, \widetilde{P}(S) = 1-\alpha} \int \phi^* \, d\widetilde{P}_S - \int \phi^* \, d\widetilde{P},
\end{align*}
where the third equality follows directly from the definition of the estimator $\widehat{P}_{\operatorname{WF}}$. Thus, for any subset $S \subseteq \operatorname{supp}(\widetilde{P})$ with $\widetilde{P}(S) = 1-\alpha$, we have:
\begin{align} \label{eq:1}
\int \phi^* \, d\widetilde{P}_S \le \int \phi^* \, d\widehat{P}_{\operatorname{WF}}.
\end{align}

Now, let $\psi$ be any $1$-Lipschitz function ($\|\psi\|_{\operatorname{Lip}} \le 1$). We observe that:
\begin{align*}
\int \psi \, d\widehat{P}_{\operatorname{WF}} - \int \psi \, d\widetilde{P}_{\bar{S}}
&\le \int \psi \, d\widehat{P}_{\operatorname{WF}} - \int \psi \, d\widetilde{P}_{\bar{S}} + \int \phi^* \, d\widehat{P}_{\operatorname{WF}} - \int \phi^* \, d\widetilde{P}_{\bar{S}} \\
&= \frac{1}{1-\alpha} \left( \int_{S_{\operatorname{WF}}} (\psi + \phi^*) \, d\widetilde{P} - \int_{\bar{S}} (\psi + \phi^*) \, d\widetilde{P} \right) \\
&= \frac{1}{1-\alpha} \left( \int_{S_{\operatorname{WF}} \setminus \bar{S}} (\psi + \phi^*) \, d\widetilde{P} - \int_{\bar{S} \setminus S_{\operatorname{WF}}} (\psi + \phi^*) \, d\widetilde{P} \right) \\
&= \frac{\widetilde{P}(S_{\operatorname{WF}} \setminus \bar{S})}{1-\alpha} \left( \int (\psi + \phi^*) \, d\widetilde{P}_{S_{\operatorname{WF}} \setminus \bar{S}} - \int (\psi + \phi^*) \, d\widetilde{P}_{\bar{S} \setminus S_{\operatorname{WF}}} \right) \\
&= \frac{\delta}{1-\alpha} \left( \int (\psi(\mathbf{x}) + \phi^*(\mathbf{x})) \, d\widetilde{P}_{S_{\operatorname{WF}} \setminus \bar{S}}(\mathbf{x}) - \int (\psi(\mathbf{y}) + \phi^*(\mathbf{y})) \, d\widetilde{P}_{\bar{S} \setminus S_{\operatorname{WF}}}(\mathbf{y}) \right) \\
&= \frac{\delta}{1-\alpha} \int \left( \psi(\mathbf{x}) - \psi(\mathbf{y}) + \phi^*(\mathbf{x}) - \phi^*(\mathbf{y}) \right) d\pi(\mathbf{x}, \mathbf{y}) \\
&\le \frac{2\delta}{1-\alpha} \int d(\mathbf{x}, \mathbf{y}) \, d\pi(\mathbf{x}, \mathbf{y}),
\end{align*}
where the first inequality is obtained by setting $\widetilde{P}_{S} = \widetilde{P}_{\bar{S}}$ in \eqref{eq:1};
the third equality follows because $\widetilde{P}(S_{\operatorname{WF}}) = \widetilde{P}(\bar{S}) = 1-\alpha$, which implies $\widetilde{P}(S_{\operatorname{WF}} \setminus \bar{S}) = \widetilde{P}(\bar{S} \setminus S_{\operatorname{WF}}) := \delta \le \alpha$;
the fifth equality holds for any arbitrary coupling $\pi$ of $\widetilde{P}_{S_{\operatorname{WF}} \setminus \bar{S}}$ and $\widetilde{P}_{\bar{S} \setminus S_{\operatorname{WF}}}$;
the last inequality is due to the Lipschitz constraints $\|\psi\|_{\operatorname{Lip}} \le 1$ and $\|\phi^*\|_{\operatorname{Lip}} \le 1$.

Taking the supremum over $\psi$ on the left-hand side and the infimum over all valid couplings $\pi \in \Pi(\widetilde{P}_{S_{\operatorname{WF}} \setminus \bar{S}}, \widetilde{P}_{\bar{S} \setminus S_{\operatorname{WF}}})$ on the right-hand side, we get:
\begin{align*}
W_1(\widehat{P}_{\operatorname{WF}}, \widetilde{P}_{\bar{S}})
&= \sup_{\|\psi\|_{\operatorname{Lip}} \le 1} \int \psi \, d\widehat{P}_{\operatorname{WF}} - \int \psi \, d\widetilde{P}_{\bar{S}} \\
&\le \frac{2\delta}{1-\alpha} \inf_{\pi \in \Pi(\widetilde{P}_{S_{\operatorname{WF}} \setminus \bar{S}}, \widetilde{P}_{\bar{S} \setminus S_{\operatorname{WF}}})} \int d(\mathbf{x}, \mathbf{y}) \, d\pi(\mathbf{x}, \mathbf{y}) \\
&= \frac{2\delta}{1-\alpha} W_1\left(\widetilde{P}_{S_{\operatorname{WF}} \setminus \bar{S}}, \widetilde{P}_{\bar{S} \setminus S_{\operatorname{WF}}}\right) \\
&\le \frac{2\delta}{1-\alpha} \left\{ \inf_{\mu \le \frac{1}{\delta} P^*} W_1\left(\widetilde{P}_{S_{\operatorname{WF}} \setminus \bar{S}}, \mu\right) + W_1\left(\mu, \widetilde{P}_{\bar{S} \setminus S_{\operatorname{WF}}}\right) \right\} \\
&\le \frac{2\delta}{1-\alpha} \left\{ \inf_{\mu \le \frac{1}{\delta} P^*} W_1\left(\widetilde{P}_{S_{\operatorname{WF}} \setminus \bar{S}}, \mu\right) + \sup_{\mu \le \frac{1}{\delta} P^*, \, V \subseteq \bar{S}, \, P^*(V) = \delta} W_1(\mu, P^*_{V}) \right\} \\
&\le \frac{2\delta}{1-\alpha} \left\{ \inf_{\mu \le \frac{1}{\delta} P^*} W_1\left(\widetilde{P}_{S_{\operatorname{WF}} \setminus \bar{S}}, \mu\right) + \sup_{\mu, \nu \le \frac{1}{\delta} P^*} W_1(\mu, \nu) \right\} \\
&\le \frac{2\delta}{1-\alpha} \left\{ \inf_{\mu \le \frac{1}{\delta} P^*} W_1\left(\widetilde{P}_{S_{\operatorname{WF}} \setminus \bar{S}}, \mu\right) + \frac{2(1-\delta)}{\delta} \rho \right\} \\
&\le 2 \kappa_{\widetilde{P}, P^*}(S_{\operatorname{WF}}) + 8\rho,
\end{align*}
where
the third inequality uses the fact that $\widetilde{P} = P^*$ on the support $\bar{S}$;
 the fifth inequality follows from Lemma \ref{lem:complementary_resilience};
 the final inequality is verified by noting that $\delta \le \alpha \le \frac{1}{2}$ (which implies $\frac{2\delta}{1-\alpha} \le 2$ and $\frac{4(1-\delta)}{1-\alpha} \le 8$).

Combining this result with the initial bound $W_1(\widehat{P}_{\operatorname{WF}}, P^*) \le W_1(\widehat{P}_{\operatorname{WF}}, \widetilde{P}_{\bar{S}}) + \rho$ completes the proof.
\end{proof}

We are now ready to prove establish the population risk bounds.

\begin{proof}[Proof of Theorem \ref{thm:population_risk}]
Suppose for contradiction that $\widetilde{P}(S_{\operatorname{WF}} \cap E_\rho) > 0$. By \eqref{eq:3} in the definition of population FELP contamination, there exists a subset $S_+ \subseteq \operatorname{supp}(\widetilde{P})$ with $\widetilde{P}(S_+) = 1-\alpha$ and $S_+ \cap E_\rho = \emptyset$ such that:
\begin{align*}
W_1(\widehat{P}_{\operatorname{WF}}, \widetilde{P}) < W_1(\widetilde{P}_{S_+}, \widetilde{P}).
\end{align*}
Since $\widetilde{P}_{S_+}$ is a valid $(1-\alpha)$-truncated version of $\widetilde{P}$, this strictly contradicts the optimality of $S_{\operatorname{WF}}$. Therefore, we must have $S_{\operatorname{WF}} \cap E_\rho = \emptyset$, which in turn implies $(S_{\operatorname{WF}} \setminus \bar{S}) \cap E_\rho = \emptyset$. Since $S_{\operatorname{WF}} \setminus \bar{S} \subseteq E_\rho \cup N_\rho$, it follows that $S_{\operatorname{WF}} \setminus \bar{S} \subseteq N_\rho$.

Let $\widetilde{P}(S_{\operatorname{WF}} \setminus \bar{S}) = \delta$, where $0 \le \delta \le \alpha$. We define the push-forward measure $\mu = T_{\#} \widetilde{P}_{S_{\operatorname{WF}} \setminus \bar{S}}$. For any measurable set $B \subseteq \mathcal{X}$, we have:
\begin{align*}
\mu(B) &= \widetilde{P}_{S_{\operatorname{WF}} \setminus \bar{S}}(T^{-1}(B)) \\
&\le \frac{1}{\delta} \widetilde{P}|_{N_\rho}(T^{-1}(B)) \\
&= \frac{1}{\delta} T_{\#} \widetilde{P}|_{N_\rho}(B) \\
&\le \frac{1}{\delta} P^*(B),
\end{align*}
where the last inequality follows from \eqref{eq:2}, thereby verifying that $\mu \le \frac{1}{\delta} P^*$.

Next, let $\pi_0 = (\operatorname{Id}, T)_{\#} \widetilde{P}_{S_{\operatorname{WF}} \setminus \bar{S}}$. It is clear that $\pi_0 \in \Pi(\widetilde{P}_{S_{\operatorname{WF}} \setminus \bar{S}}, \mu)$ represents a valid coupling. Consequently, we obtain:
\begin{align*}
W_1(\widetilde{P}_{S_{\operatorname{WF}} \setminus \bar{S}}, \mu) &= \inf_{\pi \in \Pi(\widetilde{P}_{S_{\operatorname{WF}} \setminus \bar{S}}, \mu)} \int d(\mathbf{x}, \mathbf{y}) \, d\pi(\mathbf{x}, \mathbf{y}) \\
&\le \int d(\mathbf{x}, \mathbf{y}) \, d\pi_0(\mathbf{x}, \mathbf{y}) \\
&= \int d(\mathbf{x}, T(\mathbf{x})) \, d\widetilde{P}_{S_{\operatorname{WF}} \setminus \bar{S}}(\mathbf{x}) \\
&\le \rho.
\end{align*}
This shows there exists a measure $\mu = T_{\#} \widetilde{P}_{S_{\operatorname{WF}} \setminus \bar{S}} \le \frac{1}{\delta} P^*$ such that $W_1(\mu, \widetilde{P}_{S_{\operatorname{WF}} \setminus \bar{S}}) \le \rho$, which directly implies that $\kappa_{\widetilde{P}, P^*}(S_{\operatorname{WF}}) \le \rho$. Applying Lemma \ref{lem:population_risk_under_STV}, we conclude that $W_1(\widehat{P}_{\operatorname{WF}}, P^*) \lesssim \rho$.
\end{proof}

To establish the minimax lower bound in Theorem~\ref{thm:minimax_optimal}, we first introduce the following auxiliary lemma, which provides a generic lower bound on the minimax risk via the modulus of continuity.

\begin{lem}[Generic Lower Bound for Minimax Risk]\label{lem:generic_lower_bound_for_minimax_risk}
Let $\mathcal{G} \subseteq \mathcal{W}_1(\rho,\alpha)$. The minimax estimation risk over $\mathcal{G}$ is lower bounded as:
\begin{align*}
\inf_{\mathcal{T}} \sup_{P^* \in \mathcal{G}} \mathcal{R}_{1,P^*,\mathcal{T}}(\alpha,\rho) \ge \frac{1}{2} \omega(\mathcal{G}, \alpha, \rho),
\end{align*}
where the modulus of continuity $\omega(\mathcal{G}, \alpha, \rho)$ is defined as:
\begin{align*}
\omega(\mathcal{G}, \alpha, \rho) = \sup \left\{ W_1(P_0, P_1) : P_0, P_1 \in \mathcal{G}, \, \mathcal{C}^{\operatorname{FELP}}_{P_0}(\alpha,\rho) \cap \mathcal{C}^{\operatorname{FELP}}_{P_1}(\alpha,\rho) \neq \emptyset \right\}.
\end{align*}
\end{lem}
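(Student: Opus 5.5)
This is the standard two-point (Le Cam type) argument for the modulus of continuity. Fix an arbitrary population estimator $\mathcal{T}$ and an arbitrary pair $P_0, P_1 \in \mathcal{G}$ with $\mathcal{C}^{\operatorname{FELP}}_{P_0}(\alpha,\rho) \cap \mathcal{C}^{\operatorname{FELP}}_{P_1}(\alpha,\rho) \neq \emptyset$. Choose a common contaminated distribution $\widetilde{P}$ in this intersection. The estimator only sees $\widetilde{P}$, so it returns the same output $\mathcal{T}(\widetilde{P})$ whether the truth is $P_0$ or $P_1$. This indistinguishability is the key structural point, and it holds by definition because $\mathcal{T}$ is a map $\mathcal{Q}(\mathcal{X}) \to \mathcal{Q}(\mathcal{X})$.

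By the triangle inequality for $W_1$,
\begin{align*}
W_1(P_0,P_1) \le W_1(\mathcal{T}(\widetilde{P}),P_0) + W_1(\mathcal{T}(\widetilde{P}),P_1) \le 2\max_{k\in\{0,1\}} W_1(\mathcal{T}(\widetilde{P}),P_k).
\end{align*}
Since $\widetilde{P} \in \mathcal{C}^{\operatorname{FELP}}_{P_k}(\alpha,\rho)$ for each $k$, we have $W_1(\mathcal{T}(\widetilde{P}),P_k) \le \mathcal{R}_{1,P_k,\mathcal{T}}(\alpha,\rho)$. Combining these gives
\begin{align*}
\sup_{P^*\in\mathcal{G}} \mathcal{R}_{1,P^*,\mathcal{T}}(\alpha,\rho) \ge \max_{k} \mathcal{R}_{1,P_k,\mathcal{T}}(\alpha,\rho) \ge \tfrac12 W_1(P_0,P_1).
\end{align*}

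Take the supremum over all admissible pairs $(P_0,P_1)$ to obtain $\tfrac12\omega(\mathcal{G},\alpha,\rho)$ on the right. Then take the infimum over $\mathcal{T}$; the lower bound does not depend on $\mathcal{T}$, so the claim follows.

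The only subtleties are bookkeeping ones. If no pair has nonempty intersection, the supremum is taken over the trivial pairs $P_0=P_1$ (note $P_0 \in \mathcal{C}^{\operatorname{FELP}}_{P_0}$ trivially) and the bound reads $0 \le \cdot$. If $W_1(P_0,P_1)=\infty$, the same inequalities hold in the extended reals. I do not expect any step to be a real obstacle. The substantive difficulty lies in later bounding $\omega$ from below for $\mathcal{G}_{\mathrm{cov}}(\sigma)$, which this lemma does not require.
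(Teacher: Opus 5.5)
Your proposal is correct and is essentially the paper's proof. You fix $\mathcal{T}$, use the shared $\widetilde{P}\in\mathcal{C}^{\mathrm{FELP}}_{P_0}(\alpha,\rho)\cap\mathcal{C}^{\mathrm{FELP}}_{P_1}(\alpha,\rho)$ and the triangle inequality to get $W_1(P_0,P_1)\le 2\max_{k}W_1(P_k,\mathcal{T}(\widetilde{P}))$, bound each term by $\mathcal{R}_{1,P_k,\mathcal{T}}(\alpha,\rho)$, and then take the supremum over admissible pairs and the infimum over $\mathcal{T}$.

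Your degenerate-case aside is unnecessary, since an empty supremum makes the bound vacuous. The claim $P_0\in\mathcal{C}^{\mathrm{FELP}}_{P_0}(\alpha,\rho)$ is also not entirely automatic, because the definition asks for a shared support of mass exactly $1-\alpha$; however, nothing in your main argument depends on it.
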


\begin{proof}
Following the standard construction of the modulus of continuity in robust estimation \citep[see, e.g.,][]{donoho1988automatic, zhu2022generalized}, we fix an arbitrary estimator $\mathcal{T}$. For any $P_0, P_1 \in \mathcal{G}$ sharing a common contaminated distribution $\widetilde{P} \in \mathcal{C}^{\operatorname{FELP}}_{P_0}(\alpha,\rho) \cap \mathcal{C}^{\operatorname{FELP}}_{P_1}(\alpha,\rho)$, the triangle inequality yields:
\begin{align*}
W_1(P_0, P_1) &\le W_1(P_0, \mathcal{T}(\widetilde{P})) + W_1(\mathcal{T}(\widetilde{P}), P_1) \\
&\le 2 \max\left\{ W_1(P_0, \mathcal{T}(\widetilde{P})), \, W_1(P_1, \mathcal{T}(\widetilde{P})) \right\}.
\end{align*}
Taking the supremum over all such valid pairs $(P_0, P_1)$ and their shared contaminated distributions $\widetilde{P}$ on both sides, we obtain:
\begin{align*}
\omega(\mathcal{G}, \alpha, \rho) &= \sup_{P_0, P_1, \widetilde{P}} W_1(P_0, P_1) \\
&\le 2 \sup_{P_0, P_1, \widetilde{P}} \max\left\{ W_1(P_0, \mathcal{T}(\widetilde{P})), \, W_1(P_1, \mathcal{T}(\widetilde{P})) \right\} \\
&\le 2 \sup_{P^* \in \mathcal{G}} \sup_{\widetilde{P} \in \mathcal{C}^{\operatorname{FELP}}_{P^*}(\alpha,\rho)} W_1(P^*, \mathcal{T}(\widetilde{P})) \\
&= 2 \sup_{P^* \in \mathcal{G}} \mathcal{R}_{1,P^*,\mathcal{T}}(\alpha, \rho).
\end{align*}
Taking the infimum over all estimators $\mathcal{T}$ on both sides completes the proof.
\end{proof}

We are now ready to establish the minimax optimality results.

\begin{proof}[Proof of Theorem \ref{thm:minimax_optimal}]
We first consider the general setting where the distribution class $\mathcal{G}$ satisfies $\mathcal{G} \subseteq \mathcal{W}_1(\rho,\alpha)$.

Let $\mathbf{e}_1 = (1, 0, \dots, 0)^\top \in \mathbb{R}^d$ denote the first standard basis vector, and let $Q_{U\mathbf{e}_1}$ represent the uniform law on the line segment:
\begin{align*}
L_{\frac{\rho}{4}} := \left\{ t \mathbf{e}_1 : 0 \le t \le \frac{\rho}{4} \right\} \subset \mathbb{R}^d,
\end{align*}
where $U \sim \operatorname{Unif}[0, \frac{\rho}{4}]$. We define the candidate distributions $P_0, P_1$ and a shared contaminated distribution $\widetilde{P}$ as follows:
\begin{align*}
P_0 = (1-\alpha)Q_{U\mathbf{e}_1} + \alpha \delta_{-\frac{\rho}{2}\mathbf{e}_1}, \quad
P_1 = (1-\alpha)Q_{U\mathbf{e}_1} + \alpha \delta_{\frac{\rho}{2}\mathbf{e}_1}, \quad
\widetilde{P} = (1-\alpha)Q_{U\mathbf{e}_1} + \alpha \delta_{-\frac{\rho}{4}\mathbf{e}_1}.
\end{align*}

To apply Lemma \ref{lem:generic_lower_bound_for_minimax_risk}, we verify that $\widetilde{P} \in \mathcal{C}^{\operatorname{FELP}}_{P_0}(\alpha,\rho) \cap \mathcal{C}^{\operatorname{FELP}}_{P_1}(\alpha,\rho)$. By choosing the shared support $\bar{S} = L_{\frac{\rho}{4}}$, we have:
\begin{align*}
P_0(\bar{S}) = P_1(\bar{S}) = \widetilde{P}(\bar{S}) = 1-\alpha, \quad \text{and} \quad
P_0|_{\bar{S}} = P_1|_{\bar{S}} = \widetilde{P}|_{\bar{S}} = (1-\alpha)Q_{U\mathbf{e}_1},
\end{align*}
which immediately implies $\|\widetilde{P}-P_i\|_{\operatorname{STV}} \le \alpha$ for $i=0,1$.

Next, we check the Far Exclusion and Local Projection (FELP) conditions. The only contaminated point under $\widetilde{P}$ is $-\frac{\rho}{4}\mathbf{e}_1$, which satisfies:
\begin{align*}
d\left(-\frac{\rho}{4}\mathbf{e}_1, \, \operatorname{supp}(P_0)\right) = d\left(-\frac{\rho}{4}\mathbf{e}_1, \, \operatorname{supp}(P_1)\right) = \frac{\rho}{4} < \rho.
\end{align*}
Consequently, the far contamination region $E_\rho$ is empty for both $P_0$ and $P_1$. For the local projection condition, we define the transport maps $T_0$ and $T_1$ by setting $T_0\left(-\frac{\rho}{4}\mathbf{e}_1\right) = -\frac{\rho}{2}\mathbf{e}_1$ and $T_1\left(-\frac{\rho}{4}\mathbf{e}_1\right) = \frac{\rho}{2}\mathbf{e}_1$. These maps satisfy:
\begin{align*}
d\left(-\frac{\rho}{4}\mathbf{e}_1, \, T_0\left(-\frac{\rho}{4}\mathbf{e}_1\right)\right) = \frac{\rho}{4} < \rho, \quad
d\left(-\frac{\rho}{4}\mathbf{e}_1, \, T_1\left(-\frac{\rho}{4}\mathbf{e}_1\right)\right) = \frac{3\rho}{4} < \rho,
\end{align*}
as well as $T_{0\#}\widetilde{P}|_{N_\rho} = \alpha\delta_{-\frac{\rho}{2}\mathbf{e}_1} \le P_0$ and $T_{1\#}\widetilde{P}|_{N_\rho} = \alpha\delta_{\frac{\rho}{2}\mathbf{e}_1} \le P_1$. This confirms that $\widetilde{P} \in \mathcal{C}^{\operatorname{FELP}}_{P_0}(\alpha,\rho) \cap \mathcal{C}^{\operatorname{FELP}}_{P_1}(\alpha,\rho)$.

Assuming $P_0, P_1 \in \mathcal{G}$, Lemma \ref{lem:generic_lower_bound_for_minimax_risk} yields the following minimax lower bound:
\begin{align*}
\inf_{\mathcal{T}} \sup_{P^* \in \mathcal{G}} \mathcal{R}_{1,P^*,\mathcal{T}}(\alpha,\rho)
\ge \frac{1}{2} \omega(\mathcal{G}, \alpha, \rho)
\ge \frac{1}{2} W_1(P_0, P_1)
= \frac{1}{2} \alpha \rho
\gtrsim \rho,
\end{align*}
where the final inequality holds since $\alpha \ge \alpha_0 > 0$. Combined with the upper bound from Theorem \ref{thm:population_risk}, which establishes that
\begin{align*}
\inf_{\mathcal{T}} \sup_{P^* \in \mathcal{G}} \mathcal{R}_{1,P^*,\mathcal{T}}(\alpha,\rho)
\le \sup_{P^* \in \mathcal{G}} \mathcal{R}_{1,P^*,\widehat{P}_{\operatorname{WF}}}(\alpha,\rho)
\lesssim \rho,
\end{align*}
we obtain the tight minimax rate of $\Theta(\rho)$.

We now turn to the specialized case where $\mathcal{G} = \mathcal{G}_{\operatorname{cov}}(\sigma)$. It remains to verify that $P_0, P_1 \in \mathcal{G}_{\operatorname{cov}}(\sigma)$. For a general distribution of the form $P_{t,\eta} = (1-\alpha)Q_{U\mathbf{e}_1} + \alpha \delta_{t\mathbf{e}_1}$ with $U \sim \operatorname{Unif}[0, \eta]$, its covariance matrix is given by:
\begin{align*}
\mathbf{\Sigma}_{P_{t,\eta}} = \left( (1-\alpha)\frac{\eta^2}{12} + \alpha(1-\alpha)\left(t-\frac{\eta}{2}\right)^2 \right) \mathbf{e}_1 \mathbf{e}_1^\top.
\end{align*}
For $P_0$, we have $\eta = \frac{\rho}{4}$ and $t = -\frac{\rho}{2}$, while for $P_1$, we have $\eta = \frac{\rho}{4}$ and $t = \frac{\rho}{2}$. To ensure that $P_0, P_1 \in \mathcal{G}_{\operatorname{cov}}(\sigma)$, we require the operator norm of the covariance to be bounded by $\sigma^2$:
\begin{align*}
(1-\alpha)\frac{\rho^2}{192} + \alpha(1-\alpha)\left(\frac{\rho}{2} + \frac{\rho}{8}\right)^2 \le \sigma^2
\implies \rho \lesssim \frac{\sigma}{\sqrt{1-\alpha^2}}.
\end{align*}
Furthermore, by Theorem 2 of \citet{nietert2023robust}, the resilience parameter under $\mathcal{G}_{\operatorname{cov}}(\sigma)$ scales as $\rho \asymp \sigma \sqrt{d \alpha}$. Substituting this relation, we find that the covariance constraint is satisfied provided that the dimension satisfies $d \lesssim \frac{1}{\alpha(1-\alpha^2)}$, which holds by assumption. This completes the proof.
\end{proof}

\subsubsection{Empirical Setting}

\begin{lem}[Upper Bound on Empirical Resilience]\label{lem:empirical_resilience}
Let $P^* \in \mathcal{Q}(\mathcal{X})$ be a probability measure, let $P_n = \frac{1}{n}\sum_{i=1}^n \delta_{\mathbf{x}_i}$ be its empirical measure, and let $0 \le \alpha < 1$. Define the population and empirical resilience parameters respectively as:
\begin{align*}
\rho = \sup_{\mu \le \frac{1}{1-\alpha} P^*} W_p(P^*, \mu) \quad \text{and} \quad \rho_n = \sup_{\mu \le \frac{1}{1-\alpha} P_n} W_p(P_n, \mu).
\end{align*}
Then, the empirical resilience parameter satisfies:
\begin{align*}
\rho_n \le \rho + \left(1 + \frac{1}{(1-\alpha)^{1/p}}\right) W_p(P_n, P^*).
\end{align*}
\end{lem}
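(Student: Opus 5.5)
Fix any $\mu_n \le \frac{1}{1-\alpha}P_n$. The plan is to bound $W_p(P_n,\mu_n) \le W_p(P_n,P^*) + W_p(P^*,\mu) + W_p(\mu,\mu_n)$, where $\mu \le \frac{1}{1-\alpha}P^*$ is a suitable population counterpart of $\mu_n$. The middle term is at most $\rho$ by definition. The last term should be bounded by $(1-\alpha)^{-1/p}W_p(P_n,P^*)$. Taking the supremum over $\mu_n$ then gives the claim.

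\textbf{Construction of $\mu$.} Let $\pi$ be an optimal coupling between $P_n$ and $P^*$ for $W_p$. Since $\mu_n \ll P_n$ with density $f = d\mu_n/dP_n \le \frac{1}{1-\alpha}$, reweight the coupling by $f$ in its first coordinate:
\[
d\pi'(\mathbf{x},\mathbf{y}) = f(\mathbf{x})\, d\pi(\mathbf{x},\mathbf{y}).
\]
This is a probability measure because $\int f\,dP_n = 1$. Its first marginal is $\mu_n$. Its second marginal $\mu$ satisfies $\mu \le \frac{1}{1-\alpha}\,(\text{second marginal of } \pi) = \frac{1}{1-\alpha}P^*$, so $\mu$ is admissible in the definition of $\rho$ and $W_p(P^*,\mu) \le \rho$. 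The reweighting is a conditional-disintegration step; for discrete $P_n$ it simply rescales the row of the coupling at each atom $\mathbf{x}_i$, so measurability is trivial here.

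\textbf{Bounding $W_p(\mu_n,\mu)$.} Since $\pi' \in \Pi(\mu_n,\mu)$,
\[
W_p^p(\mu_n,\mu) \le \int d(\mathbf{x},\mathbf{y})^p f(\mathbf{x})\,d\pi \le \frac{1}{1-\alpha}\int d^p\,d\pi = \frac{1}{1-\alpha}\,W_p^p(P_n,P^*).
\]
Taking $p$-th roots gives $W_p(\mu_n,\mu) \le (1-\alpha)^{-1/p}W_p(P_n,P^*)$.

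\textbf{Conclusion.} Combining the three terms,
\[
W_p(P_n,\mu_n) \le W_p(P_n,P^*) + \rho + (1-\alpha)^{-1/p}W_p(P_n,P^*),
\]
which is exactly the stated constant $1+(1-\alpha)^{-1/p}$. The only delicate point is ensuring that the reweighted coupling's second marginal is dominated by $\frac{1}{1-\alpha}P^*$. This follows since $f \le \frac{1}{1-\alpha}$ pointwise, so $\pi' \le \frac{1}{1-\alpha}\pi$ as measures. I expect that step to be the main thing to verify carefully, together with finiteness of moments so that $\mu \in \mathcal{Q}_p$, which is inherited from $P^*$.
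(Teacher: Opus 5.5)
Your proposal is correct and follows essentially the same route as the paper. Both arguments reweight an optimal $W_p$ coupling of $(P_n,P^*)$ by the density $f = d\mu_n/dP_n \le \frac{1}{1-\alpha}$ in the first coordinate, note that the second marginal is an admissible trimming of $P^*$, bound the reweighted transport cost by $\frac{1}{1-\alpha}W_p^p(P_n,P^*)$, and finish with the three-term triangle inequality; your remarks on moment inheritance and measurability are slightly more careful than the paper's write-up.
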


\begin{proof}
For any $\alpha$-trimmed version of $P_n$ satisfying $\mu_\alpha \le \frac{1}{1-\alpha} P_n$, the Radon–Nikodym theorem guarantees the existence of a measurable function $f \colon \mathcal{X} \to \left[0, \frac{1}{1-\alpha}\right]$ with $\int f \, dP_n = 1$ such that:
\begin{align*}
\mu_\alpha(A) = \int_A f \, dP_n
\end{align*}
for all Borel measurable sets $A \subseteq \mathcal{X}$.

Let $\pi \in \Pi(P_n, P^*)$ be an optimal coupling between $P_n$ and $P^*$. We construct a new probability measure $\bar{\pi}$ on $\mathcal{X}^2$ by defining, for any Borel set $B \subseteq \mathcal{X}^2$:
\begin{align*}
\bar{\pi}(B) = \int_B f(\mathbf{x}) \, d\pi(\mathbf{x}, \mathbf{y}).
\end{align*}
For any Borel set $A \subseteq \mathcal{X}$, the first marginal of $\bar{\pi}$ satisfies:
\begin{align*}
\bar{\pi}(A \times \mathcal{X}) = \int_{A \times \mathcal{X}} f(\mathbf{x}) \, d\pi(\mathbf{x}, \mathbf{y}) = \int_{A} f(\mathbf{x}) \, dP_n(\mathbf{x}) = \mu_\alpha(A).
\end{align*}
Letting $\nu_\alpha$ denote the second marginal of $\bar{\pi}$, we have for any Borel set $B \subseteq \mathcal{X}$:
\begin{align*}
\nu_\alpha(B) = \bar{\pi}(\mathcal{X} \times B) = \int_{\mathcal{X} \times B} f(\mathbf{x}) \, d\pi(\mathbf{x}, \mathbf{y}) \le \frac{1}{1-\alpha} P^*(B),
\end{align*}
which confirms that $\nu_\alpha$ is indeed a feasible $\alpha$-trimmed version of $P^*$.

Using $\bar{\pi}$ as a (possibly sub-optimal) coupling between $\mu_\alpha$ and $\nu_\alpha$, we bound their Wasserstein distance as:
\begin{align*}
W_p(\mu_\alpha, \nu_\alpha)
&= \left( \inf_{\pi_0 \in \Pi(\mu_\alpha, \nu_\alpha)} \int \|\mathbf{x}-\mathbf{y}\|^p \, d\pi_0(\mathbf{x},\mathbf{y}) \right)^{1/p} \\
&\le \left( \int \|\mathbf{x}-\mathbf{y}\|^p \, d\bar{\pi}(\mathbf{x},\mathbf{y}) \right)^{1/p} \\
&\le \left( \frac{1}{1-\alpha} \int \|\mathbf{x}-\mathbf{y}\|^p \, d\pi(\mathbf{x},\mathbf{y}) \right)^{1/p} \\
&= \frac{1}{(1-\alpha)^{1/p}} W_p(P_n, P^*).
\end{align*}
Applying the triangle inequality for the Wasserstein metric, we obtain:
\begin{align*}
W_p(P_n, \mu_{\alpha})
&\le W_p(P_n, P^*) + W_p(P^*, \nu_\alpha) + W_p(\nu_\alpha, \mu_\alpha) \\
&\le W_p(P^*, \nu_\alpha) + \left(1 + \frac{1}{(1-\alpha)^{1/p}}\right) W_p(P_n, P^*) \\
&\le \sup_{\nu \le \frac{1}{1-\alpha} P^*} W_p(P^*, \nu) + \left(1 + \frac{1}{(1-\alpha)^{1/p}}\right) W_p(P_n, P^*).
\end{align*}
Since this bound holds for any arbitrary $\alpha$-trimmed measure $\mu_\alpha \le \frac{1}{1-\alpha} P_n$, taking the supremum over all such $\mu_\alpha$ yields:
\begin{align} \label{eq:4}
\sup_{\mu \le \frac{1}{1-\alpha} P_n} W_p(P_n, \mu) \le \sup_{\nu \le \frac{1}{1-\alpha} P^*} W_p(P^*, \nu) + \left(1 + \frac{1}{(1-\alpha)^{1/p}}\right) W_p(P_n, P^*),
\end{align}
which completes the proof.
\end{proof}

We are now ready to establish the empirical risk results.

\begin{proof}[Proof of Theorem \ref{thm:empirical_risk}]
By setting $p=1$ in Lemma \ref{lem:empirical_resilience} and using the assumption that $\alpha \le \frac{1}{2}$, we obtain $P_n \in \mathcal{W}_1(\mathcal{O}(\rho + W_1(P_n, P^*)), \alpha)$.

Using the triangle inequality, we write $W_1(\widehat{P}_{\operatorname{WF}}, P^*) \le W_1(\widehat{P}_{\operatorname{WF}}, P_n) + W_1(P_n, P^*)$. Since $\|\widetilde{P}_n - P_n\|_{\operatorname{STV}} \le \alpha$, we can apply Lemma \ref{lem:population_risk_under_STV} by treating $\widetilde{P}_n$ as the contaminated distribution $\widetilde{P}$ and the empirical measure $P_n$ as the clean target $P^*$. This yields:
\begin{align*}
W_1(\widehat{P}_{\operatorname{WF}}, P^*) \le W_1(\widehat{P}_{\operatorname{WF}}, P_n) + W_1(P_n, P^*) \lesssim \kappa_{\widetilde{P}_n, P_n}(S_{\operatorname{WF}}) \vee \left( \rho + W_1(P_n, P^*) \right),
\end{align*}
which can be explicitly written as:
\begin{align*}
W_1(\widehat{P}_{\operatorname{WF}}, P^*) \lesssim \max \left\{ \inf_{\mu \le \frac{1}{\widetilde{P}_n(S_{\operatorname{WF}} \setminus \bar{S})} P_n} W_1\left(\widetilde{P}_{nS_{\operatorname{WF}} \setminus \bar{S}}, \, \mu\right), \ \rho + W_1(P_n, P^*) \right\},
\end{align*}
where $\bar{S} \subseteq \{\mathbf{x}_j : j \in [n], \, \mathbf{x}_j = \widetilde{\mathbf{x}}_j\}$ is the set of uncorrupted samples, satisfying $|\bar{S}| = (1-\alpha)n$. Taking the expectation on both sides with respect to the contaminated data generating distribution $\widetilde{\mathbb{P}}_n$, we get:
\begin{align} \label{eq:5}
\mathbb{E}_{\widetilde{\mathbb{P}}_n} \left[ W_1(\widehat{P}_{\operatorname{WF}}, P^*) \right] \lesssim \max \left\{ \mathbb{E}_{\widetilde{\mathbb{P}}_n} \left[ \inf_{\mu \le \frac{1}{\widetilde{P}_n(S_{\operatorname{WF}} \setminus \bar{S})} P_n} W_1\left(\widetilde{P}_{nS_{\operatorname{WF}} \setminus \bar{S}}, \, \mu\right) \right], \ \rho + \mathbb{E}_{\mathbb{P}_n}\left[ W_1(P_n, P^*) \right] \right\}.
\end{align}

Next, by substituting the effective empirical parameter $\mathcal{O}(\rho + W_1(P_n, P^*))$ in place of $\rho$ in the definition of the empirical FELP condition, we can apply the exact same conditioning and projection arguments as in the proof of Theorem \ref{thm:population_risk}. This directly implies:
\begin{align*}
\inf_{\mu \le \frac{1}{\widetilde{P}_n(S_{\operatorname{WF}} \setminus \bar{S})} P_n} W_1\left(\widetilde{P}_{nS_{\operatorname{WF}} \setminus \bar{S}}, \, \mu\right) \lesssim \rho + W_1(P_n, P^*).
\end{align*}
Taking the expectation with respect to $\widetilde{\mathbb{P}}_n$ on both sides and substituting this result back into \eqref{eq:5} completes the proof.
\end{proof}

\begin{lem}[Empirical Lifting under FELP Contamination]\label{lem:empirical_lifting}
For any $P^* \in \mathcal{G}$ and any $\widetilde{P} \in \mathcal{C}^{\operatorname{FELP}}_{P^*}\left(\frac{\alpha}{5}, \rho\right)$, the following properties hold:
\begin{enumerate}
    \item First, we have $T_{\#} \widetilde{P}|_{N_\rho} \le P^*|_{\bar{S}^c}$.
    \item Second, let $\eta := P^*|_{\bar{S}} = \widetilde{P}|_{\bar{S}}$, $\nu_{\operatorname{loc}} := \widetilde{P}|_{N_\rho}$, $\nu_{\operatorname{far}} := \widetilde{P}|_{E_\rho}$, $\lambda_{\operatorname{loc}} := T_{\#} \nu_{\operatorname{loc}}$, and $\lambda_{\operatorname{far}} := P^*|_{\bar{S}^c} - \lambda_{\operatorname{loc}} \ge 0$. For any coupling $\pi_{\operatorname{far}} \in \Pi(\lambda_{\operatorname{far}}, \nu_{\operatorname{far}})$, we define the joint coupling $\pi \in \Pi(P^*, \widetilde{P})$ as:
    \begin{align*}
    \pi = (\operatorname{Id}, \operatorname{Id})_{\#} \eta + (T, \operatorname{Id})_{\#} \nu_{\operatorname{loc}} + \pi_{\operatorname{far}}.
    \end{align*}
\end{enumerate}

Under the i.i.d.\ sample generation $(\mathbf{x}_i, \widetilde{\mathbf{x}}_i)_{i=1}^n \sim \pi^n$, we define the event $\mathcal{E}_2$ as:
\begin{align*}
\mathcal{E}_2 = \left\{
\begin{aligned}
&\exists S_+ \subseteq \{\widetilde{\mathbf{x}}_1, \dots, \widetilde{\mathbf{x}}_n\}, \, |S_+| = (1-\alpha)n, \, S_+ \cap E_{n,\rho} = \emptyset \text{ such that:} \\
&\sup_{\substack{S \subseteq \{\widetilde{\mathbf{x}}_1, \dots, \widetilde{\mathbf{x}}_n\} \\ |S| = (1-\alpha)n, \, S \cap E_{n,\rho} \neq \emptyset}} W_1\left(\frac{1}{|S|}\sum_{\widetilde{\mathbf{x}}_i \in S} \delta_{\widetilde{\mathbf{x}}_i}, \, \widetilde{P}_n\right) < W_1\left(\frac{1}{|S_+|}\sum_{\widetilde{\mathbf{x}}_i \in S_+} \delta_{\widetilde{\mathbf{x}}_i}, \, \widetilde{P}_n\right)
\end{aligned}
\right\}.
\end{align*}
Then, we have $\pi^n(\mathcal{E}_2^c) \le \delta_n \le \frac{1}{20}$.

Furthermore, letting $\mathcal{E}_1 = \left\{ \sum_{i=1}^n \mathbb{I}_{\{\mathbf{x}_i \neq \widetilde{\mathbf{x}}_i\}} \le \alpha n \right\}$, we have $\pi^n(\mathcal{E}_1 \cap \mathcal{E}_2) \ge \frac{3}{4}$, and the second (right) marginal distribution of the conditional joint measure $\pi^n(\cdot, \cdot \mid \mathcal{E}_1 \cap \mathcal{E}_2)$ belongs to $\mathcal{D}_n^{\operatorname{FELP}}(P^*, \alpha)$.
\end{lem}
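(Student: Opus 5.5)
We argue in three stages: first the measure inequality in item 1, then the validity of the coupling $\pi$, and finally the probabilistic statements about $\mathcal{E}_1$, $\mathcal{E}_2$ and the conditional law.

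\emph{Item 1.} The set $N_\rho$ lies in $\operatorname{supp}(\widetilde{P})\setminus\bar{S}$, and the local projection condition \eqref{eq:2} gives $T_\#\widetilde{P}|_{N_\rho}\le P^*$. This must be upgraded to $\le P^*|_{\bar{S}^c}$. Since $\widetilde{P}|_{\bar{S}}=P^*|_{\bar{S}}$ and both measures are probabilities, the mass of $\widetilde{P}$ off $\bar{S}$ equals $P^*(\bar{S}^c)=\alpha/5$. On $\bar{S}$ the measure $P^*$ is already fully matched by the identity part of the coupling, so the projected local outliers must be compared only against the free clean mass. I would interpret the local projection as landing in $\operatorname{supp}(P^*)\setminus\bar{S}$, so that $T_\#\widetilde{P}|_{N_\rho}(B)=T_\#\widetilde{P}|_{N_\rho}(B\setminus\bar{S})\le P^*(B\setminus\bar{S})$. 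I expect this to be the delicate point: if the definition only gives $T_\#\widetilde{P}|_{N_\rho}\le P^*$, one may need to redefine $T$ (modifying it on a null set, or composing it with a mass-preserving rearrangement within $\operatorname{supp}P^*\setminus\bar{S}$) so that its image avoids $\bar{S}$. I would try to justify this from the FELP model's intent that near outliers replace deleted clean mass.

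\emph{Coupling.} With item 1 in hand, $\lambda_{\mathrm{far}}\ge0$. The total mass of $\lambda_{\mathrm{far}}$ is $\alpha/5-\widetilde{P}(N_\rho)=\widetilde{P}(E_\rho)$, so $\Pi(\lambda_{\mathrm{far}},\nu_{\mathrm{far}})$ is nonempty (for example, the normalized product measure). The first marginal of $\pi$ is $\eta+\lambda_{\mathrm{loc}}+\lambda_{\mathrm{far}}=P^*$. The second marginal is $\eta+\nu_{\mathrm{loc}}+\nu_{\mathrm{far}}=\widetilde{P}$, because $\operatorname{supp}\widetilde{P}\setminus\bar{S}=N_\rho\sqcup E_\rho$.

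\emph{Probabilistic part.} The bound $\pi^n(\mathcal{E}_2^c)\le\delta_n\le1/20$ is exactly Condition~\ref{con:1}, since $\mathcal{E}_2=\mathcal{E}$. For $\mathcal{E}_1$, note that $\mathbf{x}_i\ne\widetilde{\mathbf{x}}_i$ only if the pair is drawn from the non-identity part of $\pi$, which has mass at most $\alpha/5$. Hence $\sum_i\mathbb{I}_{\{\mathbf{x}_i\neq\widetilde{\mathbf{x}}_i\}}$ is stochastically dominated by a $\mathrm{Bin}(n,\alpha/5)$ variable. Markov's inequality gives $\pi^n(\mathcal{E}_1^c)\le (n\alpha/5)/(\alpha n)=1/5$. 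A union bound then yields $\pi^n(\mathcal{E}_1\cap\mathcal{E}_2)\ge1-1/5-1/20=3/4$.

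\emph{Conditional law.} Finally, on $\mathcal{E}_1\cap\mathcal{E}_2$ we check the three empirical FELP conditions. The empirical STV bound is $\mathcal{E}_1$ itself, with $\bar{S}$ taken as any $(1-\alpha)n$ uncorrupted indices. Empirical far exclusion is $\mathcal{E}_2$. For empirical local projection, each $\widetilde{\mathbf{x}}_i\in N_{n,\rho}$ is paired with its own clean $\mathbf{x}_i$, and we set $T_n(\widetilde{\mathbf{x}}_i)=\mathbf{x}_i$. When the pair came from the $(T,\mathrm{Id})$ part, we have $\mathbf{x}_i=T(\widetilde{\mathbf{x}}_i)$ and the distance is at most $\rho$. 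Distinct indices map to distinct sample points, so $T_{n\#}\widetilde{P}_n|_{N_{n,\rho}}\le P_n$.

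A remaining subtlety is that a pair from $\pi_{\mathrm{far}}$ might produce an $\widetilde{\mathbf{x}}_i$ lying within $\rho$ of $\operatorname{supp}P_n$ but with $\mathbf{x}_i$ far away. This cannot happen, because $\widetilde{\mathbf{x}}_i\in E_\rho$ implies $d(\widetilde{\mathbf{x}}_i,\operatorname{supp}P_n)\ge d(\widetilde{\mathbf{x}}_i,\operatorname{supp}P^*)>\rho$. The conditional right marginal is therefore supported on configurations in $\mathcal{C}^{\mathrm{FELP}}_{P_n}(\alpha,\rho)$, i.e.\ it lies in $\mathcal{D}_n^{\mathrm{FELP}}(P^*,\alpha)$.
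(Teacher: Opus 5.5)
Item~1 is where your proposal has a genuine gap, and neither of your suggested repairs can close it. The FELP definition only gives $T_\sharp\widetilde P|_{N_\rho}\le P^*$. Reading local projection as ``$T$ lands in $\operatorname{supp}(P^*)\setminus\bar S$'' adds a hypothesis the definition does not contain. Modifying $T$ on null sets, or rearranging mass inside $\operatorname{supp}(P^*)\setminus\bar S$, presupposes that some admissible map already sends $N_\rho$ into $\bar S^c$, and that can fail.

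Here is a counterexample. Take $P^*=\mathrm{Unif}[0,1]$, $\alpha=\frac12$ (so $\frac{\alpha}{5}=\frac1{10}$), $\rho=0.3$ (note $P^*\in\mathcal{W}_1(\frac14,\frac12)$), and $\widetilde P=P^*|_{[0.1,1]}+\mathrm{Leb}|_{(1,1.1]}$.
\begin{itemize}
\item The shared support is forced to be $\bar S=[0.1,1]$ up to null sets.
\item $E_\rho=\emptyset$, so far exclusion holds trivially.
\item $N_\rho=(1,1.1]$, and $T(x)=x-0.1$ satisfies local projection.
\end{itemize}
Hence $\widetilde P\in\mathcal{C}^{\mathrm{FELP}}_{P^*}(\frac{\alpha}{5},\rho)$. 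Yet any $T$ with $|x-T(x)|\le 0.3$ maps $N_\rho$ into $[0.7,1]\subset\bar S$, while $P^*|_{\bar S^c}$ lives on $[0,0.1)$. So item~1 fails for every admissible $T$, and $\lambda_{\mathrm{far}}$ is only a signed measure.

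The paper's proof does not derive item~1 either. It never mentions it, and $\lambda_{\mathrm{far}}\ge 0$ is simply written into the setup of Condition~\ref{con:1}. The honest fix is to state it as an explicit assumption: a strengthened local projection $T_\sharp\widetilde P|_{N_\rho}\le P^*|_{\bar S^c}$, meaning near outliers replace deleted clean mass. This holds in the cube example, where $T$ maps $S^{\mathrm{out}}_{\mathrm{loc}}$ onto $S^{\mathrm{in}}_{\mathrm{loc}}\subseteq\bar S^c$. Granted that, your verification of the two marginals of $\pi$ is correct, and it is a check the paper omits.

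Everything after item~1 is essentially the paper's argument:
\begin{itemize}
\item $\pi(\mathbf{x}\neq\widetilde{\mathbf{x}})\le\frac{\alpha}{5}$ with Markov gives $\pi^n(\mathcal{E}_1)\ge\frac45$.
\item Condition~\ref{con:1} gives $\pi^n(\mathcal{E}_2^c)\le\delta_n$, and the union bound gives $\frac34$.
\item Your $T_n(\widetilde{\mathbf{x}}_i)=\mathbf{x}_i$ is exactly the paper's map: $T$ on local-part samples, identity on uncorrupted points of $N_{n,\rho}$. Far points are excluded from $N_{n,\rho}$ by the same distance comparison.
\end{itemize}

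One step you share with the paper needs tightening. Under $\pi^n(\cdot\mid\mathcal{E}_1\cap\mathcal{E}_2)$ the clean sample is no longer i.i.d.\ $P^*$. So being supported on empirical-FELP configurations does not by itself place the conditional law in $\mathcal{D}_n^{\mathrm{FELP}}(P^*,\alpha)$, which is defined with i.i.d.\ clean samples. A clean fix: let the adversary, given i.i.d.\ $\mathbf{x}_{1:n}$, draw $\widetilde{\mathbf{x}}_{1:n}$ from $\pi^n$ conditionally on $\mathbf{x}_{1:n}$, and output $\mathbf{x}_{1:n}$ unchanged off $\mathcal{E}_1\cap\mathcal{E}_2$. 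The trivial configuration is FELP, so this law lies in $\mathcal{D}_n^{\mathrm{FELP}}(P^*,\alpha)$. It also coincides with $\pi^n$ on $\mathcal{E}_1\cap\mathcal{E}_2$, which is what the proof of Theorem~\ref{thm:empirical_minimax} actually uses, up to constants.
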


\begin{proof}
We first bound the probability of the event $\mathcal{E}_1$. By applying Markov's inequality, we obtain:
\begin{align*}
\pi^n(\mathcal{E}_1) &= 1 - \pi^n(\mathcal{E}_1^c) \\
&\ge 1 - \frac{\mathbb{E}_{\pi^n}\left[\sum_{i=1}^n \mathbb{I}_{\{\mathbf{x}_i \neq \widetilde{\mathbf{x}}_i\}}\right]}{\alpha n} \\
&\ge \frac{4}{5},
\end{align*}
where the second inequality follows from the assumption that $\widetilde{P} \in \mathcal{C}_{P^*}^{\operatorname{FELP}}\left(\frac{\alpha}{5}, \rho\right)$ and the construction of the joint coupling $\pi$, which ensures that $\pi(\mathbf{x} \neq \widetilde{\mathbf{x}}) \le \frac{\alpha}{5}$. Applying the union bound, we obtain:
\begin{align*}
\pi^n(\mathcal{E}_1 \cap \mathcal{E}_2) \ge \pi^n(\mathcal{E}_1) - \pi^n(\mathcal{E}_2^c) \ge \frac{4}{5} - \delta_n \ge \frac{3}{4}.
\end{align*}

Next, we show that the second (right) marginal of the conditional distribution $\pi^n(\cdot, \cdot \mid \mathcal{E}_1 \cap \mathcal{E}_2)$ belongs to $\mathcal{D}_n^{\operatorname{FELP}}(P^*, \alpha)$. To establish this, it suffices to show that the empirical samples $\widetilde{\mathbf{x}}_1, \dots, \widetilde{\mathbf{x}}_n$ satisfy the empirical FELP contamination conditions, i.e., $\widetilde{P}_n \in \mathcal{C}_{P_n}^{\operatorname{FELP}}(\alpha, \rho)$.

On the event $\mathcal{E}_1 \cap \mathcal{E}_2$, the first two conditions of FELP contamination (the support total variation bound and the far exclusion condition) are guaranteed by the definitions of $\mathcal{E}_1$ and $\mathcal{E}_2$, respectively. Thus, we only need to verify the third condition, namely the local projection property.

To do this, we partition the contaminated empirical support. Let $S_1 = \{\widetilde{\mathbf{x}}_j : j \in [n], \, (\mathbf{x}_j, \widetilde{\mathbf{x}}_j) \sim (T, \operatorname{Id})_{\#} \nu_{\operatorname{loc}}\}$. For any $\widetilde{\mathbf{x}}_j \in S_1$, we have:
\begin{align*}
d(\widetilde{\mathbf{x}}_j, \, \operatorname{supp}(P_n)) \le d(\widetilde{\mathbf{x}}_j, \mathbf{x}_j) = d(\widetilde{\mathbf{x}}_j, T(\widetilde{\mathbf{x}}_j)) \le \rho,
\end{align*}
which implies $S_1 \subseteq N_{n,\rho}$. Conversely, let $S_2 = \{\widetilde{\mathbf{x}}_j : j \in [n], \, (\mathbf{x}_j, \widetilde{\mathbf{x}}_j) \sim \pi_{\operatorname{far}}\}$. For any $\widetilde{\mathbf{x}}_j \in S_2$, the definition of $\pi_{\operatorname{far}}$ ensures that $\widetilde{\mathbf{x}}_j \in E_\rho$. Consequently,
\begin{align*}
d(\widetilde{\mathbf{x}}_j, \, \operatorname{supp}(P_n)) \ge d(\widetilde{\mathbf{x}}_j, \, \operatorname{supp}(P^*)) \ge \rho,
\end{align*}
which verifies that $S_2 \subseteq E_{n,\rho}$.

Since the contaminated support can be decomposed as
\begin{align*}
S_1 \cup S_2 = \operatorname{supp}(\widetilde{P}_n) \setminus \{ \widetilde{\mathbf{x}}_z : z \in [n], \, \widetilde{\mathbf{x}}_z = \mathbf{x}_z \},
\end{align*}
and since the empirical local and far regions partition the contaminated support such that
\begin{align*}
\left( N_{n,\rho} \setminus \{ \widetilde{\mathbf{x}}_z : z \in [n], \, \widetilde{\mathbf{x}}_z = \mathbf{x}_z \} \right) \cap E_{n,\rho} = \emptyset,
\end{align*}
we identify $S_1 = N_{n,\rho} \setminus \{ \widetilde{\mathbf{x}}_z : z \in [n], \, \widetilde{\mathbf{x}}_z = \mathbf{x}_z \}$ and $S_2 = E_{n,\rho}$.

We now construct the empirical local projection map $T_n \colon N_{n,\rho} \to \operatorname{supp}(P_n)$ by setting:
\begin{align*}
T_n(\widetilde{\mathbf{x}}_j) =
\begin{cases}
T(\widetilde{\mathbf{x}}_j) & \text{if } \widetilde{\mathbf{x}}_j \in S_1, \\
\widetilde{\mathbf{x}}_j & \text{if } \widetilde{\mathbf{x}}_j \in N_{n,\rho} \setminus S_1.
\end{cases}
\end{align*}
By construction, this map satisfies $d(\widetilde{\mathbf{x}}_j, \, T_n(\widetilde{\mathbf{x}}_j)) \le \rho$ for all $\widetilde{\mathbf{x}}_j \in N_{n,\rho}$, and $T_{n\#} \widetilde{P}_n|_{N_{n,\rho}} \le P_n$.

This confirms that $\widetilde{P}_n \in \mathcal{C}^{\operatorname{FELP}}_{P_n}(\alpha, \rho)$ holds on $\mathcal{E}_1 \cap \mathcal{E}_2$. Consequently, the second marginal distribution of the joint conditional measure $\pi^n(\cdot, \cdot \mid \mathcal{E}_1 \cap \mathcal{E}_2)$ belongs to the empirical FELP family $\mathcal{D}_n^{\operatorname{FELP}}(P^*, \alpha)$, completing the proof.
\end{proof}

We now prove the minimax optimality results in the empirical case.

\begin{proof}[Proof of Theorem \ref{thm:empirical_minimax}]
We begin by establishing the upper bound. Taking $\mathcal{T}_n = \widehat{P}_{\operatorname{WF}}$ in the definition of the empirical minimax risk, we have:
\begin{align*}
\mathcal{R}_{1,n}(\alpha,\rho) &= \inf_{\mathcal{T}_n} \sup_{P^* \in \mathcal{G}} \sup_{\widetilde{\mathbb{P}}_n \in \mathcal{D}_n^{\operatorname{FELP}}(P^*,\alpha)} \mathbb{E}_{\widetilde{\mathbb{P}}_n} \left[ W_1(\mathcal{T}_n(\widetilde{P}_n), P^*) \right] \\
&\le \sup_{P^* \in \mathcal{G}} \sup_{\widetilde{\mathbb{P}}_n \in \mathcal{D}_n^{\operatorname{FELP}}(P^*,\alpha)} \mathbb{E}_{\widetilde{\mathbb{P}}_n} \left[ W_1(\widehat{P}_{\operatorname{WF}}, P^*) \right] \\
&\lesssim \rho + \sup_{P^* \in \mathcal{G}} \mathbb{E}_{\mathbb{P}_n} \left[ W_1(P_n, P^*) \right],
\end{align*}
where the last inequality follows directly from Theorem~\ref{thm:empirical_risk}, which holds uniformly for any empirical FELP contamination distribution $\widetilde{\mathbb{P}}_n \in \mathcal{D}_n^{\operatorname{FELP}}(P^*,\alpha)$ and any $P^* \in \mathcal{G}$.

Next, we establish the minimax lower bound. By definition of the empirical FELP family, the uncontaminated empirical measure $P_n$ satisfies $\mathbb{P}_n \in \mathcal{D}_n^{\operatorname{FELP}}(P^*,\alpha)$. Thus, we immediately obtain:
\begin{align*}
\inf_{\mathcal{T}_n} \sup_{P^* \in \mathcal{G}} \mathbb{E}_{\mathbb{P}_n} \left[ W_1(\mathcal{T}_n(P_n), P^*) \right] \le \mathcal{R}_{1,n}(\alpha,\rho).
\end{align*}
To establish the tight rate, it remains to prove that the population minimax risk is controlled by the empirical minimax risk, i.e., $\mathcal{R}_{1,\infty}\left(\frac{\alpha}{5}, \rho\right) \lesssim \mathcal{R}_{1,n}(\alpha, \rho)$. To achieve this, it suffices to construct a population-level estimator $\bar{\mathcal{T}}$ that satisfies:
\begin{align*}
\sup_{P^* \in \mathcal{G}} \sup_{\widetilde{P} \in \mathcal{C}^{\operatorname{FELP}}_{P^*}\left(\frac{\alpha}{5}, \rho\right)} W_1\left( \bar{\mathcal{T}}(\widetilde{P}), P^* \right) \lesssim \mathcal{R}_{1,n}(\alpha, \rho).
\end{align*}

Without loss of generality, we assume that the infimum in the definition of $\mathcal{R}_{1,n}(\alpha, \rho)$ is achieved by an empirical estimator $\bar{\mathcal{T}}_n$ (otherwise, we can proceed with a standard approximation argument and take the limit):
\begin{align*}
\sup_{P^* \in \mathcal{G}} \sup_{\widetilde{\mathbb{P}}_n \in \mathcal{D}_n^{\operatorname{FELP}}(P^*,\alpha)} \mathbb{E}_{\widetilde{\mathbb{P}}_n} \left[ W_1\left( \bar{\mathcal{T}}_n(\widetilde{P}_n), P^* \right) \right] \le \mathcal{R}_{1,n}(\alpha, \rho).
\end{align*}
For any fixed $P^* \in \mathcal{G}$ and any contaminated population distribution $\widetilde{P} \in \mathcal{C}^{\operatorname{FELP}}_{P^*}\left(\frac{\alpha}{5}, \rho\right)$, the conditions of Lemma~\ref{lem:empirical_lifting} are satisfied. Thus, there exists a joint coupling $\pi \in \Pi(P^*, \widetilde{P})$ such that the second (right) marginal distribution of the conditional joint measure $\pi^n(\cdot, \cdot \mid \mathcal{E}_1 \cap \mathcal{E}_2)$, denoted as $\widetilde{\mathbb{P}}'_n$, belongs to the empirical FELP class $\mathcal{D}^{\operatorname{FELP}}_n(P^*, \alpha)$.

By the construction of $\bar{\mathcal{T}}_n$, we can bound the probability of the estimation error under the product measure $\widetilde{P}^n$ as:
\begin{align*}
\widetilde{P}^n \left\{ W_1\left( \bar{\mathcal{T}}_n(\widetilde{P}_n), P^* \right) \le 4 \mathcal{R}_{1,n}(\alpha, \rho) \right\}
&\ge \pi^n(\mathcal{E}_1 \cap \mathcal{E}_2) \cdot \widetilde{\mathbb{P}}'_n \left\{ W_1\left( \bar{\mathcal{T}}_n(\widetilde{P}_n), P^* \right) \le 4 \mathcal{R}_{1,n}(\alpha, \rho) \right\} \\
&\ge \frac{3}{4} \left( 1 - \widetilde{\mathbb{P}}'_n \left\{ W_1\left( \bar{\mathcal{T}}_n(\widetilde{P}_n), P^* \right) > 4 \mathcal{R}_{1,n}(\alpha, \rho) \right\} \right) \\
&\ge \frac{3}{4} \left( 1 - \frac{\mathbb{E}_{\widetilde{\mathbb{P}}'_n}\left[ W_1\left( \bar{\mathcal{T}}_n(\widetilde{P}_n), P^* \right) \right]}{4 \mathcal{R}_{1,n}(\alpha, \rho)} \right) \\
&\ge \frac{3}{4} \left( 1 - \frac{1}{4} \right) = \frac{9}{16} > \frac{1}{2},
\end{align*}
where the second inequality follows from Lemma~\ref{lem:empirical_lifting}, the third from Markov's inequality, and the fourth from the definition of $\bar{\mathcal{T}}_n$ and the fact that $\widetilde{\mathbb{P}}'_n \in \mathcal{D}^{\operatorname{FELP}}_n(P^*, \alpha)$.

Now, we define the following candidate set of clean distributions:
\begin{align*}
\Gamma(\widetilde{P}) := \left\{ \nu \in \mathcal{G} : \widetilde{P}^n \left\{ W_1\left( \bar{\mathcal{T}}_n(\widetilde{P}_n), \nu \right) \le 4 \mathcal{R}_{1,n}(\alpha, \rho) \right\} > \frac{1}{2} \right\}.
\end{align*}
The preceding inequality guarantees that $P^* \in \Gamma(\widetilde{P})$, meaning $\Gamma(\widetilde{P})$ is non-empty. For the given contaminated population $\widetilde{P}$, we define our population-level estimator $\bar{\mathcal{T}}(\widetilde{P})$ by choosing an arbitrary element from $\Gamma(\widetilde{P})$. By definition of $\Gamma(\widetilde{P})$, we have:
\begin{align*}
\widetilde{P}^n \left\{ W_1\left( \bar{\mathcal{T}}_n(\widetilde{P}_n), \bar{\mathcal{T}}(\widetilde{P}) \right) \le 4 \mathcal{R}_{1,n}(\alpha, \rho) \right\} > \frac{1}{2},
\end{align*}
and we already established that:
\begin{align*}
\widetilde{P}^n \left\{ W_1\left( \bar{\mathcal{T}}_n(\widetilde{P}_n), P^* \right) \le 4 \mathcal{R}_{1,n}(\alpha, \rho) \right\} > \frac{1}{2}.
\end{align*}
Since both events hold with probability strictly greater than $1/2$ under $\widetilde{P}^n$, they must have a non-empty intersection. On this intersection, the triangle inequality yields:
\begin{align*}
W_1\left( P^*, \bar{\mathcal{T}}(\widetilde{P}) \right) \le W_1\left( P^*, \bar{\mathcal{T}}_n(\widetilde{P}_n) \right) + W_1\left( \bar{\mathcal{T}}_n(\widetilde{P}_n), \bar{\mathcal{T}}(\widetilde{P}) \right) \le 8 \mathcal{R}_{1,n}(\alpha, \rho).
\end{align*}
Since this bound holds uniformly for any $P^* \in \mathcal{G}$ and $\widetilde{P} \in \mathcal{C}^{\operatorname{FELP}}_{P^*}\left(\frac{\alpha}{5}, \rho\right)$, we conclude that:
\begin{align*}
\frac{1}{8} \mathcal{R}_{1, \infty}\left(\frac{\alpha}{5}, \rho\right) \le \frac{1}{8} \sup_{P^* \in \mathcal{G}} \sup_{\widetilde{P} \in \mathcal{C}^{\operatorname{FELP}}_{P^*}\left(\frac{\alpha}{5}, \rho\right)} W_1\left( P^*, \bar{\mathcal{T}}(\widetilde{P}) \right) \le \mathcal{R}_{1,n}(\alpha, \rho).
\end{align*}

Finally, we apply this general result to the specialized class $\mathcal{G} = \mathcal{G}_{\operatorname{cov}}(\sigma)$. By Theorem 3.1 of \citet{lei2020convergence}, we have $\sup_{P^* \in \mathcal{G}_{\operatorname{cov}}(\sigma)} \mathbb{E}[W_1(P_n, P^*)] = \mathcal{O}(\sqrt{d} n^{-1/d})$, while Theorem 3 of \citet{niles2022minimax} provides the corresponding lower bound $\inf_{\mathcal{T}_n} \sup_{P^* \in \mathcal{G}_{\operatorname{cov}}(\sigma)} \mathbb{E}[W_1(\mathcal{T}_n(P_n), P^*)] = \Omega(\sqrt{d} n^{-1/d})$. Furthermore, Theorem~\ref{thm:minimax_optimal} guarantees that $\mathcal{R}_{1,\infty}\left(\frac{\alpha}{5}, \rho_{\sigma}\right) \asymp \rho_{\sigma}$, where the covariance-resilience parameter scales as $\rho_{\sigma} \asymp \sigma \sqrt{d \alpha}$ \citep[Theorem 2]{nietert2023robust}. Combining these rates yields the matching minimax rate $\Theta\left( \sigma \sqrt{d\alpha} + \sqrt{d} n^{-1/d} \right)$, which is achieved by the proposed filter-based estimator $\widehat{P}_{\operatorname{WF}}$.
\end{proof}

\subsection{Convergence of the Algorithm}
In this section, we analyze the convergence properties of the proposed Wasserstein Filter (WF) algorithm and provide the proof of Theorem~\ref{thm:convergence_rate}.

\begin{proof}[Proof of Theorem \ref{thm:convergence_rate}]
Let $\mathbf{p} = \sigma(\mathbf{b})$ denote the filtered probability vector. We recall the dual formulation of the discrete $W_1$ optimal transport problem:
\begin{align*}
\Psi(\mathbf{p}) := W_1(P_n, P_{\mathbf{p}}) = \max_{\substack{\mathbf{u}, \mathbf{v} \in \mathbb{R}^n \\ u_i + v_j \le C_{ij}}} \langle \mathbf{u}, \, \mathbf{1}/n \rangle + \langle \mathbf{v}, \, \mathbf{p} \rangle.
\end{align*}
Under Condition 2, the optimal dual potential $\mathbf{v}^*(\mathbf{p})$ is unique. Consequently, Danskin's theorem implies that $\Psi(\mathbf{p})$ is differentiable with gradient:
\begin{align*}
\nabla \Psi(\mathbf{p}) = \mathbf{v}^*(\mathbf{p}).
\end{align*}
For the penalty term, Condition 3 ensures that $p_i \neq u_\alpha$ for all $i \in [n]$, which guarantees differentiability. The gradient of the penalty with respect to $\mathbf{p}$ is given by:
\begin{align*}
\nabla_{\mathbf{p}} \left[ \beta \sum_{i=1}^n (p_i - u_\alpha)_+ \right] = \beta \mathbf{m}(\mathbf{p}), \quad \text{where} \quad \mathbf{m}_i(\mathbf{p}) = \mathbb{I}_{\{p_i > u_\alpha\}}.
\end{align*}
By applying the chain rule, we obtain the gradient of the objective function $F(\mathbf{b})$ with respect to the filtered probability vector $\sigma(\mathbf{b})$:
\begin{align*}
\nabla_{\sigma(\mathbf{b})} F(\mathbf{b}) = \mathbf{v}^*(\sigma(\mathbf{b})) - \beta \mathbf{m}(\sigma(\mathbf{b})).
\end{align*}
Therefore, the full gradient of $F$ with respect to the parameter vector $\mathbf{b}$ is:
\begin{align*}
\nabla F(\mathbf{b}) = \frac{\partial F(\mathbf{b})}{\partial \sigma(\mathbf{b})} \frac{\partial \sigma(\mathbf{b})}{\partial \mathbf{b}} = \mathbf{J}_\sigma(\mathbf{b}) \left( \mathbf{v}^*(\sigma(\mathbf{b})) - \beta \mathbf{m}(\sigma(\mathbf{b})) \right),
\end{align*}
where $\mathbf{J}_\sigma(\mathbf{b}) = \operatorname{diag}(\sigma(\mathbf{b})) - \sigma(\mathbf{b})\sigma(\mathbf{b})^\top$ is the Jacobian matrix of the softmax function $\sigma(\mathbf{b})$.

To analyze the Lipschitz properties of this gradient, we define the following shorthand notation:
\begin{align*}
\mathbf{A}_\sigma(\mathbf{b}) := \mathbf{v}^*(\sigma(\mathbf{b})) - \beta \mathbf{m}(\sigma(\mathbf{b})),
\end{align*}
and introduce the following supremum and Lipschitz constants over the compact set $\mathbb{K}$:
\begin{align*}
L_\sigma := \operatorname{Lip}_{\mathbb{K}}(\sigma), \quad
L_{\mathbf{J}} := \operatorname{Lip}_{\mathbb{K}}(\mathbf{J}_{\sigma}), \quad
M_{\mathbf{J}} := \sup_{\mathbf{b} \in \mathbb{K}} \|\mathbf{J}_\sigma(\mathbf{b})\|, \quad \text{and} \quad
M_{\mathbf{A}} := \sup_{\mathbf{b} \in \mathbb{K}} \|\mathbf{A}_{\sigma}(\mathbf{b})\|.
\end{align*}
Since the softmax function $\sigma$ is smooth, both $\sigma$ and its Jacobian $\mathbf{J}_\sigma$ are Lipschitz continuous and bounded on the compact set $\mathbb{K}$. Furthermore, by Conditions 3 and 4, the vector $\mathbf{A}_\sigma(\mathbf{b})$ is bounded, ensuring that $M_{\mathbf{J}}$, $M_{\mathbf{A}}$, $L_\sigma$, and $L_{\mathbf{J}}$ are all finite.

Now, for any $\mathbf{b}, \mathbf{c} \in \mathbb{K}$ with corresponding filtered vectors $\mathbf{p} = \sigma(\mathbf{b})$ and $\mathbf{q} = \sigma(\mathbf{c})$, we bound the gradient difference as:
\begin{align*}
\|\nabla F(\mathbf{b}) - \nabla F(\mathbf{c}) \|
&= \left\| \left[\mathbf{J}_\sigma(\mathbf{b}) - \mathbf{J}_\sigma(\mathbf{c})\right] \mathbf{A}_\sigma(\mathbf{b}) + \mathbf{J}_\sigma(\mathbf{c}) \left[ \mathbf{A}_\sigma(\mathbf{b}) - \mathbf{A}_\sigma(\mathbf{c}) \right] \right\| \\
&\le \left\| \left[\mathbf{J}_\sigma(\mathbf{b}) - \mathbf{J}_\sigma(\mathbf{c})\right] \mathbf{A}_\sigma(\mathbf{b}) \right\| + \left\| \mathbf{J}_\sigma(\mathbf{c}) \left[ \mathbf{v}^*(\mathbf{p}) - \mathbf{v}^*(\mathbf{q}) \right] \right\| \\
&\le L_{\mathbf{J}} M_{\mathbf{A}} \|\mathbf{b} - \mathbf{c}\| + M_{\mathbf{J}} \|\mathbf{v}^*(\mathbf{p}) - \mathbf{v}^*(\mathbf{q})\| \\
&\le \left( L_{\mathbf{J}} M_{\mathbf{A}} + M_{\mathbf{J}} L_\sigma L_{\mathbf{v}^*} \right) \|\mathbf{b} - \mathbf{c}\|,
\end{align*}
where the first inequality follows from Condition 3 (which guarantees that $\mathbf{m}(\sigma(\mathbf{b})) = \mathbf{m}_0$ remains locally constant on $\mathbb{K}$), and the third inequality follows from the $L_{\mathbf{v}^*}$-Lipschitz continuity of the optimal dual potential $\mathbf{v}^*$ assumed in Condition 4.

Thus, the gradient $\nabla F(\mathbf{b})$ is $L_F$-Lipschitz continuous on $\mathbb{K}$ with Lipschitz constant $L_F := L_{\mathbf{J}} M_{\mathbf{A}} + M_{\mathbf{J}} L_\sigma L_{\mathbf{v}^*}$. By the standard descent (ascent) lemma, this Lipschitz property implies:
\begin{align*}
F(\mathbf{b}_{r+1}) &\ge F(\mathbf{b}_r) + \langle \nabla F(\mathbf{b}_r), \, \mathbf{b}_{r+1} - \mathbf{b}_{r} \rangle - \frac{L_F}{2} \|\mathbf{b}_{r+1} - \mathbf{b}_r\|^2 \\
&= F(\mathbf{b}_r) + \left( \gamma - \frac{L_F \gamma^2}{2} \right) \|\nabla F(\mathbf{b}_r)\|^2.
\end{align*}
Since the step size satisfies $0 < \gamma < \frac{2}{L_F}$, the coefficient $\gamma - \frac{L_F \gamma^2}{2}$ is strictly positive, guaranteeing that the objective values $\{F(\mathbf{b}_r)\}$ are non-decreasing. Summing this inequality from iteration $r = r_0$ to $r = r_0 + R - 1$ yields:
\begin{align*}
\sum_{r=r_0}^{r_0+R-1} \|\nabla F(\mathbf{b}_r)\|^2
&\le \frac{F(\mathbf{b}_{r_0+R}) - F(\mathbf{b}_{r_0})}{\gamma - \frac{L_F\gamma^2}{2}} \\
&\le \frac{\max_{i,j \in [n]} \|\mathbf{x}_i - \mathbf{x}_j\| - F(\mathbf{b}_{r_0})}{\gamma - \frac{L_F\gamma^2}{2}},
\end{align*}
where we used the fact that the maximum possible value of the $W_1$ distance (and thus $F(\mathbf{b})$) is bounded by the diameter of the empirical support. It follows immediately that:
\begin{align*}
\min_{r \in \{r_0, \dots, r_0 + R - 1\}} \|\nabla F(\mathbf{b}_r)\| \le \sqrt{ \frac{\max_{i,j \in [n]} \|\mathbf{x}_i - \mathbf{x}_j\| - F(\mathbf{b}_{r_0})}{\left( \gamma - \frac{L_F\gamma^2}{2} \right) R} },
\end{align*}
which completes the proof.
\end{proof}

\section{Additional Numerical Experiment Results}
\subsection{Two-Dimensional Synthetic Data}
\label{app:auto_alpha}

Figure \ref{fig:2d_visualization} shows the two-moon and circle data examples.

\begin{figure}[!h]
    \centering
    \begin{subfigure}[b]{0.48\textwidth}
        \centering
        \includegraphics[width=\linewidth]{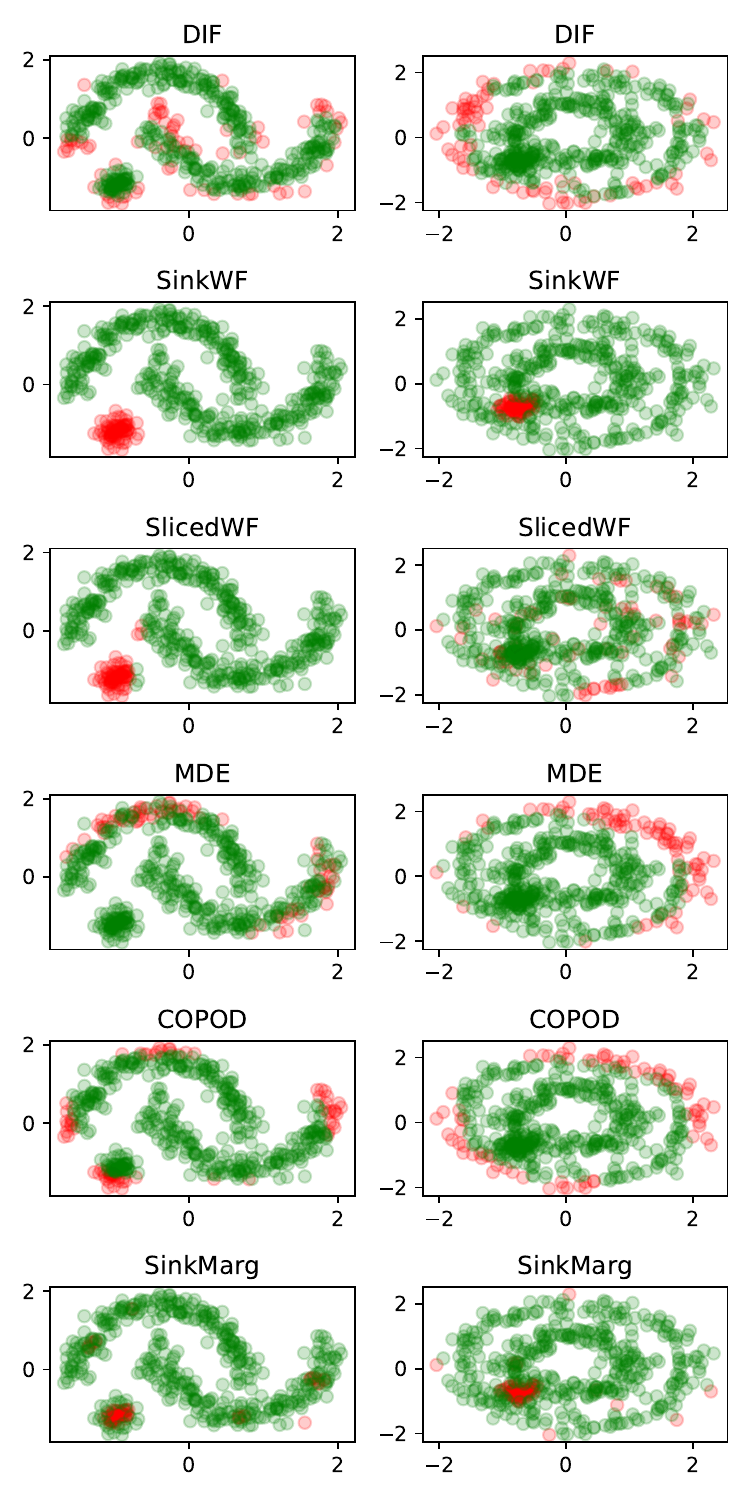}
        \caption{$\alpha=0.15$ with weak contamination, corrupted samples are gathered around $(-0.5,-0.5)$.}
    \end{subfigure}
    \hfill
    \begin{subfigure}[b]{0.48\textwidth}
        \centering
        \includegraphics[width=\linewidth]{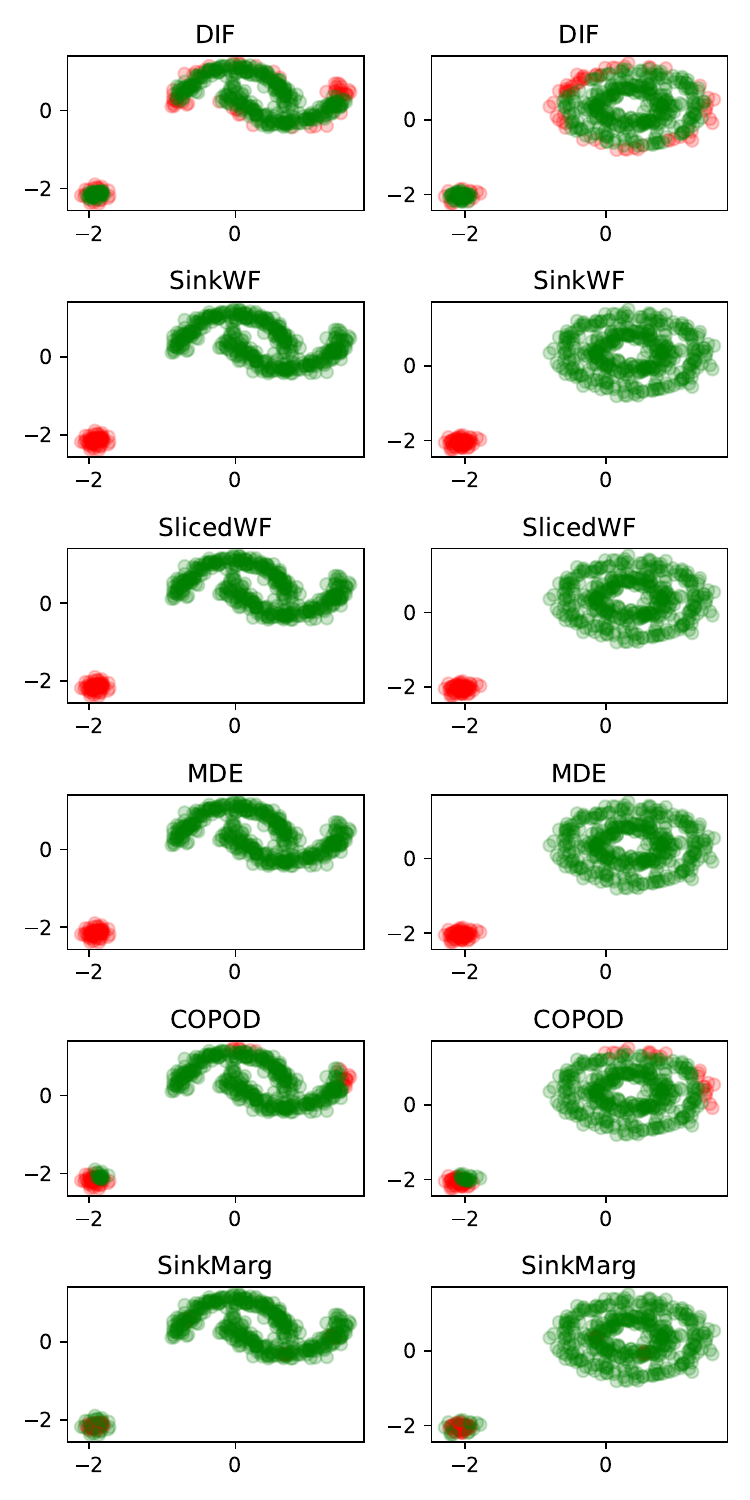}
        \caption{$\alpha=0.15$ with strong contamination, corrupted samples are gathered around $(-2.5,-2.5)$.}
    \end{subfigure}

    \caption{The two-moon and circle data examples. Green and red points are normal and corrupted samples classified by algorithm.}
    \label{fig:2d_visualization}
\end{figure}

\begin{table}[!h]
\centering
\caption{Results of SinkWF and SlicedWF on 2-D data using projection implementation. Weak and strong contamination implies the signal of corruption
strength. The mean of 50 times repetition are recorded and the corresponding standard error is
recorded in parenthesis right side.}
\begin{tabular}{clcccc} \bottomrule
                &       \multicolumn{5}{c}{Weak Contamination} \\ \cline{2-6}
                &       &  \multicolumn{2}{c}{AUC-ROC} & \multicolumn{2}{c}{AUC-PR} \\ \cline{3-6}
                &       & $\alpha=0.05$ & $\alpha=0.15$ & $\alpha=0.05$ & $\alpha=0.15$  \\
                &       SlicedWF  & 0.51(0.04) & 0.50(0.11) & 0.06(0.02) & 0.18(0.05)   \\
                &       SinkWF    & 0.67(0.12) & 0.90(0.07) & 0.18(0.15) & 0.73(0.16)    \\
  Twomoon data  &       \multicolumn{5}{c}{Strong Contamination} \\ \cline{2-6}
                &       & \multicolumn{2}{c}{AUC-ROC} & \multicolumn{2}{c}{AUC-PR} \\ \cline{3-6}
                &       & $\alpha=0.05$ & $\alpha=0.15$ & $\alpha=0.05$ & $\alpha=0.15$  \\
                &       SlicedWF  & 0.67(0.25) & 0.60(0.27) & 0.40(0.45) & 0.42(0.39)   \\
                &       SinkWF    & 0.69(0.05) & 0.79(0.04) & 0.24(0.11) & 0.49(0.11) \\ \hline
                                &       \multicolumn{5}{c}{Weak Contamination} \\ \cline{2-6}
                &       &  \multicolumn{2}{c}{AUC-ROC} & \multicolumn{2}{c}{AUC-PR} \\ \cline{3-6}
                &       & $\alpha=0.05$ & $\alpha=0.15$ & $\alpha=0.05$ & $\alpha=0.15$  \\
                &       SlicedWF  & 0.53(0.05) & 0.52(0.11) & 0.07(0.03) & 0.19(0.05)   \\
                &       SinkWF    & 0.63(0.06) & 0.72(0.03) & 0.14(0.06) & 0.35(0.05)
                    \\
  Circles data  &       \multicolumn{5}{c}{Strong Contamination} \\ \cline{2-6}
                &       & \multicolumn{2}{c}{AUC-ROC} & \multicolumn{2}{c}{AUC-PR} \\ \cline{3-6}
                &       & $\alpha=0.05$ & $\alpha=0.15$ & $\alpha=0.05$ & $\alpha=0.15$  \\
                &       SlicedWF  & 0.64(0.21) & 0.58(0.25) & 0.31(0.34) & 0.38(0.34)   \\
                &       SinkWF    & 0.69(0.04) & 0.78(0.06) & 0.22(0.09) & 0.45(0.13)   \\ \bottomrule
\end{tabular}
\end{table}

\begin{table}[!h]
\centering
\caption{Hyperparameters setting in 2-D data example. (p) means handle constraint with projection implementation while (r) means regularization implementation. In MDE, $\delta$ is the threshold for maximum eigenvalue of covariance matrix, $\mu$ is the factor that $(\mu\alpha) n$ samples with largest scores for random filtering.
Parameters notation in DIF are consistent with its corresponding paper. COPOD is tuning free. }
\begin{tabular}{cc} \bottomrule
Algorithms     &   Parameter Search Scope \\ \hline
DIF            &   $r\in\{10,50\},t\in\{6,10\},n\in\{128,256\} $ \\
MDE            &   $\delta\in\{0.1,0.4,0.7\}, \mu\in\{2,3,4\}$   \\
SlicedWF (p)   &   $R\in\{100,300\}, \gamma\in\{1,10^{-1},10^{-2},10^{-3}\}, S=3000$   \\
SlicedWF (r)   &   $R\in\{100,300\}, \gamma\in\{1,10^{-1},10^{-2},10^{-3}\}, S=3000, \beta\in\{1,10, 100\}$   \\
SinkWF (p)     &  $R=100,\lambda\in\{10^{-1},10^{-2},10^{-3}\},\gamma\in\{10^{-1},10^{-2},10^{-3},10^{-4}\}$   \\
SinkWF (r)     &  $R=100,\lambda\in\{10^{-1},10^{-2},10^{-3}\},\gamma\in\{10^{-1},10^{-2},10^{-3},10^{-4}\}, \beta\in\{1,10,100\}$   \\
SinkMarg       &  $\lambda\in\{10^{-1},10^{-2},10^{-3}\}$         \\
\bottomrule
\end{tabular}
\end{table}

\clearpage
\subsection{Benchmark Datasets}

Table \ref{table:benchmark_graph_penalty} below give the full performance results of the methods considered in this work for the graph benchmark datasets.

\begin{table}[!h]
\footnotesize{
\centering
\caption{Full results of graph benchmark datasets.}
\label{table:benchmark_graph_penalty}
\begin{tabular}{lllcccccc} \bottomrule
& & & \multicolumn{6}{c}{AUC-ROC} \\ \cline{4-9}
            & Dataset      & $(n,d,\alpha)$ & COPOD & DIF & MDE & SlicedWF &  SinkWF & SinkMarg \\ \cline{2-9}
\multirow{12}{*}{Graph}
            & NR-AR        & $(7823,300,3.94\%)$  & 0.71  & 0.71  & 0.50 & 0.56 & 0.76 & 0.56  \\
            & NR-ALBD      & $(7823,300,3.03\%)$  & 0.75  & 0.75  & 0.50 & 0.51 & 0.77 & 0.54 \\
            & NR-AhR       & $(7823,300,9.82\%)$  & 0.39  & 0.44  & 0.50 & 0.51 & 0.51 & 0.48 \\
            & NR-Aro       & $(7823,300,3.83\%)$  & 0.62  & 0.65  & 0.50 & 0.53 & 0.65 & 0.58 \\
            & NR-ER        & $(7823,300,10.11\%)$ & 0.47  & 0.51  & 0.50 & 0.56 & 0.57 & 0.47 \\
            & NR-ELBD      & $(7823,300,4.46\%)$  & 0.55  & 0.60  & 0.50 & 0.56 & 0.62 & 0.48 \\
            & NR-gamma     & $(7823,300,2.38\%)$  & 0.54  & 0.56  & 0.50 & 0.52 & 0.56 & 0.58 \\
            & SR-ARE       & $(7823,300,12.04\%)$ & 0.53  & 0.57  & 0.50 & 0.54 & 0.56 & 0.54 \\
            & SR-ATAD5     & $(7823,300,3.37\%)$  & 0.44  & 0.50  & 0.50 & 0.52 & 0.53 & 0.52 \\
            & SR-HSE       & $(7823,300,4.76\%)$  & 0.54  & 0.57  & 0.50 & 0.50 & 0.55 & 0.57 \\
            & SR-MMP       & $(7823,300,11.73\%)$ & 0.51  & 0.57  & 0.50 & 0.59 & 0.60 & 0.49 \\
            & SR-p53       & $(7823,300,5.41\%)$  & 0.57  & 0.61  & 0.50 & 0.55 & 0.61 & 0.62 \\ \hline
            & mean         &                      & 0.55  & 0.59  & 0.50 & 0.54 & 0.61 & 0.54 \\
            & wins         &                      & 0.00\% & 25.00\% & 0.00\% & 8.33\% & 66.67\% & 25.00\% \\
& & & \multicolumn{6}{c}{AUC-PR} \\ \cline{4-9}
             & Dataset & $(n,d,\alpha)$ & COPOD & DIF & MDE & SlicedWF &  SinkWF & SinkMarg \\ \cline{2-9}
\multirow{12}{*}{Graph}
            & NR-AR        & $(7823,300,3.94\%)$  & 0.09  & 0.09 & 0.04 & 0.05 & 0.16 & 0.05  \\
            & NR-ALBD      & $(7823,300,3.03\%)$  & 0.08  & 0.07 & 0.03 & 0.03 & 0.13 & 0.03 \\
            & NR-AhR       & $(7823,300,9.82\%)$  & 0.07  & 0.08 & 0.10 & 0.10 & 0.10 & 0.09 \\
            & NR-Aro       & $(7823,300,3.83\%)$  & 0.06  & 0.06 & 0.04 & 0.04 & 0.06 & 0.05 \\
            & NR-ER        & $(7823,300,10.11\%)$ & 0.10  & 0.11 & 0.10 & 0.11 & 0.13 & 0.09 \\
            & NR-ELBD      & $(7823,300,4.46\%)$  & 0.06  & 0.06 & 0.04 & 0.05 & 0.08 & 0.05 \\
            & NR-gamma     & $(7823,300,2.38\%)$  & 0.03  & 0.03 & 0.02 & 0.02 & 0.03 & 0.03 \\
            & SR-ARE       & $(7823,300,12.04\%)$ & 0.13  & 0.13 & 0.12 & 0.13 & 0.14 & 0.14 \\
            & SR-ATAD5     & $(7823,300,3.37\%)$  & 0.03  & 0.03 & 0.03 & 0.04 & 0.04 & 0.04 \\
            & SR-HSE       & $(7823,300,4.76\%)$  & 0.05  & 0.05 & 0.05 & 0.05 & 0.05 & 0.06 \\
            & SR-MMP       & $(7823,300,11.73\%)$ & 0.13  & 0.14 & 0.12 & 0.15 & 0.16 & 0.12 \\
            & SR-p53       & $(7823,300,5.41\%)$  & 0.08  & 0.08 & 0.05 & 0.07 & 0.07 & 0.09 \\ \hline
            & mean         &                      & 0.08  & 0.08 & 0.06 & 0.07 & 0.10 & 0.07  \\
            & wins         &                      & 16.67\%  & 16.67\%  & 8.33\%  & 16.67\% & 83.33\%  & 41.67\%  \\
\bottomrule
\end{tabular}
}
\end{table}

\begin{table}[!h]
\centering
\caption{Full results of tabular benchmark datasets with projection implementation and adaptive $\alpha$ on penalty implementation.}
\begin{tabular}{clcccc} \bottomrule
& & & \multicolumn{3}{c}{AUC-ROC} \\ \cline{4-6}
            & Dataset & $(n,d,\alpha)$  & SlicedWF &  SinkWF  & SinkAda  \\ \cline{2-6}
\multirow{14}{*}{Tabular}    & vertebral   & $(240,6,12.50\%)$    & 0.67  & 0.58  & 0.70   \\
                             & yeast       & $(1484,8,34.16\%)$   & 0.62  & 0.52  & 0.60  \\
                             & annthyroid  & $(7200,6,7.42\%)$    & 0.60  & 0.55  & 0.59  \\
                             & breastw     & $(683,9,34.99\%)$    & 0.99  & 0.97  & 0.75  \\
                             & cardio      & $(1831,21,9.61\%)$   & 0.60  & 0.62  & 0.56  \\
                             & tocography    & $(2114,21,22.04\%)$  & 0.86  & 0.60  & 0.78  \\
                             & Hepatitis   & $(80,19,16.25\%)$    & 0.57  & 0.76  & 0.65   \\
                             & Lymph       & $(148,18,4.05\%)$    & 0.48  & 0.83  & 0.65  \\
                             & shuttle     &$(49097,9,7.15\%)$    & 0.97  & 0.61  & 0.87  \\
                             & pendigits   &$(6870,16,2.27\%)$    & 0.72  & 0.51  & 0.85  \\
                             & satellite   &$(6435,36,31.64\%)$   & 0.52 & 0.55  & 0.69  \\
                             & WBC         &$(223,9,4.48\%)$      & 0.77 & 0.89  & 0.87  \\
                             & smtp        &$(95156,3,0.03\%)$    & 0.50 & 0.50  & 0.64  \\
                             & optdigits   &$(5216,64,2.88\%)$    & 0.67 & 0.53  & 0.73  \\  \hline
                             & mean        &                      & 0.68 & 0.64  & 0.71      \\
& & & \multicolumn{3}{c}{AUC-PR} \\ \cline{2-6}
             & Dataset & $(n,d,\alpha)$  & SlicedWF &  SinkWF  & SinkAda \\ \cline{4-6}
\multirow{14}{*}{Tabular}     & vertebral   & $(240,6,12.50\%)$    & 0.18  & 0.19  & 0.25   \\
                              & yeast       & $(1484,8,34.16\%)$   & 0.42  & 0.36  & 0.45  \\
                              & annthyroid  & $(7200,6,7.42\%)$    & 0.14  & 0.09  & 0.11  \\
                              & breastw     & $(683,9,34.99\%)$    & 0.98  & 0.93  & 0.51  \\
                              & cardio      & $(1831,21,9.61\%)$   & 0.12  & 0.22  & 0.15   \\
                              & tocography    & $(2114,21,22.04\%)$  & 0.59 & 0.39  & 0.54  \\
                              & Hepatitis   & $(80,19,16.25\%)$    & 0.22  & 0.43  & 0.25   \\
                              & Lymph       & $(148,18,4.05\%)$    & 0.04  & 0.20  & 0.16  \\
                              & shuttle     &$(49097,9,7.15\%)$    & 0.90  & 0.10 & 0.26  \\
                              & pendigits   &$(6870,16,2.27\%)$    & 0.08  & 0.02  & 0.11  \\
                              & satellite   &$(6435,36,31.64\%)$   & 0.34 & 0.35 & 0.54  \\
                              & WBC         &$(223,9,4.48\%)$      & 0.22 & 0.47 & 0.29  \\
                              & smtp        &$(95156,3,0.03\%)$    & 0.00 & 0.00  & 0.00  \\
                              & optdigits   &$(5216,64,2.88\%)$    & 0.04 & 0.03  & 0.29  \\   \hline
                              & mean        &                      & 0.31 & 0.27  & 0.28 \\
\bottomrule
\end{tabular}
\end{table}

\begin{table}[!h]
\centering
\caption{Full Results of graph benchmark datasets with projection implementation and adaptive $\alpha$ on penalty implementation.}
\begin{tabular}{clcccc} \bottomrule
& & & \multicolumn{3}{c}{AUC-ROC} \\ \cline{4-6}
            & Dataset      & $(n,d,\alpha)$ & SlicedWF &  SinkWF & SinkAda \\ \cline{2-6}
\multirow{12}{*}{Graph}
            & NR-AR        & $(7823,300,3.94\%)$   & 0.66  & 0.50  & 0.77  \\
            & NR-ALBD      & $(7823,300,3.03\%)$   & 0.58  & 0.53  & 0.78 \\
            & NR-AhR       & $(7823,300,9.82\%)$   & 0.58  & 0.51  & 0.56 \\
            & NR-Aro       & $(7823,300,3.83\%)$   & 0.64  & 0.51  & 0.68 \\
            & NR-ER        & $(7823,300,10.11\%)$  & 0.61  & 0.51  & 0.57 \\
            & NR-ELBD      & $(7823,300,4.46\%)$   & 0.61  & 0.53  & 0.64 \\
            & NR-gamma     & $(7823,300,2.38\%)$   & 0.52  & 0.50  & 0.59 \\
            & SR-ARE       & $(7823,300,12.04\%)$  & 0.57  & 0.50  & 0.56 \\
            & SR-ATAD5     & $(7823,300,3.37\%)$   & 0.55  & 0.50  & 0.55 \\
            & SR-HSE       & $(7823,300,4.76\%)$   & 0.51  & 0.52  & 0.54 \\
            & SR-MMP       & $(7823,300,11.73\%)$  & 0.54  & 0.52  & 0.60 \\
            & SR-p53       & $(7823,300,5.41\%)$   & 0.58  & 0.50  & 0.62 \\ \hline
            & mean         &                       & 0.58  & 0.51  & 0.62  \\
& & & \multicolumn{3}{c}{AUC-PR} \\ \cline{4-6}
             & Dataset & $(n,d,\alpha)$ & SlicedWF &  SinkWF & SinkAda \\ \cline{2-6}
\multirow{12}{*}{Graph}
            & NR-AR        & $(7823,300,3.94\%)$   & 0.06  & 0.04  & 0.17  \\
            & NR-ALBD      & $(7823,300,3.03\%)$   & 0.04  & 0.03  & 0.15 \\
            & NR-AhR       & $(7823,300,9.82\%)$   & 0.12  & 0.10  & 0.11 \\
            & NR-Aro       & $(7823,300,3.83\%)$   & 0.06  & 0.04  & 0.06 \\
            & NR-ER        & $(7823,300,10.11\%)$  & 0.13  & 0.10  & 0.12 \\
            & NR-ELBD      & $(7823,300,4.46\%)$   & 0.06  & 0.05  & 0.08 \\
            & NR-gamma     & $(7823,300,2.38\%)$   & 0.03  & 0.02  & 0.04 \\
            & SR-ARE       & $(7823,300,12.04\%)$  & 0.14  & 0.12  & 0.13 \\
            & SR-ATAD5     & $(7823,300,3.37\%)$   & 0.04  & 0.03  & 0.04 \\
            & SR-HSE       & $(7823,300,4.76\%)$   & 0.05  & 0.05  & 0.05 \\
            & SR-MMP       & $(7823,300,11.73\%)$  & 0.13  & 0.12  & 0.15 \\
            & SR-p53       & $(7823,300,5.41\%)$   & 0.06  & 0.05  & 0.08 \\ \hline
            & mean         &                       & 0.08  & 0.06  & 0.10 \\
\bottomrule
\end{tabular}
\end{table}

\begin{table}[!h]
\centering
\caption{Full results of time series benchmark datasets with projection implementation and adaptive $\alpha$ on penalty implementation.}
\begin{tabular}{clccccc} \bottomrule
& & & \multicolumn{3}{c}{AUC-ROC} \\ \cline{4-6}
            & Dataset & $(n,d,\alpha)$  & SlicedWF &  SinkWF & SinkAda \\ \cline{2-6}
\multirow{9}{*}{Timeseries}  & ECG200                & $(200,320,33.50\%)$   & 0.66 & 0.79  & 0.57  \\
                             & FordA                 & $(4921,320,48.65\%)$  & 0.55 & 0.50  & 0.51 \\
                             & FordB                 & $(4446,320,49.15\%)$  & 0.53 & 0.49  & 0.51 \\
                             & GunPoint              & $(200,320,50.00\%)$   & 0.70 & 0.57  & 0.59  \\
                             & Power                 & $(1096,320,49.91\%)$  & 0.80 & 0.53  & 0.70 \\
                             & Surface1              & $(621,320,43.80\%)$   & 0.37 & 0.54  & 0.68 \\
                             & Surface2              & $(980,320,38.37\%)$   & 0.28 & 0.49  & 0.54 \\
                             & TLECG                 & $(1162,320,50.00\%)$  & 0.24 & 0.66  & 0.74 \\
                             & Wafer                 & $(7164,320,10.64\%)$  & 0.54 & 0.57  & 0.63 \\ \hline
                             & mean                  &                       & 0.52 & 0.57  & 0.61 \\
& & & \multicolumn{3}{c}{AUC-PR} \\ \cline{4-6}
             & Dataset & $(n,d,\alpha)$  & SlicedWF &  SinkWF & SinkAda \\ \cline{2-6}
\multirow{9}{*}{Timeseries}  & ECG200                & $(200,320,33.50\%)$   & 0.45 & 0.59  & 0.37  \\
                             & FordA                 & $(4921,320,48.65\%)$  & 0.50 & 0.48  & 0.49 \\
                             & FordB                 & $(4446,320,49.15\%)$  & 0.51 & 0.49  & 0.50 \\
                             & GunPoint              & $(200,320,50.00\%)$   & 0.77 & 0.60  & 0.59 \\
                             & Power                 & $(1096,320,49.91\%)$  & 0.85 & 0.52  & 0.69 \\
                             & Surface1              & $(621,320,43.80\%)$   & 0.36 & 0.46  & 0.62 \\
                             & Surface2              & $(980,320,38.37\%)$   & 0.35 & 0.37  & 0.41 \\
                             & TLECG                 & $(1162,320,50.00\%)$  & 0.44 & 0.68  & 0.72 \\
                             & Wafer                 & $(7164,320,10.64\%)$  & 0.12 & 0.14  & 0.20 \\ \hline
                             & mean                  &                       & 0.48 & 0.48  & 0.51 \\
\bottomrule
\end{tabular}
\end{table}

\begin{table}[!h]
\centering
\caption{Hyperparameters setting in benchmark datasets. (p) means handle constraint with projection implementation while (r) means regularization implementation. In MDE, we take the quantile of eigenvalues of covariance matrix as the maximum eigenvalue threshold, so here $\delta$ means the quantile level, $\mu$ is the factor that $(\mu\alpha) n$ samples with largest scores for random filtering.
Parameters notation in DIF are consistent with its corresponding paper. COPOD is tuning free. }
\begin{tabular}{ll} \bottomrule
Algorithms     &   Parameter Search Scope \\ \hline
DIF            &   $r\in\{10,50,90\},t\in\{6,10,14\},n\in\{64, 128,256\} $ \\
MDE            &   $\delta\in\{0.5,0.6,0.7,0.8\}, \mu\in\{2,3,4,5\}$   \\
SlicedWF (p)   &   $R=100, \gamma\in\{10^{-1},10^{-3},10^{-5},10^{-7}\}, S=1000$   \\
SlicedWF (r)   &   $R=100, \gamma\in\{10^{-1},10^{-3},10^{-5},10^{-7}\}, S=1000, \beta\in\{30, 50, 90\}$   \\
SinkWF (p)     &  $R=100,\lambda\in\{10^{-1},10^{-2},10^{-3}\},\gamma\in\{10^{-1},10^{-3},10^{-5},10^{-7}\}$   \\
SinkWF (r)     &  $R=100,\lambda\in\{10^{-1},10^{-2},10^{-3}\},\gamma\in\{10^{-1},10^{-3},10^{-5},10^{-7}\}, \beta\in\{30,50,90\}$   \\
SinkMarg       &  $\lambda\in\{10^{-1},10^{-2},10^{-3}\}$   \\
\bottomrule
\end{tabular}
\end{table}

\clearpage
\subsection{Robust Generative Learning}
\label{app:mnist_hyperparameters}

\begin{table}[!h]
\centering
\caption{Result of DDPM training on MNIST with projection implementation. $\bb_0=\mathbf{1}$ for both SinkWF and SlicedWF, other settings are the same as penalty implementation.}
\begin{tabular}{lccccccc} \bottomrule
              & \multicolumn{7}{c}{Weak contamination} \\ \cline{2-8}
              & \multicolumn{3}{c}{FID score} & \multicolumn{2}{c}{AUC-ROC} & \multicolumn{2}{c}{AUC-PR} \\ \cline{2-8}
              & $\alpha=0$ & $\alpha=0.05$ & $\alpha=0.15$ & $\alpha=0.05$ & $\alpha=0.15$ & $\alpha=0.05$ & $\alpha=0.15$ \\
              SlicedWF  & 12.09 & 7.17 & 7.63 & 1.00  & 0.98 & 0.97 & 0.88 \\
              SinkWF    & 13.72 & 22.99  & 23.48  & 0.87  & 0.71  & 0.57  & 0.33 \\
\hline
              & \multicolumn{7}{c}{Strong contamination} \\ \cline{2-8}
              & \multicolumn{3}{c}{FID score} & \multicolumn{2}{c}{AUC-ROC} & \multicolumn{2}{c}{AUC-PR} \\ \cline{2-8}
              & $\alpha=0$ & $\alpha=0.05$ & $\alpha=0.15$ & $\alpha=0.05$ & $\alpha=0.15$ & $\alpha=0.05$ & $\alpha=0.15$ \\
              SlicedWF         & 8.94  & 20.77 & 4.83 & 1.00 & 1.00 & 1.00 & 1.00 \\
              SinkWF           & 15.77  & 16.74  & 4.20 & 1.00  & 1.00 & 1.00 & 1.00 \\
              \bottomrule
\end{tabular}
\end{table}

\begin{table}[!h]
\centering
\caption{Hyperparameters setting for MNIST data example.}
\begin{tabular}{lcccc} \bottomrule
              & & \multicolumn{3}{c}{Weak contamination} \\ \cline{3-5}
              & parameter & $\alpha=0$ & $\alpha=0.05$ & $\alpha=0.15$ \\
              SlicedWF (p)  & $(R,S,\gamma)$ & $(100,3000,10^{-4})$           & $(100,3000,10^{-4})$     & $(100,3000,10^{-4})$   \\
              SlicedWF (r)  & $(R,S,\gamma,\beta)$ & $(100,3000,10^{-2},1)$  & $(100,3000,10^{-2},1)$  & $(100,3000,10^{-1},1)$  \\
              SinkWF (p)  & $(R,\lambda,\gamma)$  & $(100,10^{-3},10^{-5})$   & $(100,10^{-3},10^{-5})$  & $(100,10^{-3},10^{-3})$ \\
              SinkWF (r)  & $(R,\lambda,\gamma,\beta)$ & $(100,10^{-2},10^{-3},100)$  & $(100,10^{-2},10^{-3},100)$  & $(100,10^{-2},10^{-1},100)$ \\
              SinkMarg    & $(\lambda)$  & - & -& - \\
\hline
              & & \multicolumn{3}{c}{Strong contamination} \\ \cline{3-5}
              & parameter & $\alpha=0$ & $\alpha=0.05$ & $\alpha=0.15$ \\
              SlicedWF (p)  & $(R,S,\gamma)$ & $(100,3000,10^{-5})$ & $(100,3000,10^{-5})$ & $(100,3000,10^{-5})$ \\
              SlicedWF (r)  & $(R,S,\gamma,\beta)$ & $(100,3000,10^{-1},1)$  & $(100,3000,10^{-1},1)$  & $(100,3000,10^{-1},1)$\\
              SinkWF (p)    & $(R,\lambda,\gamma)$ & $(100,10^{-2},10^{-3})$   & $(100,10^{-2},10^{-3})$  & $(100,10^{-2},10^{-3})$\\
              SinkWF (r)    & $(R,\lambda,\gamma,\beta)$  & $(100,10^{-1},10^{-3},100)$  & $(100,10^{-1},10^{-3},100)$  &  $(100,10^{-1},10^{-3},100)$  \\
              SinkMarg      & $(\lambda)$  & - & -& - \\
              \bottomrule
\end{tabular}
\end{table}

\bibliographystyle{ims}
\bibliography{ref.bib}

@article{sinkhorn1964relationship,
  author  = {Sinkhorn, Richard},
  title   = {A Relationship Between Arbitrary Positive Matrices and Doubly Stochastic Matrices},
  journal = {The Annals of Mathematical Statistics},
  volume  = {35},
  number  = {2},
  pages   = {876--879},
  year    = {1964},
  publisher = {Institute of Mathematical Statistics}
}

@article{luise2018differential,
  title={Differential properties of sinkhorn approximation for learning with wasserstein distance},
  author={Luise, Giulia and Rudi, Alessandro and Pontil, Massimiliano and Ciliberto, Carlo},
  journal={Advances in Neural Information Processing Systems},
  volume={31},
  year={2018}
}

@article{kingma2014adam,
  title={Adam: A method for stochastic optimization},
  author={Kingma, Diederik P and Ba, Jimmy},
  journal={arXiv preprint arXiv:1412.6980},
  year={2014}
}

@article{cuturi2013sinkhorn,
  title={Sinkhorn distances: Lightspeed computation of optimal transport},
  author={Cuturi, Marco},
  journal={Advances in neural information processing systems},
  volume={26},
  year={2013}
}

@inproceedings{nietert2024robust,
  title={Robust distribution learning with local and global adversarial corruptions},
  author={Nietert, Sloan and Goldfeld, Ziv and Shafiee, Soroosh},
  booktitle={The Thirty Seventh Annual Conference on Learning Theory},
  pages={4007--4008},
  year={2024},
  organization={PMLR}
}

@article{nietert2023robust,
  title={Robust estimation under the Wasserstein distance},
  author={Nietert, Sloan and Cummings, Rachel and Goldfeld, Ziv},
  journal={arXiv preprint arXiv:2302.01237},
  year={2023}
}

@article{xu2023deep,
  title={Deep isolation forest for anomaly detection},
  author={Xu, Hongzuo and Pang, Guansong and Wang, Yijie and Wang, Yongjun},
  journal={IEEE Transactions on Knowledge and Data Engineering},
  volume={35},
  number={12},
  pages={12591--12604},
  year={2023},
  publisher={IEEE}
}

@inproceedings{li2020copod,
  title={Copod: copula-based outlier detection},
  author={Li, Zheng and Zhao, Yue and Botta, Nicola and Ionescu, Cezar and Hu, Xiyang},
  booktitle={2020 IEEE international conference on data mining (ICDM)},
  pages={1118--1123},
  year={2020},
  organization={IEEE}
}

@article{peyre2019computational,
  title={Computational optimal transport: With applications to data science},
  author={Peyr{\'e}, Gabriel and Cuturi, Marco and others},
  journal={Foundations and Trends{\textregistered} in Machine Learning},
  volume={11},
  number={5-6},
  pages={355--607},
  year={2019},
  publisher={Now Publishers, Inc.}
}

@article{adam2022projections,
  title={Projections onto the canonical simplex with additional linear inequalities},
  author={Adam, Luk{\'a}s and M{\'a}cha, V{\'a}clav},
  journal={Optimization Methods and Software},
  volume={37},
  number={2},
  pages={451--479},
  year={2022},
  publisher={Taylor \& Francis}
}

@article{ho2020denoising,
  title={Denoising diffusion probabilistic models},
  author={Ho, Jonathan and Jain, Ajay and Abbeel, Pieter},
  journal={Advances in neural information processing systems},
  volume={33},
  pages={6840--6851},
  year={2020}
}

@article{heusel2017gans,
  title={Gans trained by a two time-scale update rule converge to a local nash equilibrium},
  author={Heusel, Martin and Ramsauer, Hubert and Unterthiner, Thomas and Nessler, Bernhard and Hochreiter, Sepp},
  journal={Advances in neural information processing systems},
  volume={30},
  year={2017}
}

@article{han2022adbench,
  title={Adbench: Anomaly detection benchmark},
  author={Han, Songqiao and Hu, Xiyang and Huang, Hailiang and Jiang, Minqi and Zhao, Yue},
  journal={Advances in neural information processing systems},
  volume={35},
  pages={32142--32159},
  year={2022}
}

@book{ramsundar2019deep,
  title={Deep learning for the life sciences: applying deep learning to genomics, microscopy, drug discovery, and more},
  author={Ramsundar, Bharath and Eastman, Peter and Walters, Pat and Pande, Vijay},
  year={2019},
  publisher={O'Reilly Media}
}

@article{jaeger2018mol2vec,
  title={Mol2vec: unsupervised machine learning approach with chemical intuition},
  author={Jaeger, Sabrina and Fulle, Simone and Turk, Samo},
  journal={Journal of chemical information and modeling},
  volume={58},
  number={1},
  pages={27--35},
  year={2018},
  publisher={ACS Publications}
}

@misc{UCRArchive2018,
        title = {The UCR Time Series Classification Archive},
        author = {Dau, Hoang Anh and Keogh, Eamonn and Kamgar, Kaveh and Yeh, Chin-Chia Michael and Zhu, Yan
                  and Gharghabi, Shaghayegh and Ratanamahatana, Chotirat Ann and Yanping and Hu, Bing
                  and Begum, Nurjahan and Bagnall, Anthony and Mueen, Abdullah and Batista, Gustavo, and Hexagon-ML},
        year = {2018},
        month = {October},
        note = {\url{https://www.cs.ucr.edu/~eamonn/time_series_data_2018/}}
}

@article{dau2019ucr,
  title={The UCR time series archive},
  author={Dau, Hoang Anh and Bagnall, Anthony and Kamgar, Kaveh and Yeh, Chin-Chia Michael and Zhu, Yan and Gharghabi, Shaghayegh and Ratanamahatana, Chotirat Ann and Keogh, Eamonn},
  journal={IEEE/CAA Journal of Automatica Sinica},
  volume={6},
  number={6},
  pages={1293--1305},
  year={2019},
  publisher={IEEE}
}

@inproceedings{yue2022ts2vec,
  title={Ts2vec: Towards universal representation of time series},
  author={Yue, Zhihan and Wang, Yujing and Duan, Juanyong and Yang, Tianmeng and Huang, Congrui and Tong, Yunhai and Xu, Bixiong},
  booktitle={Proceedings of the AAAI conference on artificial intelligence},
  volume={36},
  number={8},
  pages={8980--8987},
  year={2022}
}

@inproceedings{steinhardt2018resilience,
  title={Resilience: A Criterion for Learning in the Presence of Arbitrary Outliers},
  author={Steinhardt, Jacob and Charikar, Moses and Valiant, Gregory},
  booktitle={9th Innovations in Theoretical Computer Science Conference (ITCS 2018)},
  pages={45--1},
  year={2018},
  organization={Schloss Dagstuhl--Leibniz-Zentrum f{\"u}r Informatik}
}

@article{zhu2022generalized,
  title={Generalized resilience and robust statistics},
  author={Zhu, Banghua and Jiao, Jiantao and Steinhardt, Jacob},
  journal={The Annals of Statistics},
  volume={50},
  number={4},
  pages={2256--2283},
  year={2022},
  publisher={Institute of Mathematical Statistics}
}

@article{clark2024finding,
  title={Finding outliers in Gaussian model-based clustering},
  author={Clark, Katharine M and McNicholas, Paul D},
  journal={Journal of Classification},
  volume={41},
  number={2},
  pages={313--337},
  year={2024},
  publisher={Springer}
}

@article{donoho1988automatic,
  title={The ``automatic" robustness of minimum distance functionals},
  author={Donoho, David L and Liu, Richard C},
  journal={The Annals of Statistics},
  pages={552--586},
  year={1988},
  publisher={JSTOR}
}

@article{lei2020convergence,
  title={Convergence and concentration of empirical measures under Wasserstein distance in unbounded functional spaces},
  author={Lei, Jing},
  journal={Bernoulli},
  volume={26},
  number={1},
  pages={767--798},
  year={2020},
  publisher={JSTOR}
}

@article{niles2022minimax,
  title={Minimax estimation of smooth densities in Wasserstein distance},
  author={Niles-Weed, Jonathan and Berthet, Quentin},
  journal={The Annals of Statistics},
  volume={50},
  number={3},
  pages={1519--1540},
  year={2022},
  publisher={JSTOR}
}

@article{huber1964robust,
  title={Robust Estimation of a Location Parameter},
  author={Huber, Peter J},
  journal={Ann. Math. Statist.},
  volume={35},
  number={4},
  pages={73--101},
  year={1964}
}

@article{loh2025theoretical,
  title={A theoretical review of modern robust statistics},
  author={Loh, Po-Ling},
  journal={Annual Review of Statistics and Its Application},
  volume={12},
  number={1},
  pages={477--496},
  year={2025},
  publisher={Annual Reviews}
}

@article{liu2019density,
  title={Density estimation with contamination: minimax rates and theory of adaptation},
  author={Liu, Haoyang and Gao, Chao},
  journal={Electronic Journal of Statistics},
  volume={13},
  number={2},
  year={2019}
}

@article{chao2023statistical,
  title={Statistical estimation under distribution shift: Wasserstein perturbations and minimax theory},
  author={Chao, Patrick and Dobriban, Edgar},
  journal={arXiv preprint arXiv:2308.01853},
  year={2023}
}

@article{kim2012robust,
  title={Robust kernel density estimation},
  author={Kim, JooSeuk and Scott, Clayton D},
  journal={The Journal of Machine Learning Research},
  volume={13},
  number={1},
  pages={2529--2565},
  year={2012},
  publisher={JMLR. org}
}

@inproceedings{humbert2022robust,
  title={Robust kernel density estimation with median-of-means principle},
  author={Humbert, Pierre and Le Bars, Batiste and Minvielle, Ludovic},
  booktitle={International Conference on Machine Learning},
  pages={9444--9465},
  year={2022},
  organization={PMLR}
}

@article{caffarelli2010free,
  title={Free boundaries in optimal transport and Monge-Ampere obstacle problems},
  author={Caffarelli, Luis A and McCann, Robert J},
  journal={Annals of mathematics},
  pages={673--730},
  year={2010},
  publisher={JSTOR}
}

@article{figalli2010optimal,
  title={The optimal partial transport problem},
  author={Figalli, Alessio},
  journal={Archive for rational mechanics and analysis},
  volume={195},
  number={2},
  pages={533--560},
  year={2010},
  publisher={Springer}
}

@article{liero2018optimal,
  title={Optimal entropy-transport problems and a new Hellinger--Kantorovich distance between positive measures},
  author={Liero, Matthias and Mielke, Alexander and Savar{\'e}, Giuseppe},
  journal={Inventiones mathematicae},
  volume={211},
  number={3},
  pages={969--1117},
  year={2018},
  publisher={Springer}
}

@article{chizat2018unbalanced,
  title={Unbalanced optimal transport: Dynamic and Kantorovich formulations},
  author={Chizat, Lenaic and Peyr{\'e}, Gabriel and Schmitzer, Bernhard and Vialard, Fran{\c{c}}ois-Xavier},
  journal={Journal of Functional Analysis},
  volume={274},
  number={11},
  pages={3090--3123},
  year={2018},
  publisher={Elsevier}
}

@article{balaji2020robust,
  title={Robust optimal transport with applications in generative modeling and domain adaptation},
  author={Balaji, Yogesh and Chellappa, Rama and Feizi, Soheil},
  journal={Advances in Neural Information Processing Systems},
  volume={33},
  pages={12934--12944},
  year={2020}
}

@inproceedings{arjovsky2017wasserstein,
  title={Wasserstein generative adversarial networks},
  author={Arjovsky, Martin and Chintala, Soumith and Bottou, L{\'e}on},
  booktitle={International conference on machine learning},
  pages={214--223},
  year={2017},
  organization={Pmlr}
}

@inproceedings{mukherjee2021outlier,
  title={Outlier-robust optimal transport},
  author={Mukherjee, Debarghya and Guha, Aritra and Solomon, Justin M and Sun, Yuekai and Yurochkin, Mikhail},
  booktitle={International Conference on Machine Learning},
  pages={7850--7860},
  year={2021},
  organization={PMLR}
}

@article{ma2025inference,
  title={Inference via robust optimal transportation: theory and methods},
  author={Ma, Yiming and Liu, Hang and La Vecchia, Davide and Lerasle, Matthieu},
  journal={International Statistical Review},
  year={2025},
  publisher={Wiley Online Library}
}

@inproceedings{pham2020unbalanced,
  title={On unbalanced optimal transport: An analysis of sinkhorn algorithm},
  author={Pham, Khiem and Le, Khang and Ho, Nhat and Pham, Tung and Bui, Hung},
  booktitle={International Conference on Machine Learning},
  pages={7673--7682},
  year={2020},
  organization={PMLR}
}

@article{nguyen2023unbalanced,
  title={On unbalanced optimal transport: Gradient methods, sparsity and approximation error},
  author={Nguyen, Quang Minh and Nguyen, Hoang H and Zhou, Yi and Nguyen, Lam M},
  journal={Journal of Machine Learning Research},
  volume={24},
  number={384},
  pages={1--41},
  year={2023}
}

@article{diakonikolas2019robust,
  title={Robust estimators in high-dimensions without the computational intractability},
  author={Diakonikolas, Ilias and Kamath, Gautam and Kane, Daniel and Li, Jerry and Moitra, Ankur and Stewart, Alistair},
  journal={SIAM Journal on Computing},
  volume={48},
  number={2},
  pages={742--864},
  year={2019},
  publisher={SIAM}
}

@book{villani2009optimal,
  title={Optimal transport: old and new},
  author={Villani, C{\'e}dric and others},
  volume={338},
  year={2009},
  publisher={Springer}
}

@article{flamary2021pot,
  author  = {R{\'e}mi Flamary and Nicolas Courty and Alexandre Gramfort and Mokhtar Z. Alaya and Aur{\'e}lie Boisbunon and Stanislas Chambon and Laetitia Chapel and Adrien Corenflos and Kilian Fatras and Nemo Fournier and L{\'e}o Gautheron and Nathalie T.H. Gayraud and Hicham Janati and Alain Rakotomamonjy and Ievgen Redko and Antoine Rolet and Antony Schutz and Vivien Seguy and Danica J. Sutherland and Romain Tavenard and Alexander Tong and Titouan Vayer},
  title   = {POT: Python Optimal Transport},
  journal = {Journal of Machine Learning Research},
  year    = {2021},
  volume  = {22},
  number  = {78},
  pages   = {1-8}
}

@inproceedings{feydy2019interpolating,
    title={Interpolating between Optimal Transport and MMD using Sinkhorn Divergences},
    author={Feydy, Jean and S{\'e}journ{\'e}, Thibault and Vialard, Fran{\c{c}}ois-Xavier and Amari, Shun-ichi and Trouve, Alain and Peyr{\'e}, Gabriel},
    booktitle={The 22nd International Conference on Artificial Intelligence and Statistics},
    pages={2681--2690},
    year={2019}
}

\end{document}